\theoremstyle{plain}
\newtheorem{theorem}{Theorem}[section]
\newtheorem{lemma}{Lemma}[section]
\theoremstyle{definition}
\newtheorem{definition}{Definition}[section]
\newtheorem{assumption}{Assumption}
\newtheorem{example}{Example}[section]
\theoremstyle{remark}
\newtheorem{remark}{Remark}[section]
\newcommand{\kp}{\mathsf P}
\newcommand{\cp}{\mathcal{P}}
\newcommand{\mcs}{\mathcal{S}}
\newcommand{\mca}{\mathcal{A}}
\newcommand{\nn}{\nonumber}
\newcommand{\mE}{\mathbb{E}}
\newcommand{\spa}{\text{Span}}
\icmltitlerunning{Model-Free Robust Average-Reward Reinforcement Learning}
\begin{document}

\twocolumn[
\icmltitle{Model-Free Robust Average-Reward Reinforcement Learning}



\icmlsetsymbol{equal}{*}

\begin{icmlauthorlist}
\icmlauthor{Yue Wang}{a1}
\icmlauthor{Alvaro Velasquez}{a2}
\icmlauthor{George Atia}{a3}
\icmlauthor{Ashley Prater-Bennette}{a4}
\icmlauthor{Shaofeng Zou}{a1}

\end{icmlauthorlist}

\icmlaffiliation{a1}{University at Buffalo}
\icmlaffiliation{a2}{University of Colorado Boulder}
\icmlaffiliation{a3}{University of Central Florida}
\icmlaffiliation{a4}{Air Force Research Laboratory}
\icmlcorrespondingauthor{Yue Wang}{ywang294@buffalo.edu}
\icmlcorrespondingauthor{Shaofeng Zou}{szou3@buffalo.edu}

\icmlkeywords{Robust Bellman equation, }

\vskip 0.3in
]



\printAffiliationsAndNotice{}  
  
\begin{abstract}
Robust Markov decision processes (MDPs) address the challenge of model uncertainty  by optimizing the worst-case performance over an uncertainty set of MDPs. In this paper, we focus on the robust average-reward MDPs under the model-free setting. We first theoretically characterize the structure of solutions to the robust average-reward Bellman equation, which is essential for our later convergence analysis. We then design two model-free algorithms, robust relative value iteration (RVI) TD and robust RVI Q-learning, and theoretically prove their convergence to the optimal solution. We provide several widely used uncertainty sets as examples, including those defined by the contamination model, total variation, Chi-squared divergence, Kullback-Leibler (KL) divergence and Wasserstein distance.
\end{abstract}
\section{Introduction}
Reinforcement learning (RL) has demonstrated remarkable success in applications like robotics, finance, and games. However,  RL often suffers from a severe performance degradation when deployed in real-world environments, which is often due to the model mismatch between the training and testing environments. Such a model mismatch could be a result of non-stationary conditions, modeling errors between simulated and real systems, external perturbations and adversarial attacks. 
To address this challenge, a framework of robust MDPs and robust RL was developed in \cite{nilim2004robustness,iyengar2005robust,bagnell2001solving}, where an uncertainty set of MDPs is constructed, and a pessimistic approach is adopted to optimize the worst-case performance over the uncertainty set. Such a minimax approach guarantees performance for all MDPs within the uncertainty set, making it robust to model mismatch. 

Two performance criteria are commonly used for infinite-horizon MDPs: 1) the discounted-reward setting, where the reward is discounted exponentially with time; and 2) the average-reward setting, where the long-term average-reward over time is of interest. For systems that operate for an extended period of time, e.g., queue control, inventory management in supply chains, or communication networks, it is more important to optimize the average-reward since policies obtained from the discounted-reward setting may be myopic and have poor long-term performance \cite{kazemi2022translating}. 
However, much of the work on robust MDPs has focused on the discounted-reward setting, and robust average-reward MDPs remain largely unexplored. 

Robust MDPs under the two criteria are fundamentally different. 
%
%
%
%
Compared to the discounted-reward setting, the average-reward setting places equal weight on immediate and future rewards, thus depends on the limiting state-action frequencies of the underlying MDPs. This makes it more challenging to investigate than the discounted-reward setting. For example, to establish a contraction in the discounted-reward setting, it generally suffices to have a discount factor strictly less than one, whereas in the average-reward setting, convergence depends on the structure of the MDP. Studies on robust average-reward MDPs are quite limited in the literature, e.g.,  \cite{tewari2007bounded,lim2013reinforcement,wang2023robust}, and are model-based, where the uncertainty set is fully known to the learner. However, the more practical model-free setting, where only samples from the nominal MDP (the centroid of the uncertainty set) can be observed, has yet to be explored. Algorithms and fundamental 
convergence guarantees in this setting have not been established yet, which is the focus of this paper.


\subsection{Challenges and Contributions}
In this paper, we develop a fundamental characterization of solutions to the robust average-reward Bellman equation, design model-free algorithms with provable optimality guarantees, and construct an \textit{unbiased} estimator of the robust average-reward Bellman operator for various types of uncertainty sets. Our results fill the gap in model-free algorithms for robust average-reward RL, and are fundamentally different from existing studies on robust discounted RL and robust and non-robust average-reward MDPs.
In the following, we summarize the major challenges and our key contributions.

\textbf{We characterize the fundamental structure of solutions to the robust average-reward Bellman equation.} 
Value-based approaches typically converge to a solution to the (robust) Bellman equation. For example, TD and Q-learning converge to the \textit{unique} fixed-point that yields an optimal solution to the Bellman equation in the discounted setting \cite{sutton2018reinforcement}. However, in the (robust) average-reward setting, solutions to the (robust) average-reward Bellman equation may not be unique \cite{puterman1994markov,wang2023robust}, and the fundamental structure of the solution space usually plays an important role in proving convergence (e.g., \cite{wan2021learning,abounadi2001learning,tsitsiklis1999average}). In the non-robust average-reward setting, the solution to the average-reward Bellman equation is unique up to an additive constant \cite{puterman1994markov}. This result is largely due to the nice linear structure of the non-robust average-reward Bellman equation. However, in the robust setting, where the worst-case performance is of interest, the robust average-reward Bellman equation is non-linear, which poses a significant challenge in characterizing its solution. We demonstrate that the worst-case transition kernel may not be unique and then show that any solution to the robust average-reward Bellman equation is the relative value function for some worst-case transition kernel (up to an additive constant). We note that this structure is of key importance later in our convergence analysis.


 \textbf{We develop model-free approaches for robust policy evaluation and optimal control.} We then design model-free algorithms for robust average-reward RL. To ensure stability of the iterates, we adopt a similar idea as in the non-robust average-reward setting \cite{puterman1994markov,bertsekas2011dynamic}, which is to subtract an offset function and design robust RVI TD and Q-learning algorithms. 
 Unlike the convergence proof for model-free algorithms for non-robust average-reward MDPs, e.g., \cite{wan2021learning,abounadi2001learning} and robust discounted MDPs, e.g., \cite{wang2021online,liu2022distributionally}, where the Bellman operator is either linear or a contraction, our robust average-reward Bellman operator is neither linear nor a contraction, which makes the convergence analysis challenging. Nevertheless, based on the solution structure of the robust average-reward Bellman equation discussed above, we prove that our algorithms converge to solutions to the robust average-reward Bellman equations. Specifically, for robust RVI TD, its output converges to the worst-case average-reward; and for the robust RVI Q-learning, the greedy policy w.r.t. the Q-function converges to an optimal robust policy.  
 

\textbf{We construct unbiased estimators  with bounded variance  for robust Bellman operators under various uncertainty set models.} One popular approach in RL is  bootstrapping, i.e., use $\mathbf H Q$ as an estimate of the true $Q$-function, where $\mathbf H$ is the (robust) Bellman operator. In the non-robust setting, an unbiased estimate of $\mathbf H$ can be easily derived because $\mathbf H$ is linear in the transition kernel. This linearity, however, does not hold in the robust setting since we minimize over the transition kernel to account for the worst-case and, therefore, directly plugging in the empirical  transition kernel results in a biased estimate. 
In this paper, we employ the multi-level Monte-Carlo method \cite{blanchet2015unbiased} and construct unbiased estimators for five popular uncertainty models, including those defined by contamination models, total variation, Chi-squared divergence, Kullback-Leibler (KL) divergence, and Wasserstein distance. 
We then prove that our estimator is unbiased and has a bounded variance. These properties play an important role in establishing the convergence of our algorithms.



\subsection{Related Work}

\noindent
\textbf{Robust average-reward MDPs.} Studies on robust average-reward MDPs are quite limited in the literature. Model-based robust average-reward MDPs were first studied in \cite{tewari2007bounded}, where the authors showed the existence of the Blackwell optimal policy for a specific uncertainty set of bounded parameters. Results for general uncertainty sets were developed in recent work \cite{wang2023robust}, following a fundamentally different approach from \cite{tewari2007bounded}. However, these two methods are model-based. The work of \cite{lim2013reinforcement} designed a model-free algorithm for the uncertainty set defined by total variation and characterized its regret bound. However, their method cannot be generalized to other uncertainty sets. In this paper, we design the first model-free method for general uncertainty sets and provide fundamental insights into the model-free robust average-reward RL problem.

\noindent
\textbf{Robust discounted-reward MDPs.}
Model-based methods for robust discounted MDPs were studied in \cite{iyengar2005robust,nilim2004robustness,bagnell2001solving,satia1973markovian,wiesemann2013robust,lim2019kernel,xu2010distributionally,yu2015distributionally,lim2013reinforcement,tamar2014scaling,neufeld2022robust}, where the uncertainty set is assumed to be known, and the problem can be solved using robust dynamic programming. Later, the studies were generalized to the model-free setting \cite{roy2017reinforcement,badrinath2021robust,wang2021online,wang2022policy,tessler2019action,liu2022distributionally,zhou2021finite,yang2021towards,panaganti2021sample,goyal2018robust,kaufman2013robust,ho2018fast,ho2021partial}. Our work focuses on the average-reward setting. First, the robust average-reward Bellman operator does not come with a simple contraction property like the one for the discounted setting, and the average-reward depends on the limiting behavior of the underlying MDP. Moreover, the average-reward Bellman function is a function of two variables: the average-reward and the relative value function, whereas its discounted-reward counterpart is a function of only the value function. These challenges make the robust average-reward problem more intricate. Furthermore, existing studies mostly focus on a certain type of uncertainty set, e.g., contamination \cite{wang2021online} and total variation \cite{panaganti2022robust}; in this paper, our method can be used to solve a wide range of uncertainty sets.

\noindent
\textbf{Non-robust average-reward MDPs.} Early contributions to non-robust average-reward MDPs include a fundamental characterization of the problem and model-based methods \cite{puterman1994markov,bertsekas2011dynamic}. Model-free methods in the tabular setting, e.g.,   RVI Q-learning \cite{abounadi2001learning} and differential Q-learning \cite{wan2021learning,wan2022convergence}, were developed recently and are both shown to converge to the optimal average-reward. There is also work on average-reward RL with function approximation, e.g., \cite{zhang2021finite,tsitsiklis1999average,zhang2021average,yu2009convergence}. In this paper, we focus on the robust setting, where the key challenge lies in the non-linearity of the robust average-reward Bellman equation, whereas it is linear in the non-robust setting.

\section{Preliminaries and Problem Formulation}


\vspace{0.2cm}
\noindent\textbf{Average-reward MDPs.}
An MDP  $(\mathcal{S},\mathcal{A}, \mathsf P, r)$ is specified by: a state space $\mcs$, an action space $\mca$, a transition kernel $\mathsf P=\left\{\kp^a_s \in \Delta(\mcs), a\in\mca, s\in\mcs\right\}$\footnote{$\Delta(\mcs)$ denotes the $(|\mcs|-1)$-dimensional probability simplex on $\mcs$. }, where $\kp^a_s$ is the distribution of the next state over $\mcs$ upon taking action $a$ in state $s$, and a reward function $r: \mcs\times\mca \to [0,1]$. At each time step $t$, the agent at state $s_t$ takes an action $a_t$, the  environment then transitions to the next state $s_{t+1}\sim \kp^{a_t}_{s_t}$, and provides a reward signal $r_t\in [0,1]$. 

A stationary policy $\pi: \mcs\to \Delta(\mca)$ maps a state to a distribution over $\mca$, following which the agent takes action $a$ at state $s$ with probability $\pi(a|s)$. 
Under a transition kernel $\kp$, the average-reward of $\pi$ starting from $s\in\mcs$ is defined as
\begin{align}
    g_\kp^\pi(s)\triangleq \lim_{T\to\infty} \mE_{\pi,\kp}\bigg[\frac{1}{T}\sum^{T-1}_{n=0} r_t|S_0=s \bigg].
\end{align}
The relative value function is defined to measure the cumulative difference between the reward and  $g^\pi_\kp$:
\begin{align}\label{eq:relativevaluefunction}
    V^\pi_\kp(s)\triangleq \mE_{\pi,\kp}\bigg[\sum^\infty_{t=0} (r_t-g^\pi_\kp)|S_0=s \bigg].
\end{align}
Then $(g^\pi_\kp, V^\pi_\kp)$  satisfies the following Bellman equation \cite{puterman1994markov}:
\begin{align}
    V^\pi_\kp(s)=\mE_{\pi,\kp}\bigg[r(s,A)-g^\pi_\kp(s)+\sum_{s'\in\mcs} p^A_{s,s'}V^\pi_\kp (s') \bigg]. 
\end{align}

\vspace{0.2cm}
\noindent\textbf{Robust average-reward MDPs.}
For robust MDPs, the transition kernel is assumed to be in some uncertainty set $\mathcal{P}$. At each time step, the environment transits to the next state according to an arbitrary transition kernel $\kp\in\cp$. In this paper, we focus on the $(s,a)$-rectangular uncertainty set \cite{nilim2004robustness,iyengar2005robust}, i.e., $\mathcal{P}=\bigotimes_{s,a} \mathcal{P}^a_s$, where $\mathcal{P}^a_s \subseteq \Delta(\mcs)$. 
Popular uncertainty sets include those defined by the contamination model \cite{hub65,wang2022policy},  total variation \cite{lim2013reinforcement}, Chi-squared divergence \cite{iyengar2005robust}, Kullback-Leibler (KL) divergence 
\cite{hu2013kullback} and Wasserstein distance \cite{gao2022distributionally}. We will investigate these uncertainty sets in detail in Section \ref{sec:case}.

We investigate the worst-case average-reward over the uncertainty set of MDPs. Specifically, define the  robust average-reward of a policy $\pi$ as 
\begin{align}\label{eq:Vdef}
    g^\pi_\cp(s)\triangleq \min_{\kappa\in\bigotimes_{n\geq 0} \mathcal{P}} \lim_{T\to\infty}\mathbb{E}_{\pi,\kappa}\left[\frac{1}{T}\sum^{T-1}_{t=0}r_t|S_0=s\right],
\end{align}
where $\kappa=(\mathsf P_0,\mathsf P_1...)\in\bigotimes_{n\geq 0} \mathcal{P}$. It was shown in \cite{wang2023robust} that the worst case under the time-varying model is equivalent to the one under the stationary model:
\begin{align}\label{eq:5}
    g^\pi_\cp(s)= \min_{\kp\in\mathcal{P}} \lim_{T\to\infty}\mathbb{E}_{\pi,\kp}\left[\frac{1}{T}\sum^{T-1}_{t=0}r_t|S_0=s\right].
\end{align}
Therefore, we limit our focus to the stationary model. We refer to the minimizers of \eqref{eq:5} as the worst-case transition kernels for the policy $\pi$, and denote the set of all possible worst-case transition kernels by $\Omega^\pi_g$, i.e., $\Omega^\pi_g \triangleq \{\kp\in\cp: g^\pi_\kp=g^\pi_\cp \}$. As shown in \Cref{ex} in the appendix, the worst-case transition kernel may not be unique.

We focus on the model-free setting, where only samples from the nominal MDP (centroid of the uncertainty set) are available. We investigate two problems: 1) given a policy $\pi$, estimate its robust average-reward $g^\pi_\cp$, and 2) find an optimal robust policy that optimizes the robust average-reward:
\begin{align}
    \max_{\pi} g^\pi_\cp(s), \text{ for any }s\in\mcs.
\end{align}
We denote
by $g^*_{\cp}(s)\triangleq \max_{\pi} g^\pi_{\cp}(s)$ the optimal robust average-reward.

\section{Robust RVI TD for Policy Evaluation}\label{sec:td}
In this section, we study the problem of policy evaluation, which aims to estimate the robust average-reward $g^\pi_\cp$ for a fixed policy $\pi$. 


For technical convenience, we make the following assumption to guarantee that the average-reward is independent of the initial state \cite{abounadi2001learning,wan2021learning,zhang2021average,zhang2021policy,chen2022learning,wang2023robust}.
\begin{assumption}\label{ass:sameg}
    The Markov chain induced by $\pi$ is a unichain for all $\kp\in\cp$. 
\end{assumption}
In general, the average-reward depends on the initial state. For example, imagine a policy that induces a multichain in an MDP with two closed communicating classes. A learning algorithm would be able to learn the average-reward for each communicating class; however, the average-rewards for the two classes may be different. To remove this complexity, it is common and convenient to rule out this possibility.
Under Assumption \ref{ass:sameg}, the average-reward w.r.t. any $\kp\in\cp$ is identical for any start state, i.e., $g^\pi_\kp(s)=g^\pi_\kp(s'), \forall s,s'\in\mcs$.


We first revisit the robust average-reward Bellman equation in \cite{wang2023robust}, and further characterize the structure of its solutions. For $V\in\mathbb R^{|\mcs|}$, denote by  $\sigma_{\cp^a_s}(V)\triangleq\min_{p\in\cp^a_s} p V$, $\kp_V(s,a)\triangleq \arg\min_{p\in\cp^a_s} p V$ and let $\kp_V=\{\kp_V(s,a),s\in\mcs,a\in\mca\}$. 
\begin{theorem}[Robust Bellman equation]\label{thm:robust Bellman} 
If $(g,V)$ is a solution to the robust Bellman equation
\begin{align}\label{eq:bellman}
    V(s)=\sum_a \pi(a|s) (r(s,a)-g+\sigma_{\cp^a_s}(V)), \forall s,
\end{align} 
then
1) $g=g^\pi_\cp$ \cite{wang2023robust};
2) $\kp_V\in\Omega^\pi_g$; 
3) $V=V^\pi_{\kp_V}+ce$ for some $c\in\mathbb R$, where $e$ denotes the vector $(1,1,...,1)\in\mathbb{R}^{|\mcs|}$.
\end{theorem}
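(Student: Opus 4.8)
The plan is to collapse the nonlinear robust Bellman equation into an ordinary (linear) average-reward Bellman equation for the single kernel $\kp_V$, and then invoke classical unichain uniqueness. The key observation is that, by the definition $\kp_V(s,a)=\arg\min_{p\in\cp^a_s}pV$, the support function equals a plain linear evaluation at its own minimizer: $\sigma_{\cp^a_s}(V)=\kp_V(s,a)V$. Substituting this identity into \eqref{eq:bellman} yields
\begin{align}
    V(s)=\sum_a\pi(a|s)\big(r(s,a)-g+\kp_V(s,a)V\big),\quad\forall s,
\end{align}
which is precisely the non-robust average-reward Bellman equation associated with the fixed transition kernel $\kp_V$. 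This is the heart of the argument: evaluating $\sigma_{\cp^a_s}$ at the minimizing distribution removes the minimization and hence the nonlinearity, so any solution $(g,V)$ of the robust equation is simultaneously a solution of a standard average-reward Bellman equation.

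Before applying the non-robust theory I would confirm that $\kp_V$ is an admissible kernel satisfying \Cref{ass:sameg}. The minimizer $\kp_V(s,a)$ is attained because $p\mapsto pV$ is linear, hence continuous, on the (compact) set $\cp^a_s$. Because the uncertainty set is $(s,a)$-rectangular, $\cp=\bigotimes_{s,a}\cp^a_s$, selecting the minimizer independently for each pair $(s,a)$ produces a valid kernel $\kp_V\in\cp$; this is exactly where rectangularity is used. Consequently, by \Cref{ass:sameg}, the Markov chain induced by $\pi$ under $\kp_V$ is a unichain.

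Now I would invoke the classical characterization for a fixed unichain kernel \cite{puterman1994markov}: the average-reward Bellman equation determines the scalar average-reward uniquely and determines the relative value function up to an additive multiple of $e$. Applying this to $\kp_V$ and the solution $(g,V)$ gives $g=g^\pi_{\kp_V}$ together with $V=V^\pi_{\kp_V}+ce$ for some $c\in\mathbb R$, which is exactly statement~3). For statement~2), combining part~1) ($g=g^\pi_\cp$) with the identity $g=g^\pi_{\kp_V}$ just derived yields $g^\pi_{\kp_V}=g^\pi_\cp$, so $\kp_V\in\Omega^\pi_g$ by the definition of $\Omega^\pi_g$.

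I expect the main difficulty to be conceptual rather than computational: the entire reduction hinges on recognizing that the support function, evaluated at its own argmin, linearizes \eqref{eq:bellman}. Once this is seen, the remainder is bookkeeping plus the standard unichain uniqueness theorem. The one technical point deserving care is the appeal to uniqueness ``up to an additive constant'' for the relative value function, which genuinely relies on the unichain property furnished by \Cref{ass:sameg}; without it the average-reward could vary with the initial state and the clean characterization in statements~2) and 3) would break down.
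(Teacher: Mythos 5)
Your proposal is correct and follows essentially the same route as the paper: the central step in both is to observe that $\sigma_{\cp^a_s}(V)=\kp_V(s,a)V$, so that $(g,V)$ solves the non-robust average-reward Bellman equation for the fixed kernel $\kp_V$, after which Theorem~8.2.6 of Puterman (unichain uniqueness up to an additive multiple of $e$) yields $g=g^\pi_{\kp_V}$ and $V=V^\pi_{\kp_V}+ce$, and combining with part~1) gives $\kp_V\in\Omega^\pi_g$. The only difference is that you take part~1) as given from the cited prior work, whereas the paper re-derives it via an iteration argument showing $g\leq g^\pi_{\kp}$ for every $\kp\in\cp$; this does not affect the validity of your argument for parts~2) and~3).
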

The robust Bellman equation in \eqref{eq:bellman} was initially developed in \cite{wang2023robust}. The authors also defined a robust relative value function: 
\begin{align}\label{eq:rrvi}
    V_{\mathcal P}^\pi(s)\triangleq\min_{\kp\in\mathcal P}\mE_{\pi,\kp}\left[\sum^\infty_{n=0} (r_n-g^\pi_\cp)|S_0=s \right],
\end{align}
which is the worst-case relative value function, and showed that $(g^\pi_\cp,V_{\mathcal P}^\pi)$ is a solution to \eqref{eq:bellman}. However, conversely, it may not be the case that any solution $(g,V)$ to \eqref{eq:bellman} can be written as $V=V_{\mathcal P}^\pi+ce$ for some $c\in\mathbb R$. This is in contrast to the results for the non-robust average-reward setting, where the set of solutions to the non-robust average-reward Bellman equation can be written as $\{(g^\pi_\kp, V^\pi_\kp+ce):c\in\mathbb{R}\}$ \cite{puterman1994markov}.


In Theorem \ref{thm:robust Bellman}, we show that for any solution $(g,V)$ to \eqref{eq:bellman}, the transition kernel $P_V\in\Omega^\pi_g$, i.e., it is a worst-case transition kernel for $g^\pi_\cp$. Moreover, any solution to \eqref{eq:bellman} can be written as $V=V^\pi_{\kp_V}+ce$ for some constant $c$. This is different from the non-robust setting, as $V^\pi_{\kp_V}$ actually depends on $V$. As will be seen later, this result is crucial to establish the convergence of our robust RVI TD. 


\Cref{thm:robust Bellman} also reveals the fundamental difference between the robust and the non-robust average-reward settings. 
Under the non-robust setting, the solution set to the Bellman equation can be written as $\{(g^\pi_\kp, V^\pi_\kp+ce):c\in\mathbb{R}\}$. The solution is uniquely determined by the transition kernel (up to some constant vector $ke$). In contrast, in the robust setting, the robust Bellman equation is no longer linear. Any solution $V$ to \eqref{eq:bellman} is a relative value function w.r.t. \textit{some} worst-case transition kernel $\kp\in\Omega^\pi_g$ (up to some additive constant vector), i.e., $V\in\{ V^\pi_\kp+ce : \kp\in\Omega^\pi_g,c\in\mathbb{R}\}$. 

A natural question that arises is whether, for any $\kp\in\Omega^\pi_g$,  $(g^\pi_\cp,V^\pi_\kp)$ is a solution to \eqref{eq:bellman}?
\Cref{lemma:1}  refutes this.
\begin{lemma}\label{lemma:1}
    There exists a robust MDP such that for some $\kp\in\Omega^\pi_g$,  $(g^\pi_\cp,V^\pi_\kp)$ is not a solution to \eqref{eq:bellman}.
\end{lemma}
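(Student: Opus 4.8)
\emph{Proof plan.} The first move is to reduce the statement to a pointwise condition on the kernel. Fix a worst-case kernel $\kp\in\Omega^\pi_g$, so that $g^\pi_\kp=g^\pi_\cp=:g$ by definition of $\Omega^\pi_g$. Since $V^\pi_\kp$ is the relative value function of the fixed-kernel chain induced by $\kp$, it satisfies the ordinary average-reward Bellman equation $V^\pi_\kp(s)=\sum_a\pi(a|s)\big(r(s,a)-g+\kp^a_s V^\pi_\kp\big)$. Subtracting this from \eqref{eq:bellman} evaluated at $(g,V^\pi_\kp)$, the pair $(g^\pi_\cp,V^\pi_\kp)$ solves \eqref{eq:bellman} if and only if $\sum_a\pi(a|s)\,\sigma_{\cp^a_s}(V^\pi_\kp)=\sum_a\pi(a|s)\,\kp^a_s V^\pi_\kp$ for every $s$. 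Because $\kp^a_s\in\cp^a_s$ we always have $\sigma_{\cp^a_s}(V^\pi_\kp)=\min_{p\in\cp^a_s}pV^\pi_\kp\le \kp^a_s V^\pi_\kp$ termwise, so equality of the $\pi$-weighted sums forces equality in every term with $\pi(a|s)>0$. Hence $(g^\pi_\cp,V^\pi_\kp)$ solves \eqref{eq:bellman} \emph{iff} $\kp^a_s\in\arg\min_{p\in\cp^a_s}pV^\pi_\kp$ for all $(s,a)$ with $\pi(a|s)>0$; that is, $\kp$ must \emph{pointwise} minimize its own relative value function. It therefore suffices to exhibit a worst-case kernel that fails this condition.

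First I note why non-uniqueness of the worst-case kernel is essential. If $\Omega^\pi_g=\{\kp\}$ were a singleton, then \Cref{thm:robust Bellman} would force every solution of \eqref{eq:bellman} to equal $V^\pi_\kp+ce$, and since \eqref{eq:bellman} is invariant under adding constant vectors (note $\sigma_{\cp^a_s}(V+ce)=\sigma_{\cp^a_s}(V)+c$), $V^\pi_\kp$ would itself solve \eqref{eq:bellman}. So the counterexample must live inside a robust MDP with a non-unique worst-case kernel, in the spirit of \Cref{ex}.

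The construction I would use is a two-state, single-action chain $\mcs=\{1,2\}$ with $r(1)=0$, $r(2)=1$. Let state $1$ be deterministically absorbing ($\cp_1=\{\delta_1\}$), and at state $2$ let the next-state distribution be $(1-q,q)$ over $(1,2)$ with $q$ ranging over $\cp_2=\{q:\,q_{\min}\le q\le q_{\max}\}$, where $0\le q_{\min}<q_{\max}<1$. Keeping $q_{\max}<1$ guarantees \Cref{ass:sameg}: state $2$ is transient under every admissible kernel, so each induced chain is a unichain with recurrent class $\{1\}$. Because every trajectory is absorbed into state $1$, every admissible kernel yields average reward $0$, whence $g^\pi_\cp=0$ and $\Omega^\pi_g=\cp$; in particular the worst-case kernel is highly non-unique. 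For the kernel $\kp^*$ using $q=q_{\max}$, a direct computation gives $V^\pi_{\kp^*}=\big(0,\tfrac{1}{1-q_{\max}}\big)$.

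It remains to check the pointwise condition at state $2$. Minimizing $p\mapsto pV^\pi_{\kp^*}$ over $\cp_2$ amounts to minimizing $q'\cdot\tfrac{1}{1-q_{\max}}$ over $q'\in[q_{\min},q_{\max}]$, whose minimizer is $q'=q_{\min}\ne q_{\max}$; equivalently, the right-hand side of \eqref{eq:bellman} at state $2$ equals $1+q_{\min}/(1-q_{\max})$, which differs from $V^\pi_{\kp^*}(2)=1/(1-q_{\max})$ precisely because $q_{\min}<q_{\max}$. Thus $\kp^*\in\Omega^\pi_g$ yet $(g^\pi_\cp,V^\pi_{\kp^*})$ violates \eqref{eq:bellman}, which proves the lemma. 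The only delicate points are bookkeeping ones — arranging several worst-case kernels while preserving the unichain assumption, and remembering that (unlike the non-robust case) $V^\pi_\kp$ itself moves with $\kp$ — rather than any genuinely hard step.
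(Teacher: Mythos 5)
Your proof is correct and takes essentially the same approach as the paper: both arguments exhibit an explicit robust MDP with a non-unique worst-case kernel and verify that the relative value function of one such kernel fails to be minimized by that kernel itself, hence does not satisfy \eqref{eq:bellman}. The paper uses a three-state deterministic cycle (\Cref{ex}) where the choice at state $1$ distinguishes the two worst-case kernels, while you use a two-state absorbing chain with a continuum of worst-case kernels; your preliminary reduction to the pointwise condition $\kp^a_s\in\arg\min_{p\in\cp^a_s}pV^\pi_\kp$ is a clean way of organizing the same verification the paper carries out directly.
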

\Cref{lemma:1} implies that the solution set to \eqref{eq:bellman} is a subset of $\{ V^\pi_\kp+ce, \kp\in\Omega^\pi_g,c\in\mathbb{R}\}$. 
Note that explicit characterization of the solution set to \eqref{eq:bellman} is challenging due to its non-linear structure; however, result 3 in \Cref{thm:robust Bellman} suffices for the convergence proof (see appendix for proofs). 

Motivated by the robust Bellman equation in \eqref{eq:bellman}, we propose a model-free robust RVI TD algorithm in Algorithm \ref{alg:mfsTD}, where $\hat{\mathbf{T}}$ and function $f$ will be discussed later.
\begin{algorithm}[htb]
\caption{Robust RVI TD}
\label{alg:mfsTD}
\textbf{Input}: $V_0, \alpha_n, n=0,1,...,N-1$
\begin{algorithmic}[1] 
\FOR{$n=0,1,...,N-1$}
\FOR{all $s\in\mcs$}
\STATE {$V_{n+1}(s)\leftarrow V_n(s)+\alpha_n (\hat{\mathbf T}V_n(s)-f(V_n)-V_n(s))$}
\ENDFOR
\ENDFOR
\end{algorithmic}
\end{algorithm}

Note that \eqref{eq:bellman} can be written as $V=\mathbf T V-g$, where $\mathbf T$ is the robust average-reward Bellman operator. Since in the model-free setting $\mathcal P$ is unknown, in \Cref{alg:mfsTD}, we construct $\hat{\mathbf T} V$ as an estimate of $\mathbf TV$ satisfying 
\begin{align}\label{eq:t1}
\mE[\hat{\mathbf T}V]=\mathbf T V,\quad\text{Var}[\hat{\mathbf T}V(s)]\leq C(1+\|V\|^2),
\end{align} 
for some constant $C>0$. 
In this paper, if not specified, $\|\cdot\|$ denotes the infinity norm $\|\cdot\|_\infty$. 

It is challenging to construct such $\hat{\mathbf T}$ as $\mathbf T$ is non-linear in the nominal transition kernel from which samples are generated.
In \Cref{sec:case}, we will present in detail how to construct such $\hat{\mathbf T}$ for various uncertainty set models.

To make the iterates stable, we follow the idea of RVI in the non-robust setting and introduce an offset function $f$ satisfying the following assumption \cite{puterman1994markov}. 
\begin{assumption}\label{ass:f}
    $f: \mathbb{R}^{|\mcs|}\to\mathbb{R}$ is $L_f$-Lipschitz and satisfies  $f(e)=1, f(x+ce)=f(x)+c, f(cx)=cf(x)$, $\forall c\in\mathbb{R}$. 
\end{assumption}
Assumption \ref{ass:f} can be easily satisfied, e.g., $f(V)=V(s_0)$ for some reference state $s_0\in\mcs$, and $f(V)=\frac{\sum_s V(s)}{|\mcs|}$ \cite{abounadi2001learning}. 
Compared with the discounted setting, $f$ is critical here. As we discussed above, in the average-reward setting, the solution to the Bellman equation $V+ce$ can be arbitrarily large because $c$ can be any real number. This may lead to a non-convergent sequence $V_n$ (see, e.g., example 8.5.2 of \cite{puterman1994markov}). Hence, a function $f$ is introduced to "offset" $V_n$ and keep the iterates stable. Also, $f(V_n)$ serves as an estimator of the average-reward $g^\pi_\cp$, as we shall see later. 


We then assume the Robbins-Monro condition on the stepsize, and further show the convergence of robust RVI TD.
\begin{assumption}\label{ass:step-size}
    The stepsize $\{\alpha_n\}_{n=0}^\infty$ satisfies the Robbins-Monro condition, i.e., 
    $
        \sum^\infty_{n=0} \alpha_n=\infty, \quad
        \sum^\infty_{n=0} \alpha^2_n<\infty.
    $
\end{assumption}
\begin{theorem}[Convergence of robust RVI TD]\label{thm:td converge}
Under Assumptions \ref{ass:sameg},\ref{ass:f},\ref{ass:step-size}, and if $\hat{\mathbf{T}}$ satisfies \eqref{eq:t1}, then almost surely, $(f(V_n),V_n)$ converges to a solution to \eqref{eq:bellman} which may depend on the initialization.
\end{theorem}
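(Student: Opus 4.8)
The plan is to cast \Cref{alg:mfsTD} as a stochastic approximation recursion and analyze it by the ODE method. Writing $\mathbf{T}V(s)=\sum_a\pi(a|s)(r(s,a)+\sigma_{\cp^a_s}(V))$ so that \eqref{eq:bellman} reads $V=\mathbf{T}V-ge$, the update becomes $V_{n+1}=V_n+\alpha_n(h(V_n)+M_{n+1})$ with mean field $h(V)\triangleq\mathbf{T}V-f(V)e-V$ and noise $M_{n+1}\triangleq\hat{\mathbf{T}}V_n-\mathbf{T}V_n$. First I would check the standard noise hypotheses: by the unbiasedness in \eqref{eq:t1}, $\mE[M_{n+1}\mid\mathcal F_n]=\bz$, and by the variance bound in \eqref{eq:t1}, $\mE[\|M_{n+1}\|^2\mid\mathcal F_n]\le C'(1+\|V_n\|^2)$ for some $C'$, so $\{M_{n+1}\}$ is a martingale-difference sequence with at most quadratic conditional second moment. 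I would also record that $h$ is Lipschitz: each $\sigma_{\cp^a_s}(\cdot)$ is a minimum of the linear maps $p\mapsto pV$ over probability vectors, hence $1$-Lipschitz in $\|\cdot\|_\infty$, so $\mathbf{T}$ is nonexpansive, while $f$ is $L_f$-Lipschitz by \Cref{ass:f}; with \Cref{ass:step-size} this sets up the hypotheses for both stability and convergence.

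The second step is to prove boundedness of the iterates via the scaled-ODE (Borkar--Meyn) criterion. Using positive homogeneity of the $\min$ and the homogeneity $f(cV)=cf(V)$ from \Cref{ass:f}, the scaling limit $h_\infty(V)\triangleq\lim_{c\to\infty}h(cV)/c$ exists and equals $\mathbf{T}_\infty V-f(V)e-V$, where $\mathbf{T}_\infty V(s)=\sum_a\pi(a|s)\sigma_{\cp^a_s}(V)$ is the reward-free robust operator. The origin is the unique equilibrium of $\dot V=h_\infty(V)$: an equilibrium solves \eqref{eq:bellman} for the zero-reward MDP with $g=f(V)$, and applying \Cref{thm:robust Bellman} to that MDP forces $f(V)=g^\pi_\cp=0$ and $V=V^\pi_{\kp_V}+ce=ce$; the normalization $f(ce)=c=0$ then gives $V=\bz$. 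I would then establish global asymptotic stability of this origin to conclude $\sup_n\|V_n\|<\infty$ almost surely.

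With stability in hand, the ODE method identifies the a.s.\ limit set of $\{V_n\}$ with an internally chain-transitive invariant set of $\dot V=h(V)$, and the remaining task is to show every such set is a single equilibrium, i.e., a point of $\mathcal E\triangleq\{V:\mathbf{T}V-f(V)e-V=\bz\}$. Here the offset term is the crux, and the key observation is that $\mathbf{T}$ commutes with constant shifts, $\mathbf{T}(V+ce)=\mathbf{T}V+ce$, while $f(V)e$ only changes the level; hence the dynamics induced on the quotient $\mathbb{R}^{|\mcs|}/\mathrm{span}(e)$, measured in the span seminorm, is governed by the nonexpansive operator $\mathbf{T}$ with the offset eliminated. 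I would use nonexpansiveness to run a Fej\'er-type / LaSalle argument showing that the span component converges to the shape of some worst-case relative value function, while $f(V(t))$ is driven to the equilibrium level. By \Cref{thm:robust Bellman}, any $V\in\mathcal E$ yields $f(V)=g^\pi_\cp$ and $V=V^\pi_{\kp_V}+ce$ with $\kp_V\in\Omega^\pi_g$, so $\mathcal E$ is exactly the normalized solution set of \eqref{eq:bellman}; this is why the limit is a genuine solution and why, since \Cref{lemma:1} and the non-uniqueness of worst-case kernels make $\mathcal E$ a continuum rather than a single point modulo $\mathrm{span}(e)$, the limit may depend on the initialization. Finally $(f(V_n),V_n)$ converges to $(g^\pi_\cp,V_\infty)$ with $V_\infty\in\mathcal E$.

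The main obstacle is the convergence step rather than the bookkeeping: because $\mathbf{T}$ is neither linear nor a contraction and $\mathcal E$ is not a singleton (even modulo $\mathrm{span}(e)$), the usual ``unique globally asymptotically stable equilibrium'' route used for non-robust RVI and for robust discounted Q-learning is unavailable. The work is to combine sup-norm and span nonexpansiveness with the structural characterization in \Cref{thm:robust Bellman}—that every solution is a relative value function of a worst-case kernel up to an additive constant—to force the chain-transitive invariant set down to a single equilibrium and to pin its level through $f$. Establishing the requisite (non-strict) Lyapunov decrease and upgrading it to convergence to a point in the presence of this non-uniqueness is where the argument is most delicate.
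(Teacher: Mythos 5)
Your overall architecture matches the paper's proof: the same mean field $h(V)=\mathbf{T}V-f(V)e-V$ with martingale-difference noise, the same Borkar--Meyn stability step in which the equilibria of $h_\infty$ are identified as solutions of a zero-reward robust Bellman equation and \Cref{thm:robust Bellman} forces the unique equilibrium to be the origin, and the same idea of separating the ``shape'' of $V(t)$ (the quotient by $\mathrm{span}(e)$, governed by a non-expansive operator) from its ``level'' (pinned by $f$). The one place your plan stops short is precisely the step you yourself flag as delicate: upgrading ``the trajectory approaches the equilibrium set'' to ``the trajectory converges to a single equilibrium,'' given that the solution set of \eqref{eq:bellman} is a continuum even modulo $\mathrm{span}(e)$. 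The paper closes this with two concrete ingredients. First, Lemma~\ref{lemma:x=y+r}: the solution $x(t)$ of $\dot x=\mathbf{T}x-f(x)e-x$ equals $y(t)+r(t)e$, where $y$ solves the ODE driven by $\tilde{\mathbf T}V\triangleq\mathbf{T}V-g^\pi_\cp e$ (shown non-expansive via a Gronwall argument in the span semi-norm) and $r$ solves the scalar linear ODE $\dot r=-r+g^\pi_\cp-f(y(t))$, $r(0)=0$. Second, Theorem~3.1 of \cite{borkar1997analog}, which guarantees that trajectories of $\dot y=\tilde{\mathbf T}y-y$ for a non-expansive $\tilde{\mathbf T}$ converge to a (possibly initialization-dependent) fixed point; combined with \Cref{thm:robust Bellman} this identifies $y^*=V^\pi_{\kp^*}+c^*e$, the variation-of-constants formula gives $r(t)\to g^\pi_\cp-f(y^*)$, and hence $f(x(t))\to g^\pi_\cp$. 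Your ``Fej\'er-type/LaSalle'' sketch is the right kind of argument---it is essentially how the cited theorem is proved---but as written it is a placeholder: without the decomposition lemma or an equivalent device you have not shown that the level $f(V(t))$ actually settles rather than drifting along the equilibrium continuum, and the same two ingredients are also what the paper uses to supply the global asymptotic stability you defer in the boundedness step.
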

The result implies that $f(V_n)\to g^\pi_\cp$ a.s., which means our robust RVI TD converges to the worst-case average-reward for the given policy $\pi$.

\begin{remark}
Our robust RVI TD algorithm is shown to converge to a solution to \eqref{eq:bellman}. This model-free result is the same as the model-based results in \cite{wang2023robust}. Though it was shown in \cite{wang2023robust} that $(g^\pi_\cp,V_{\mathcal P}^\pi)$ (defined in \eqref{eq:rrvi}) is a solution to \eqref{eq:bellman}, it is not guaranteed that the convergence is to $(g^\pi_\cp,V_{\mathcal P}^\pi+ce)$ for some $c$.
\end{remark}
\begin{remark}
As we discussed following the statement of Theorem \ref{thm:robust Bellman}, result 3) of Theorem \ref{thm:robust Bellman} is crucial to the convergence proof of \Cref{thm:td converge}.  Specifically, it is necessary in order to characterize the equilibrium of the associated ODE, and thus the limit of the iterates $f(V_n)\to g^\pi_\cp$.
\end{remark}

\section{Robust RVI Q-Learning for Control}
In this section, we study the problem of optimal control, which aims to find a policy that optimizes the robust average-reward: $\pi^*=\arg\max_\pi g^\pi_\cp.$


Similar to Assumption \ref{ass:sameg}, we make the following assumption to guarantee that the average-reward is independent of the initial state \cite{abounadi2001learning,li2022stochastic}.
\begin{assumption}\label{ass:g*same}
    The Markov chain induced by any $\kp\in\cp$ and any $\pi$ is a unichain. 
\end{assumption}

We first revisit the optimal robust Bellman equation in \cite{wang2023robust} and further present a characterization of its solutions. Consider a $Q$-function $Q: \mcs\times\mca\rightarrow \mathbb R$, and define $V_Q(s)=\max_a Q(s,a),\forall s\in\mcs$.


\begin{lemma}[Optimal robust Bellman equation]\label{thm:optimal robust Bellman}
If $(g,Q)$ is a solution to the optimal robust Bellman equation
\begin{align}\label{eq:optimal bellman}
    Q(s,a)=r(s,a)-g+\sigma_{\cp^a_s}(V_Q), \forall s,a ,
\end{align} 
then
1)  $g=g^*_\cp$ \cite{wang2023robust};
2) the greedy policy w.r.t. $Q$: $\pi_Q(s)=\arg\max_a Q(s,a)$ is an optimal robust policy \cite{wang2023robust};
3) $V_Q=V^{\pi_Q}_\kp+ce$ for some $\kp\in\Omega^{\pi_Q}_g, c\in\mathbb{R}$. 
\end{lemma}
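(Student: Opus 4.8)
The plan is to prove part 3) by reducing it to the already-established policy-evaluation result \Cref{thm:robust Bellman}; parts 1) and 2) are cited from \cite{wang2023robust}, so only the third claim requires argument. The key observation I would exploit is that the greedy policy $\pi_Q$ is a fixed \emph{deterministic} policy, and that $(g,V_Q)$ is, after substituting the greedy action, a solution to the robust Bellman equation \eqref{eq:bellman} for $\pi_Q$. Once this reduction is in place, the three conclusions of \Cref{thm:robust Bellman} applied with $\pi=\pi_Q$ deliver exactly what is asserted.

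First I would record that $\pi_Q$ is the deterministic policy with $\pi_Q(s)=\arg\max_a Q(s,a)$, so that $V_Q(s)=\max_a Q(s,a)=Q(s,\pi_Q(s))$ for every $s\in\mcs$. Evaluating the optimal robust Bellman equation \eqref{eq:optimal bellman} at the particular action $a=\pi_Q(s)$ then gives
\[
V_Q(s)=Q(s,\pi_Q(s))=r(s,\pi_Q(s))-g+\sigma_{\cp^{\pi_Q(s)}_s}(V_Q),\quad\forall s\in\mcs.
\]
Because $\pi_Q$ is deterministic, the right-hand side equals $\sum_a\pi_Q(a|s)\big(r(s,a)-g+\sigma_{\cp^a_s}(V_Q)\big)$, which is precisely the right-hand side of \eqref{eq:bellman} for the policy $\pi_Q$. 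Hence $(g,V_Q)$ solves the robust Bellman equation \eqref{eq:bellman} associated with $\pi_Q$. I would also note that ties in the $\arg\max$ are harmless: any tie-breaking choice produces the same value $V_Q(s)$, so the identity above holds for every admissible realization of $\pi_Q$.

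With the reduction established, I would invoke \Cref{thm:robust Bellman} with $\pi=\pi_Q$ and $V=V_Q$. Assumption \ref{ass:g*same} guarantees the unichain condition of Assumption \ref{ass:sameg} for $\pi_Q$, so the theorem applies. Its part 2) yields $\kp_{V_Q}\in\Omega^{\pi_Q}_g$, and its part 3) yields $V_Q=V^{\pi_Q}_{\kp_{V_Q}}+ce$ for some $c\in\mathbb{R}$; setting $\kp=\kp_{V_Q}$ gives the claim. As a consistency check, part 1) of \Cref{thm:robust Bellman} returns $g=g^{\pi_Q}_\cp$, which agrees with $g=g^*_\cp$ from part 1) here together with the optimality of $\pi_Q$ from part 2).

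The main obstacle is the reduction step itself, namely verifying that maximizing over actions and then fixing the greedy action converts \eqref{eq:optimal bellman} into the evaluation equation \eqref{eq:bellman} for $\pi_Q$; everything afterward is a direct citation of \Cref{thm:robust Bellman}. The only points demanding care are confirming that $\sigma_{\cp^a_s}$ is evaluated at the \emph{same} $V_Q$ in both equations and that tie-breaking in the greedy policy does not affect the argument, but no nontrivial computation is involved.
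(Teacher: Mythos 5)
Your proposal is correct and follows essentially the same route as the paper's proof: both reduce \eqref{eq:optimal bellman}, evaluated at the greedy action, to the policy-evaluation robust Bellman equation \eqref{eq:bellman} for the deterministic policy $\pi_Q$ (with $V_Q(s)=Q(s,\pi_Q(s))$), and then invoke \Cref{thm:robust Bellman} to obtain $\kp_{V_Q}\in\Omega^{\pi_Q}_g$ and $V_Q=V^{\pi_Q}_{\kp_{V_Q}}+ce$. The only cosmetic difference is that the paper also re-derives claims 1) and 2) (via the max-form equation and Theorem 7 of \cite{wang2023robust}) rather than citing them outright, and your explicit remarks on tie-breaking and on \Cref{ass:g*same} supplying the unichain condition are reasonable additions.
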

According to result 2 in Lemma \ref{thm:optimal robust Bellman}, finding a solution to \eqref{eq:optimal bellman} is sufficient to get the optimal robust average-reward and to derive the optimal robust policy. 
We note that a complete characterization of the solution set to \eqref{eq:optimal bellman} can be obtained similarly to the one in result 3 of \Cref{thm:robust Bellman}. Here, we only provide its structure to simplify the presentation and avoid cumbersome notation. This structure as outlined in \Cref{thm:optimal robust Bellman} is sufficient for our convergence analysis.


We hence present the following model-free robust RVI Q-learning algorithm. 
\begin{algorithm}[htb]
\caption{Robust RVI Q-learning}
\label{alg:Q}
\textbf{Input}: $Q_0, \alpha_n$
\begin{algorithmic}[1] 
\FOR{$n=0,...,N-1$}
\FOR{all $s\in\mcs, a\in\mca$}
\STATE {$Q_{n+1}(s,a)\leftarrow Q_n(s,a)+\alpha_n \big(\hat{\mathbf H}Q_n(s,a)-f(Q_n)-Q_n(s,a)\big)$}
\ENDFOR
\ENDFOR
\end{algorithmic}
\end{algorithm}

Similar to the robust RVI TD algorithm, denote the optimal robust Bellman operator by ${\mathbf H}Q(s,a)\triangleq r(s,a)+\sigma_{\cp^a_s}(V_Q)$, and we construct an estimate $\hat{\mathbf H}$ such that for some finite constant $C$,
\begin{align}
    \mE[\hat{\mathbf H}Q]&=\mathbf H Q,\quad    \text{Var}[\hat{\mathbf H}Q(s,a)]\leq C(1+\|Q\|^2).\label{eq:q2}
\end{align}
In \Cref{sec:case}, we will present in detail how to construct such $\hat{\mathbf H}$ for various uncertainty set models.   

The following theorem shows the convergence of the robust RVI Q-learning to the optimal robust average-reward $g^*_{\cp}$ and the optimal robust policy $\pi^*$. 
\begin{theorem}[Convergence of robust RVI Q-learning]\label{thm:Q general converge}
Under Assumptions \ref{ass:f}, \ref{ass:step-size} and \ref{ass:g*same}, and if $\hat{\mathbf H}$ satisfies \eqref{eq:q2}, then almost surely,  $(f(Q_n),Q_n)$ converges to a solution to \eqref{eq:optimal bellman}, i.e., $f(Q_n)$ converges to $g^*_{\cp}$, and the greedy policy $\pi_{Q_n}(s)\triangleq \arg\max_a Q_n(s,a)$ converges to an optimal robust average-reward $\pi^*$.
\end{theorem}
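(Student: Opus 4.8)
The plan is to prove convergence of robust RVI Q-learning (\Cref{thm:Q general converge}) via the stochastic approximation (ODE) method, following the template of \cite{abounadi2001learning} for non-robust RVI Q-learning but adapting it to handle the non-linear, non-contractive robust operator $\mathbf H$. First I would rewrite the update in \Cref{alg:Q} as a stochastic approximation recursion $Q_{n+1} = Q_n + \alpha_n\big(F(Q_n) - Q_n + M_{n+1}\big)$, where $F(Q) \triangleq \mathbf H Q - f(Q)e$ is the mean field and $M_{n+1} \triangleq (\hat{\mathbf H}Q_n - \mathbf H Q_n) - f(\cdot)$-induced noise is a martingale-difference sequence. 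The hypotheses \eqref{eq:q2} give $\mE[M_{n+1}\mid\mathcal F_n]=0$ and $\mE[\|M_{n+1}\|^2\mid\mathcal F_n]\le C(1+\|Q_n\|^2)$, which is exactly the bounded-variance condition required by the stochastic approximation machinery. The associated limiting ODE is $\dot Q = F(Q) = \mathbf H Q - f(Q)e - Q$, and the bulk of the work is an analysis of this ODE.

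The key steps, in order, would be: (i) establish that $F$ is Lipschitz (using that $\sigma_{\cp^a_s}(\cdot)$ is a min of linear functions, hence $1$-Lipschitz in $\|\cdot\|_\infty$, and that $f$ is $L_f$-Lipschitz by \Cref{ass:f}), so the ODE is well posed; (ii) prove stability/boundedness of the iterates via the Borkar--Meyn theorem, which requires studying the scaled ODE $\dot Q = F_\infty(Q) = \mathbf H_\infty Q - f(Q)e - Q$ where $F_\infty(Q)\triangleq\lim_{c\to\infty}F(cQ)/c$; here $\mathbf H_\infty Q(s,a)=\sigma_{\cp^a_s}(V_Q)$ retains the monotonicity and $\|\cdot\|_\infty$-nonexpansiveness of $\mathbf H$, so I would show the scaled ODE has the origin as its globally asymptotically stable equilibrium, giving almost-sure boundedness of $\{Q_n\}$; (iii) identify the equilibria of the ODE as solutions to $\mathbf H Q = f(Q)e + Q$, and show that any such equilibrium $Q^\infty$ satisfies, with $g\triangleq f(Q^\infty)$, the optimal robust Bellman equation \eqref{eq:optimal bellman}, invoking \Cref{thm:optimal robust Bellman}(1)--(2) to conclude $g=g^*_\cp$ and $\pi_{Q^\infty}=\pi^*$; (iv) prove global asymptotic convergence of the ODE to this equilibrium set, and then appeal to the standard stochastic approximation convergence theorem to transfer this to almost-sure convergence of $(f(Q_n),Q_n)$, whence $f(Q_n)\to g^*_\cp$ and the greedy policy converges.

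The main obstacle will be step (iv): proving that the ODE $\dot Q = \mathbf H Q - f(Q)e - Q$ converges to its equilibrium set, because $\mathbf H$ is neither linear nor a contraction in the average-reward robust setting, so the usual contraction-based Lyapunov argument of \cite{abounadi2001learning} does not apply directly. The strategy I would use is to exploit the span-seminorm (relative) structure: the offset $-f(Q)e$ collapses the additive-constant degree of freedom, and I expect $\mathbf H$ to be a \emph{nonexpansion} in $\|\cdot\|_\infty$ that becomes a contraction in the quotient span-seminorm on $\mathbb R^{|\mcs||\mca|}/e$ under \Cref{ass:g*same} (unichain). The delicate point is that, unlike the non-robust case, the fixed point $V_{Q^\infty}$ is only a relative value function for \emph{some} worst-case kernel $\kp\in\Omega^{\pi_{Q^\infty}}_g$ (this is precisely result 3 of \Cref{thm:optimal robust Bellman}), so the limit may depend on initialization and the equilibrium set is not a single point modulo constants; I would therefore need a set-valued asymptotic stability / Lyapunov argument (e.g.\ via the function $\|Q - \Pi Q\|$ measuring distance to the equilibrium set, combined with the span contraction from unichain mixing) rather than convergence to a unique point. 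Handling this nonuniqueness while still extracting $f(Q_n)\to g^*_\cp$ is exactly where result 3 of \Cref{thm:optimal robust Bellman} is indispensable, mirroring the role it plays in the proof of \Cref{thm:td converge}.
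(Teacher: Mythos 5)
Your overall architecture --- the stochastic approximation recursion with martingale noise controlled by \eqref{eq:q2}, Borkar--Meyn stability via the scaled ODE $\dot Q=h_\infty(Q)$, and identification of the ODE's equilibria through \Cref{thm:optimal robust Bellman} --- matches the paper through steps (i)--(iii). The genuine gap is in step (iv), which you correctly single out as the main obstacle but then resolve with a mechanism that does not work. You propose that under \Cref{ass:g*same} the operator $\mathbf H$ ``becomes a contraction in the quotient span-seminorm,'' and you build your set-valued Lyapunov argument for $\|Q-\Pi Q\|$ on that contraction. But even in the non-robust unichain average-reward setting a single application of the Bellman operator is only a \emph{nonexpansion} in the span seminorm, not a contraction (a modulus strictly below one needs extra structure such as aperiodicity/minorization or a multi-step argument), and the same is true here: the paper only ever establishes $\spa(\tilde{\mathbf H}Q_1-\tilde{\mathbf H}Q_2)\le \spa(Q_1-Q_2)$ in the proof of \Cref{lemma:optimal x=y+r}. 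Since the equilibrium set is a nontrivial continuum (solutions of \eqref{eq:optimal bellman} shifted by $ce$, possibly for several worst-case kernels), a mere nonexpansion gives no strict decrease of your Lyapunov function, so the argument stalls exactly where you need it.

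The paper closes this step differently. It introduces the auxiliary operator $\tilde{\mathbf H}Q\triangleq \mathbf H Q-g^*_\cp e$ (a \emph{constant} offset in place of the state-dependent $f(Q)e$) and proves in \Cref{lemma:optimal x=y+r} that the solution of the algorithm's ODE decomposes as $x(t)=y(t)+r(t)e$, where $\dot y=\tilde{\mathbf H}y-y$ and $r$ solves the scalar linear ODE $\dot r=-r+g^*_\cp-f(y(t))$ with $r(0)=0$. Convergence of $y(t)$ to the fixed-point set of $\tilde{\mathbf H}$ then follows from Theorem 3.1 of \cite{borkar1997analog}, which applies to nonexpansive maps and requires no contraction at all; $r(t)\to g^*_\cp-f(y^*)$ by the variation-of-constants formula; and result 3 of \Cref{thm:optimal robust Bellman} identifies the limit $y^*+(g^*_\cp-f(y^*))e$ as a solution of \eqref{eq:optimal bellman}, whence $f(Q_n)\to g^*_\cp$ by \Cref{ass:f}. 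To repair your route you would need either to import this nonexpansive fixed-point ODE theorem together with the $x=y+re$ decomposition, or to actually prove a (multi-step) span contraction, which the unichain assumption alone does not deliver.
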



\section{Case Studies}\label{sec:case}
In the previous two sections, we showed that if an unbiased estimator with bounded variance is available for the robust Bellman operator, then both robust algorithms proposed converge to the optimum. In this section, we present the design of these estimators for various uncertainty set models. 

The major challenge in designing the estimated operators satisfying \eqref{eq:t1} and \eqref{eq:q2} lies in estimating the support function $\sigma_{\cp_s^a}(V)$ unbiasedly with bounded variance using samples from the nominal transition kernel.
However, the nominal transition kernel $\kp$  in general is different from the worst-case transition kernel, and the straightforward estimator is shown to be biased. Specifically, if we centered at the empirical transition kernel $\hat{\kp}$ and construct the uncertainty set $\hat{\cp}$, then the estimator is biased:
$
    \mE[\sigma_{{\hat{\cp}}_s^a}(V)]\neq \sigma_{\cp_s^a}(V). 
$

We consider several widely-used uncertainty models including the contamination model, the total variation model, the Chi-square model, the KL-divergence model, and the Wasserstein distance model. 
We show that our estimators are unbiased and have bounded variance in the following theorem. We will present the design in later sections. 
\begin{theorem}\label{thm:case thm}
For each uncertainty set, denote its corresponding estimators by $\hat{\mathbf T}$ and $\hat{\mathbf H}$ as in \Cref{sec:con,sec:non}. Then, there exists some constant $C$, such that \eqref{eq:t1} and \eqref{eq:q2} hold.
\end{theorem}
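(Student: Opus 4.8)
The plan is to prove Theorem~\ref{thm:case thm} separately for each of the five uncertainty-set models, since the support function $\sigma_{\cp_s^a}(V)=\min_{p\in\cp_s^a}pV$ takes a different closed or semi-closed form in each case. The unifying strategy is as follows: for each model I would first obtain a convenient representation of $\sigma_{\cp_s^a}(V)$, typically via its convex dual, that expresses it as an expectation (or a smooth function of expectations) of some random quantity $h(V,S')$ where $S'\sim\kp_s^a$ is a next-state sample drawn from the nominal kernel. Whenever $\sigma$ is \emph{linear} in the expectation, a single sample already gives an unbiased estimate with variance controlled by $\|V\|^2$, so both conditions in \eqref{eq:t1} and \eqref{eq:q2} follow immediately. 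The genuine difficulty arises in the models (total variation, Chi-squared, KL, Wasserstein) where the dual involves a \emph{nonlinear} outer function of one or more expectations --- e.g. a supremum over a scalar dual variable of terms like $\mE[(\eta-V(S'))_+]$ or $\log\mE[e^{-V(S')/\lambda}]$. Here, plugging in an empirical average yields a biased estimate, which is exactly the obstruction flagged in the discussion preceding the theorem.

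To remove this bias I would invoke the multi-level Monte-Carlo (MLMC) estimator of \cite{blanchet2015unbiased}. The key steps are: (i) fix the state--action pair $(s,a)$ and write the target as $\Phi\big(\mE[X]\big)$ (or a nested composition thereof), where $\Phi$ is the smooth nonlinear map from the dual formulation and $X=X(V,S')$ is a bounded-in-$\|V\|$ random vector; (ii) draw a random level $N$ from a geometric-type distribution $\mP(N=n)=(1-\psi)\psi^n$, then draw $2^{N+1}$ i.i.d.\ samples from the nominal kernel, split them into two halves, and form the level-$N$ correction $\Delta_N=\Phi(\bar X^{N+1})-\tfrac12\big(\Phi(\bar X^N_{\mathrm{odd}})+\Phi(\bar X^N_{\mathrm{even}})\big)$ of the plug-in estimates; (iii) return the debiased estimator $\Phi(\bar X^0)+\Delta_N/\mP(N)$. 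The standard MLMC telescoping argument then gives $\mE[\hat{\mathbf T}V]=\lim_{n}\mE[\Phi(\bar X^n)]=\Phi(\mE[X])=\mathbf T V$, i.e.\ unbiasedness, provided the plug-in bias decays fast enough; because $\Phi$ is smooth and the samples concentrate at rate $2^{-n}$, the bias decays like $2^{-n}$ and the telescoping sum converges.

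For the bounded-variance claim I would control $\mathrm{Var}[\Delta_N/\mP(N)]=\sum_n \psi^{-n}(1-\psi)^{-1}\,\mE[\Delta_n^2]$. Using smoothness of $\Phi$ (Lipschitz or bounded second derivative on the relevant range) together with a second-order Taylor/coupling bound, $\mE[\Delta_n^2]$ is of order $2^{-2n}$ times a polynomial in $\|V\|$; choosing the decay rate $\psi\in(\tfrac12,1)$ so that $\psi^{-1}2^{-2}<1$ makes the geometric series summable, yielding a finite bound of the form $C(1+\|V\|^2)$. The main technical obstacle --- and the step I would spend the most care on --- is verifying for \emph{each} model that (a) the dual representation genuinely reduces $\sigma$ to a smooth function of expectations with the outer variable(s) ranging over a compact set so the supremum/infimum is attained and $\Phi$ has uniformly bounded derivatives, and (b) the per-level moments scale correctly in $\|V\|$; the KL and Wasserstein cases are delicate because the dual variable can approach a boundary (e.g.\ $\lambda\to0$) where smoothness degrades, so I would need an a~priori compactness argument restricting the dual variable to a $\|V\|$-dependent compact interval before applying the MLMC moment bounds. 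The $\hat{\mathbf H}$ case is then handled identically to $\hat{\mathbf T}$, replacing $V$ by $V_Q=\max_a Q(\cdot,a)$ and noting $\|V_Q\|\le\|Q\|$, so \eqref{eq:q2} follows from \eqref{eq:t1} verbatim.
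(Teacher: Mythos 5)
Your overall architecture matches the paper's: reduce \eqref{eq:t1} and \eqref{eq:q2} to unbiasedness and bounded variance of the single estimator $\hat{\sigma}_{\cp^a_s}$, handle the contamination model with one sample by linearity, use the dual representations to write the remaining support functions as optimizations over a compact, $\|V\|$-dependent range of dual variables (the paper proves exactly such compactness lemmas for TV, Chi-square, and Wasserstein), apply the MLMC telescoping identity for unbiasedness, and pass from $\hat{\mathbf T}$ to $\hat{\mathbf H}$ via $\|V_Q\|\le\|Q\|$. However, there is one step that fails as written: your variance bound $\mE[\Delta_n^2]=O(2^{-2n})$ rests on $\Phi$ having a bounded second derivative, so that the antithetic combination $\Phi(\bar X^{n+1})-\tfrac12(\Phi(\bar X^n_{\mathrm{odd}})+\Phi(\bar X^n_{\mathrm{even}}))$ enjoys first-order cancellation. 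The support function here is \emph{not} continuously differentiable in the empirical kernel --- it is a pointwise maximum over dual variables of functions involving $\mathrm{span}(\cdot)$, $\min_x V(x)$, and $\sqrt{\mathrm{Var}(\cdot)}$ --- and the paper explicitly flags that the standard smooth-MLMC analysis of \cite{blanchet2015unbiased} does not apply for precisely this reason. The same issue undercuts your claimed $2^{-n}$ bias decay for the telescoping step, though there a slower rate still suffices for summability.

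The repair, which is what the paper actually does, is to abandon the second-order argument and use only the Lipschitz dependence of the dual objective on the empirical kernel, giving $|g^V_{s,a,n}(\mu^V_{s,a,n})-g^V_{s,a}(\mu^V_{s,a})|\le C\|V\|\,\|\kp^a_s-\hat{\kp}^a_{s,n}\|_1$ and hence, via $\ell_1$ concentration of the empirical distribution, $\mE[\Delta_n^2]=O(2^{-n})$ for TV and Wasserstein and only $O(2^{-n/2})$ for Chi-square (because of the $\sqrt{\delta\,\mathrm{Var}}$ term, controlled through $|\sqrt{x}-\sqrt{y}|\le\sqrt{|x-y|}$). This forces a tighter constraint on the level distribution than the one you state: in your parameterization $\mP(N=n)=(1-\psi)\psi^n$ you need $\psi>1/2$ for TV and Wasserstein and $\psi>2^{-1/2}$ for Chi-square, so your proposed range $\psi\in(\tfrac12,1)$ is justified for the wrong reason in the first two cases and is insufficient for the Chi-square case. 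With this substitution your proof goes through and coincides with the paper's.
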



In the following sections, we construct an operator $\hat{\sigma}_{\cp^a_s}$ to estimate the support function $\sigma_{\cp^a_s}$, $\forall s\in\mcs,a\in\mca$ for each uncertainty set. We further define the estimated robust Bellman operators as
$
    \hat{\mathbf T}V(s)\triangleq\sum_a \pi(a|s) (r(s,a)+\hat{\sigma}_{\cp^a_s}(V))
$
and 
$
    \hat{\mathbf H}Q(s,a)\triangleq r(s,a)+\hat{\sigma}_{\cp^a_s}(V_Q).
$
\subsection{Linear Model: Contamination Uncertainty Set}\label{sec:con}
The $\delta$-contamination uncertainty set is
$
    \cp^a_s=\{(1-\delta)\kp^a_s+\delta q: q\in\Delta(\mcs) \}, 
$
where $0<\delta<1$ is the radius.
Under this uncertainty set, the support function can be computed as 
$
    \sigma_{\cp^a_s}(V)=(1-\delta)\kp^a_s V+\delta \min_s V(s),
$
and this is linear in the nominal transition kernel $\kp^a_s$. We hence use the transition to the subsequent state to construct our estimator:
\begin{align}\label{eq:12}
    \hat{\sigma}_{\cp^a_s}(V)\triangleq (1-\delta)\gamma V(s')+\delta\min_x V(x),
\end{align}
where $s'$ is a subsequent state sample after $(s,a)$. 

\subsection{Non-Linear Models}\label{sec:non}
Unlike the $\delta$-contamination model, most uncertainty sets result in a non-linear support function of the nominal transition kernel. We will employ the approach of multi-level Monte-Carlo which is widely used in quantile estimation under stochastic environments \cite{blanchet2015unbiased,blanchet2019unbiased,wang2022unbiased} to construct an unbiased estimator with bounded variance. 
 
For any $s,a$, we first generate $N$ according to a geometric distribution with parameter $\Psi\in(0,1)$. Then, we take action $a$ at state $s$ for $2^{N+1}$ times, and observe $r(s,a)$ and the subsequent state $\{s_i'\}, i=1,...,2^{N+1}$. We divide these $2^{N+1}$ samples into two groups:  samples with odd indices, and samples with even indices. We then individually calculate the empirical distribution of $s'$ using the even-index samples, odd-index ones, all the samples, and the first sample:
$\hat{\kp}^{a,E}_{s,N+1}= \frac{1}{2^{N}}\sum_{i=1}^{2^{N}}\mathbbm{1}_{s_{2i}'},  
\quad\hat{\kp}^{a,O}_{s,N+1}= \frac{1}{2^{N}}\sum_{i=1}^{2^{N}}\mathbbm{1}_{s_{2i-1}'},$
$\hat{\kp}^{a}_{s,N+1}= \frac{1}{2^{N+1}}\sum_{i=1}^{2^{N+1}}\mathbbm{1}_{s_i'},\quad\hat{\kp}^{a,1}_{s,N+1}= \mathbbm{1}_{s_1'}.
$
Then, we use these estimated transition kernels as nominal kernels to construct four estimated uncertainty sets $\hat{\cp}^{a,E}_{s,N+1},\hat{\cp}^{a,O}_{s,N+1},\hat{\cp}^{a}_{s,N+1},\hat{\cp}^{a,1}_{s,N+1}$. The multi-level estimator is then defined as 
\begin{align}\label{eq:est1}
    \hat{\sigma}_{\cp^a_s}(V)\triangleq \sigma_{\hat{\cp}^{a,1}_{s,N+1}}(V)+\frac{\Delta_N(V)}{p_N},
\end{align}
where  $p_N=\Psi(1-\Psi)^{N}$ and
\begin{align}
    \Delta_N(V)\triangleq \sigma_{\hat{\cp}^{a}_{s,N+1}}(V)-\frac{\sigma_{\hat{\cp}^{a,E}_{s,N+1}}(V)+\sigma_{\hat{\cp}^{a,O}_{s,N+1}}(V)}{2}.\nn
\end{align}
We note that in previous results of the multi-level Monte-Carlo estimator \cite{blanchet2015unbiased,blanchet2019unbiased,wang2022unbiased}, several assumptions are needed to show that the estimator is unbiased. These assumptions, however, do not hold in our cases. For example, the function $\sigma_{\cp}(V)$ is not continuously differentiable. Hence, their analysis cannot be directly applied here. 

We then present four examples of non-linear uncertainty sets. Under each example, a solution to the support function $\sigma_{\cp}(V)$ is given, and by plugging it into \eqref{eq:est1} the unbiased estimator can then be constructed. More details can be found in \Cref{sec:case for TD} and \Cref{sec:case for Q}.

\textbf{Total Variation Uncertainty Set.}
The total variation uncertainty set is  
$
    \cp^a_s=\{q\in\Delta(|\mcs|): \frac{1}{2}\|q-\kp^a_s\|_1\leq \delta \},
$
and the support function can be computed using its dual function \cite{iyengar2005robust}: 
\begin{align}\label{eq:tv dual}
    \sigma_{\cp^a_s}(V)
    &=\max_{\mu\geq 0}\big(\kp^a_s(V-\mu)-\delta \spa(V-\mu) \big).
\end{align}

\textbf{Chi-square Uncertainty Set.}
The Chi-square uncertainty set is 
$
    \cp^a_s=\{q\in\Delta(|\mcs|): d_c(\kp^a_s,q)\leq \delta \},
$
where $d_c(q,p)=\sum_{s}\frac{(p(s)-q(s))^2}{p(s)}$.
Its support function can be computed using its dual function \cite{iyengar2005robust}: 
\begin{align}\label{eq:cs dual}
    \sigma_{\cp^a_s}(V)=\max_{\mu\geq 0}\big(\kp^a_s(V-\mu)-\sqrt{\delta \text{Var}_{\kp^a_s} (V-\mu)} \big).
\end{align}

\textbf{Kullback–Leibler (KL) Divergence Uncertainty Set.}
The KL-divergence between two distributions $p,q$ is defined as 
$
    D_{KL}(q||p)=\sum_s q(s)\log\frac{q(s)}{p(s)},
$
and the uncertainty set defined via KL divergence is
\begin{align}
\cp_s^a=\left\{q: D_{KL}(q||\kp_s^a)\leq \delta\right\},\forall s\in\mcs, a\in\mca.
\end{align}  
Its support function can be efficiently solved using the duality result in \cite{hu2013kullback}:
\begin{align}
\sigma_{\cp^a_s}(V)=
    -\min_{\alpha\geq 0} \left(\delta\alpha+\alpha\log \left( \mE_{\kp^a_s}\big[ e^{\frac{-V}{\alpha}}\big]\right) \right).
\end{align}

The above estimator for the KL-divergence uncertainty set has also been developed in \cite{liu2022distributionally} for robust discounted MDPs. Its extension to our average-reward setting is similar. 

\textbf{Wasserstein Distance Uncertainty Sets.}
Consider the metric space $(\mathcal{S},d)$ by defining some distance metric $d$. For some parameter $l\in[1,\infty)$ and two distributions $p,q\in\Delta(\mathcal{S})$, define the $l$-Wasserstein distance between them as 
$W_l(q,p)=\inf_{\mu\in\Gamma(p,q)}\|d\|_{\mu,l}$, where $\Gamma(p,q)$ denotes the distributions over $\mathcal{S}\times\mathcal{S}$ with marginal distributions $p,q$, and $\|d\|_{\mu,l}=\big(\mE_{(X,Y)\sim \mu}\big[d(X,Y)^l\big]\big)^{1/l}$. The Wasserstein distance uncertainty set is then defined as 
\begin{align}
    \cp^a_s=\left\{q\in\Delta(|\mcs|): W_l(\kp^a_s,q)\leq \delta \right\}.
\end{align}
To solve the support function w.r.t. the Wasserstein distance set, we first prove the following duality lemma.
\begin{lemma}
It holds that 
\begin{align}\label{eq:wd dual}
    &\sigma_{\cp^a_s}(V)=\sup_{\lambda\geq 0}\left(-\lambda\delta^l+\mE_{\kp^a_{s}}\big[\inf_{y}\big(V(y)+\lambda d(S,y)^l \big)\big] \right).
\end{align}
\end{lemma}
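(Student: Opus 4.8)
The plan is to recast the support function $\sigma_{\cp^a_s}(V)=\min_{q\in\cp^a_s} qV$ as a finite-dimensional linear program over transport plans and then invoke strong LP duality. First I would rewrite the constraint $W_l(\kp^a_s,q)\le\delta$ in terms of an explicit coupling: since raising to the $l$-th power is monotone, $W_l(\kp^a_s,q)\le\delta$ holds if and only if there is a joint law $\mu\in\Gamma(\kp^a_s,q)$ with $\mE_{(X,Y)\sim\mu}[d(X,Y)^l]\le\delta^l$, and this infimum over couplings is attained because $\mcs$ is finite. Because the objective only sees the second marginal, $qV=\mE_{Y\sim q}[V(Y)]=\mE_\mu[V(Y)]$, the whole problem can be lifted to the transport plan $\mu$ alone:
\begin{align}
\sigma_{\cp^a_s}(V)=\min_{\mu}\Big\{\mE_\mu[V(Y)]:\ \text{1st marginal of }\mu=\kp^a_s,\ \mE_\mu[d(X,Y)^l]\le\delta^l\Big\}.\nn
\end{align}
The second marginal is then automatically a valid $q\in\Delta(\mcs)$, so no feasibility is lost and nothing outside $\cp^a_s$ is introduced.

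Next, since $\mcs$ is finite, $\mu$ is just a nonnegative $|\mcs|\times|\mcs|$ matrix and the displayed problem is a finite linear program: minimize $\sum_{x,y}\mu(x,y)V(y)$ subject to the row-sum (marginal) equalities $\sum_y\mu(x,y)=\kp^a_s(x)$, the single inequality $\sum_{x,y}\mu(x,y)d(x,y)^l\le\delta^l$, and $\mu\ge 0$. I would attach a multiplier $\lambda\ge 0$ to the transport-cost inequality and multipliers $\beta(x)\in\mathbb R$ to the marginal equalities, and form the Lagrangian. Minimizing over $\mu\ge 0$ forces the coefficient $V(y)+\lambda d(x,y)^l-\beta(x)$ of each $\mu(x,y)$ to be nonnegative (otherwise the infimum is $-\infty$), which is exactly $\beta(x)\le\inf_y\big(V(y)+\lambda d(x,y)^l\big)$; at the optimum these coefficients vanish on the support, leaving the dual objective $-\lambda\delta^l+\sum_x\beta(x)\kp^a_s(x)$. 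Since $\kp^a_s\ge 0$ the optimal choice is $\beta(x)=\inf_y(V(y)+\lambda d(x,y)^l)$, and the dual reduces to
\begin{align}
\sup_{\lambda\ge 0}\Big(-\lambda\delta^l+\mE_{\kp^a_s}\big[\inf_y\big(V(y)+\lambda d(S,y)^l\big)\big]\Big),\nn
\end{align}
which is precisely \eqref{eq:wd dual}. Here $\inf_y$ is a minimum over the finite set $\mcs$, so there are no measurability or interchange-of-limits concerns.

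The remaining point, and the main thing to verify, is that strong duality holds with no gap. For a finite LP this follows as soon as the primal is feasible and bounded: feasibility is witnessed by the diagonal coupling $\mu(x,y)=\kp^a_s(x)\mathbbm{1}_{x=y}$, whose transport cost is $\sum_x\kp^a_s(x)d(x,x)^l=0\le\delta^l$ and whose second marginal is $\kp^a_s\in\cp^a_s$, while boundedness is immediate from $\mE_\mu[V(Y)]\ge\min_y V(y)>-\infty$. Hence LP strong duality closes the gap and the weak-duality inequality becomes the claimed equality. The hard part is therefore not the duality theorem itself but making the lift to $\mu$ rigorous, in particular checking that attainment of the coupling infimum lets us collapse the two-stage $\min_q\min_\mu$ into a single minimization and that the second marginal sweeps out exactly $\cp^a_s$; both are routine on a finite state space, though in a general space they would instead require a Blanchet--Murthy type interchangeability argument.
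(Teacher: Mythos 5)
Your proof is correct, but it takes a genuinely different route from the paper's. The paper does not redo the duality from scratch: it quotes the general Wasserstein strong-duality theorem of Gao--Kleywegt (stated in the appendix as a lemma with the growth-rate condition $\kappa_p<\infty$) and then verifies that condition in one line, namely that $\sum_{x}\kp^a_s(x)\Phi(\lambda,x)=\sum_x \kp^a_s(x)\inf_{y}\left(\lambda d(x,y)^l+V(y)\right)\geq -\|V\|>-\infty$ for every $\lambda\geq 0$, so $\kappa_{\kp^a_s}=0$ and the cited theorem applies. You instead give a self-contained argument: lift the constraint $W_l(\kp^a_s,q)\le\delta$ to an explicit coupling (using attainment of the coupling infimum on a finite space), observe the objective depends only on the second marginal so the problem collapses to a single finite LP over transport plans, and then derive the dual by hand, with the multiplier $\beta(x)$ optimized to $\inf_y\big(V(y)+\lambda d(x,y)^l\big)$ and strong duality closed by primal feasibility (the diagonal coupling) and boundedness. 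What your approach buys is transparency and independence from the external reference --- every step is elementary finite-dimensional LP duality, and the reader sees exactly where the inner infimum in the dual comes from. What the paper's approach buys is brevity and generality: checking $\kappa_p<\infty$ is a one-line computation, and the cited theorem would continue to apply on non-finite state spaces where your LP argument (and the attainment and interchange steps you correctly flag as the delicate points) would need the Blanchet--Murthy machinery you mention. Both proofs establish the identity; yours is a valid, more elementary substitute in the tabular setting considered here.
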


Thus, the support function can be solved using its dual form, and the estimator can then be constructed following  \eqref{eq:est1}. 


\section{Experiments}\label{sec:exp}
We numerically verify our previous convergence results and demonstrate the robustness of our algorithms. 
Additional experiments can be found in \Cref{sec:add_exp}.

\subsection{Convergence of Robust RVI TD and Q-Learning}\label{sec:exp_gar}
We first verify the convergence of our robust RVI TD and Q-learning algorithms under a Garnet problem $\mathcal{G}(30,20)$ \cite{archibald1995generation}. There are $30$ states and $20$ actions. The nominal transition kernel $\kp=\{ \kp^a_s,s\in\mcs,a\in\mca \}$ is randomly generated by a normal distribution: $\kp^a_s \sim \mathcal{N}(1,\sigma^a_s)$ and then normalized, and the reward function $r(s,a)\sim \mathcal{N}(1,\mu^a_s)$, where $\mu^a_s,\sigma^a_s \sim \textbf{Uniform}[0,100]$. We set $\delta=0.4$, $\alpha_n=0.01$,  $f(V)=\frac{\sum_s V(s)}{|\mcs|}$ and $f(Q)=\frac{\sum_{s,a}Q(s,a)}{|\mcs||\mca|}$. Due to the space limit, we only show the results under the Chi-square and Wasserstein Distance models. The results under the other three uncertainty sets are presented in \Cref{sec:add_exp}.

For policy evaluation, we evaluate the robust average-reward of the uniform policy $\pi(a|s)=\frac{1}{|\mca|}$. We implement our robust RVI TD algorithm under different uncertainty models. We run the algorithm independently for $30$ times and plot the average value of $f(V)$ over all $30$ trajectories. We also plot the 95th and 5th percentiles of the 30 curves as the upper and lower envelopes of the curves. To compare, we plot the true robust average-reward computed using the model-based robust value iteration method in \cite{wang2023robust}. It can be seen from the results in Figure \ref{Fig.TD} that our robust RVI TD algorithm converges to the true robust average-reward value.

\begin{figure}[!h]
\vskip -0.2in
\begin{center}
\subfigure{
\label{Fig.CSTD}
\includegraphics[width=0.47\linewidth]{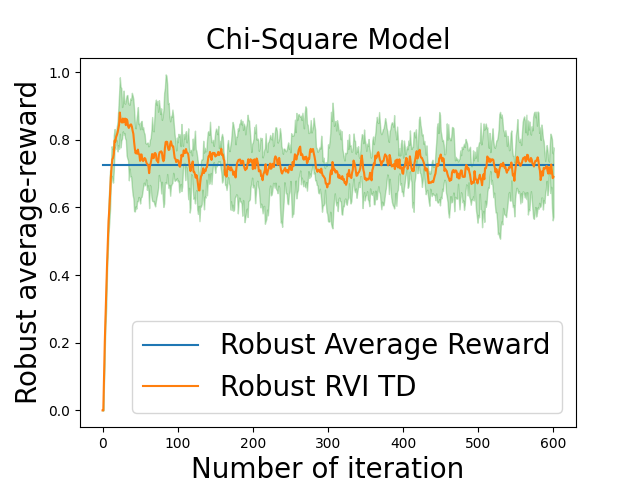}}
\subfigure{
\label{Fig.WDTD}
\includegraphics[width=0.47\linewidth]{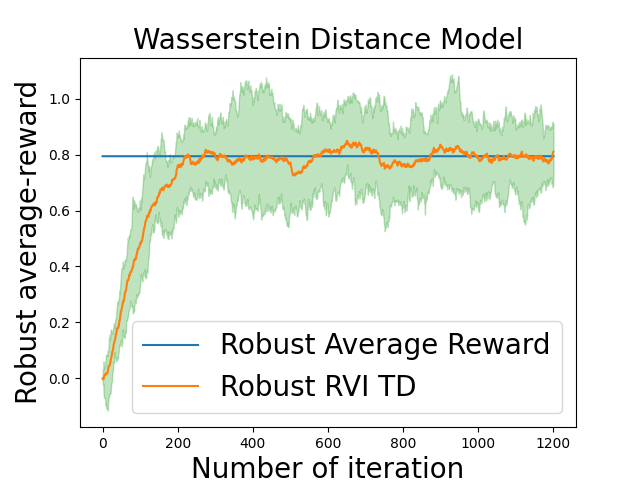}}
\caption{Robust RVI TD Algorithm.}
\label{Fig.TD}
\end{center}
\vskip -0.2in
\end{figure}
 
We then consider policy optimization. We run our robust RVI Q-learning independently for 30 times. The curves in Figure \ref{Fig.Q} show the average value of $f(Q)$ over 30 trajectories, and the upper/lower envelopes are the 95/5 percentiles. We also plot the optimal robust average-reward $g^*_\cp$ computed by the model-based RVI method in \cite{wang2023robust}. Our robust RVI Q-learning converges to the optimal robust average-reward $g^*_\cp$ under each uncertainty set, which verifies our theoretical results. 
\begin{figure}[!h]
\vskip -0.2in
\begin{center}
\subfigure{
\label{Fig.CSQ}
\includegraphics[width=0.47\linewidth]{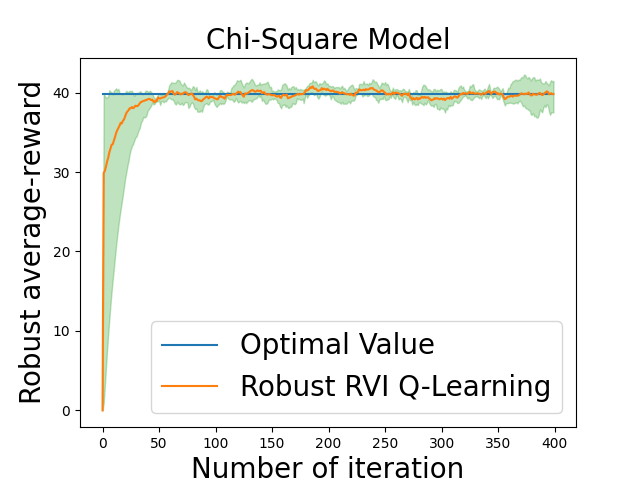}}
\subfigure{
\label{Fig.WDQ}
\includegraphics[width=0.47\linewidth]{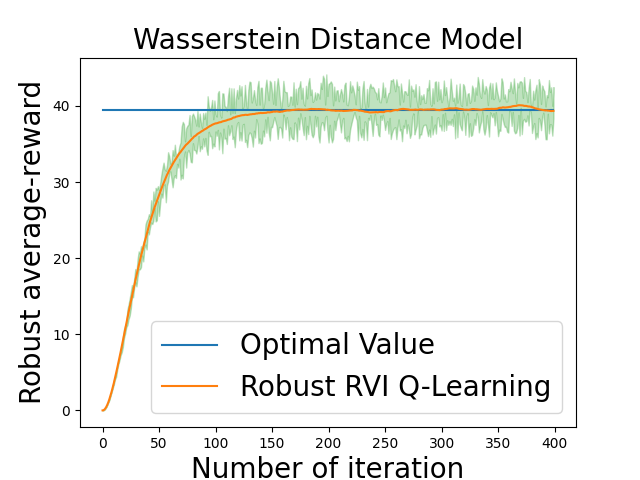}}
\caption{Robust RVI Q-Learning Algorithm.}
\label{Fig.Q}
\end{center}
\vskip -0.2in
\end{figure}
\subsection{Robustness of Robust RVI Q-Learning}
We then demonstrate the robustness of our robust RVI Q-learning  by showing that our method achieves a higher average-reward when there is model deviation between training and evaluation. 
\subsubsection{Recycling Robot}
We first consider the recycling robot problem (Example 3.3 \cite{sutton2018reinforcement}). A mobile robot running on a rechargeable battery aims to collect empty soda cans. It has 2 battery levels: low and high. The robot can either 1) search for empty cans; 2) remain stationary and wait for someone to bring it a can; 3) go back to its home base to recharge. Under low (high) battery level, the robot finds an empty can with probabilities $\alpha$ ($\beta$), and remains at the same battery level.  If the robot goes out to  search but finds nothing, it will run out of its battery and can only be carried back by human. More details can be found in \cite{sutton2018reinforcement}. 

In this experiment, the uncertainty lies in the probabilities $\alpha,\beta$ of finding an empty can if the robot chooses the action `search'. We set $\delta=0.4$ and implement our algorithms and vanilla Q-learning under the nominal environment ($\alpha=\beta=0.5$) with stepsize $0.01$. To show the difference among the policies that the algorithms learned, we plot the difference of $Q$ values at low battery level in \Cref{Fig.robot1}. In the low battery level, the robust algorithms find conservative policies which choose to wait instead of search, whereas the vanilla Q-learning finds a policy that chooses to search. To test the robustness of the obtained policies, we evaluate the average-reward of the learned policies in perturbed environments. Specifically, let $x$ denote the amplitude of the perturbation. Then, we estimate the worst performance of the two policies over the testing uncertainty set $(0.5-x,0.5+x)$, and plot them in \Cref{Fig.robot2}. It can be seen that when the perturbation is small, the true worst-case kernels (w.r.t. $\delta$ during training) are far from the testing environment, and hence the vanilla Q-learning has a higher reward; however, as the perturbation level becomes larger, the testing environment gets closer to the worst-case kernels, and then our robust algorithms perform better. It can be seen that the performance of Q-learning decreases rapidly while our robust algorithm is stable and outperforms the non-robust Q-learning. This implies that our algorithm is robust to the model uncertainty.


\begin{figure}[ht]
\vskip -0.2in
\begin{center}
\subfigure[$Q(\text{low,wait})$-$Q(\text{low,search})$ ]{
\label{Fig.robot1}
\includegraphics[width=0.47\linewidth]{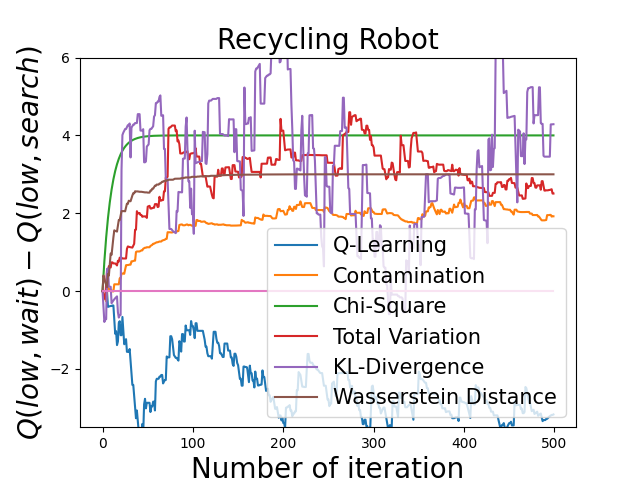}}
\subfigure[Perturbed environment]{
\label{Fig.robot2}
\includegraphics[width=0.47\linewidth]{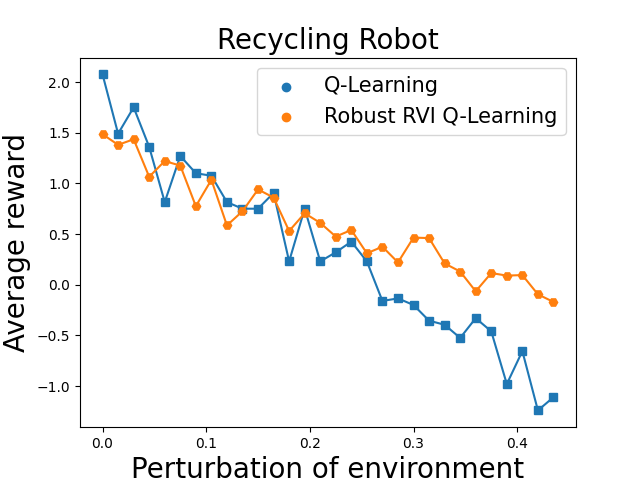}}
\caption{Recycling Robot.}
\label{Fig.robot}
\end{center}
\vskip -0.2in
\end{figure}

\subsubsection{Inventory Control Problem}
We now consider the supply chain problem \cite{giannoccaro2002inventory,kemmer2018reinforcement,liu2022distributionally}. At the beginning of each month, the manager of a warehouse inspects the current inventory of a product. Based on the current stock, the manager decides whether or not to order additional stock from a supplier. During this month, if the customer demand is satisfied, the warehouse can make a sale and obtain profits; but if not, the warehouse will obtain a penalty associated with being unable to satisfy customer demand for the product. The warehouse also needs to pay the holding cost for the remaining stock and new items ordered. The goal is to maximize the average profit. 

We let $s_t$ denote the inventory at the beginning of the $t$-th month, $D_t$ be a random demand during this month, and $a_t$ be the number of units ordered by the manager. We assume that $D_t$ follows some distribution and is independent over time. When the agent takes action $a_t$, the order cost is $a_t$, and the holding cost is $3 \cdot (s_t+a_t)$. If the demand $D_t \leq s_t+a_t$, then selling the item brings $5 \cdot D_t$ in total; but if the demand $D_t>s_t+a_t$, then there will not be any sale and a penalty of $-15$ will be received. We set $\mcs=\{0,1,...,16\}$ and $\mca=\{0,...,8\}$. 

We first set $\delta=0.4$ and  $\alpha_t=0.01$, and implement our algorithms and vanilla Q-learning under the nominal environment where $D_t\sim \textbf{Uniform}(0,16)$ is generated following the uniform distribution. To verify the robustness, we test the obtained policies under different perturbed environments. More specifically, we perturb the distribution of the demand to $D_t\sim U_{(m,b)}$, where 
\begin{equation}
    U_{(m,b)}(x)= \left\{ \begin{array}{lr} \frac{1}{|\mcs|}+b\frac{|\mcs|-2}{2|\mcs|}, \text{ if } x\in\{m,m+1\},&\nn\\
    \frac{1-b}{|\mcs|}, \text{ else}.& 
    \end{array}
    \right.
\end{equation}

The results are plotted in \Cref{Fig.ic}. We first fix $m=0$ and plot the performance under different values of $b$ in \Cref{Fig.icb}, then we fix $b=0.25$ and plot the performance under different values of $m$ in \cref{Fig.icm}. 

As the results show, when $b$ is small, i.e., the perturbation of the environment is small, the non-robust Q-learning obtains higher reward than our robust methods; as $b$ becomes larger, the performance of the non-robust method decreases rapidly, while our robust methods are more robust and outperform the non-robust one. When $b$ is fixed, our robust methods outperform  the non-robust Q-learning, which also demonstrates the robustness of our methods. 
\begin{figure}[ht]
\vskip -0.2in
\begin{center}
\subfigure[$m=0$ ]{
\label{Fig.icb}
\includegraphics[width=0.47\linewidth]{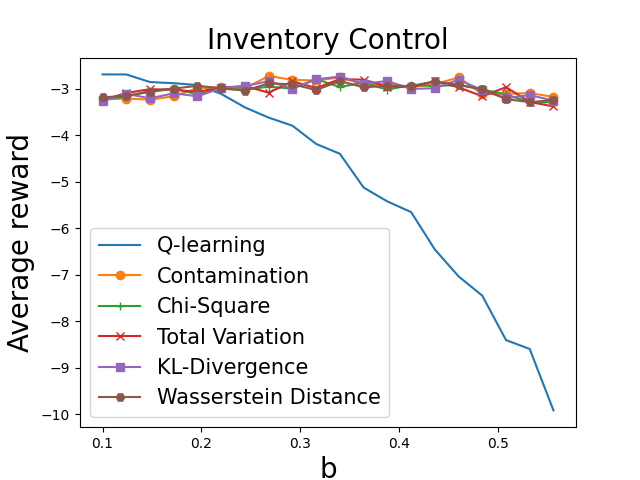}}
\subfigure[$b=0.25$]{
\label{Fig.icm}
\includegraphics[width=0.47\linewidth]{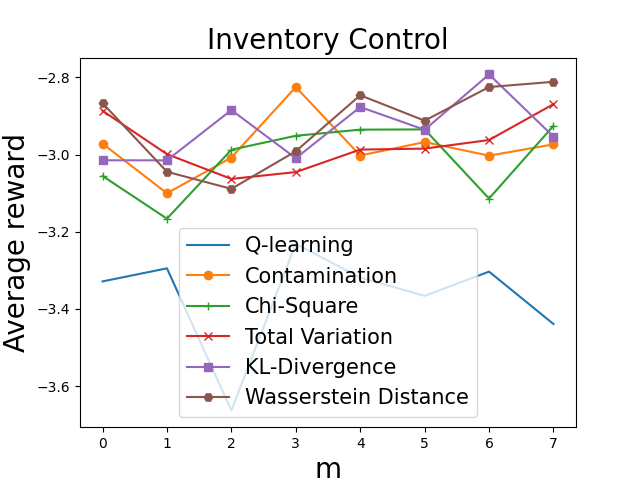}}
\caption{Inventory Control.}
\label{Fig.ic}
\end{center}
\vskip -0.2in
\end{figure}
\section{Conclusion}
In this paper, we developed the first model-free algorithms with provable convergence and optimality guarantee for robust average-reward RL under a broad range of uncertainty set models. We characterized the fundamental structure of solutions to the robust average-reward Bellman equation, which is crucial for the convergence analysis. We designed model-free robust algorithms based on the ideas of relative value iteration for non-robust average-reward MDPs and the robust average-reward Bellman equation. 
We developed concrete solutions to five popular uncertainty sets, where we generalized the idea of multi-level Monte-Carlo and constructed an unbiased estimate of the non-linear robust average-reward Bellman operator.  

\section{Acknowledgments}
This work is supported by the National Science Foundation under Grants CCF-2106560, CCF-2007783, CCF-2106339, and CCF-1552497. This material is based upon work supported under the AI Research Institutes program by National Science Foundation and the Institute of Education Sciences, U.S. Department of Education through Award \# 2229873 - National AI Institute for Exceptional Education. Any opinions, findings and conclusions or recommendations expressed in this material are those of the author(s) and do not necessarily reflect the views of the National Science Foundation, the Institute of Education Sciences, or the U.S. Department of Education.

\bibliography{icml}
\bibliographystyle{icml2023}

\newpage
\appendix
\onecolumn

\section{Proof of \Cref{lemma:1}}
We construct the following example. 
\begin{example}\label{ex}
Consider an MDP with 3 states (1,2,3) and only one action $a$, and set a $(s,a)$-rectangular uncertainty set $\cp=\cp^a_1\bigotimes\cp^a_2\bigotimes\cp^a_3$ where $\mathcal{P}^a_1=\{{\kp^a_1}_1,{\kp^a_1}_2 \}$, $\mathcal{P}^a_2=\{(0,0,1)^\top\}$ and $\mathcal{P}^a_3=\{(0,1,0)^\top\}$, where ${\kp^a_1}_1=(0,1,0)^\top,{\kp^a_1}_2=(0,0,1)^\top$. Hence, the uncertainty set contains two transition kernels $\cp=\{\kp_1,\kp_2\}$. The reward of each state is set to be $r=(r_1,r_2,r_3)$. The only stationary policy $\pi$ in this example is $\pi(i)=a, \forall i$.
\end{example}
Note that this robust MDP is a unichain and hence satisfies Assumption \ref{ass:sameg} with $g^\pi_{\kp_1}(1)=g^\pi_{\kp_1}(2)=g^\pi_{\kp_1}(3), g^\pi_{\kp_2}(1)=g^\pi_{\kp_2}(2)=g^\pi_{\kp_2}(3)$. 

Under both transition kernels $\kp_1,\kp_2$, the average-reward are identical: $g^\pi_{\kp_1}=g^\pi_{\kp_2}=0.5r_2+0.5r_3$. Hence, both $\kp_1,\kp_2$ are the worst-case transition kernels.

 According to Section A.5 of \cite{puterman1994markov}, the relative value functions w.r.t. $\kp_1,\kp_2$ can be computed as $$V^\pi_{\kp_1}=\bigg(r_1-\frac{1}{4}r_2-\frac{3}{4}r_3,\frac{1}{4}r_2-\frac{1}{4}r_3,-\frac{1}{4}r_2+\frac{1}{4}r_3\bigg)^\top,$$ $$V^\pi_{\kp_2}=\bigg(r_1-\frac{3}{4}r_2-\frac{1}{4}r_3,\frac{1}{4}r_2-\frac{1}{4}r_3,-\frac{1}{4}r_2+\frac{1}{4}r_3\bigg)^\top.$$ When $r_3>r_2$, only $V^\pi_{\kp_1}$ is the solution to \eqref{eq:bellman}; and when $r_2>r_3$, only $V^\pi_{\kp_2}$ is the solution to \eqref{eq:bellman}. Hence, this proves \Cref{lemma:1} and implies that not any relative value function w.r.t. a worst-case transition kernel is a solution to \eqref{eq:bellman}.

\section{Robust RVI TD Method for Policy Evaluation}
We define the following notation:
\begin{align}
    r_\pi(s)&\triangleq \sum_a \pi(a|s)r(s,a),\nn\\
    \sigma_{\cp_s}(V) &\triangleq \sum_a \pi(a|s) \sigma_{\cp^a_s}(V),\nn\\
    \sigma_\cp(V)&\triangleq (\sigma_{\cp_{s_1}}(V),\sigma_{\cp_{s_2}}(V),...,\sigma_{\cp_{s_{|\mcs|}}}(V))\in\mathbb{R}^{|\mcs|}. \nn
\end{align}

\subsection{Proof of \Cref{thm:robust Bellman}}
\begin{theorem}[Restatement of \Cref{thm:robust Bellman}]
If $(g,V)$ is a solution to the robust Bellman equation
\begin{align}\label{eq:32}
    V(s)=\sum_a \pi(a|s) (r(s,a)-g+\sigma_{\cp^a_s}(V)), \forall s,
\end{align} 
then
1) $g=g^\pi_\cp$ \cite{wang2023robust};
2) $\kp_V\in\Omega^\pi_g$; 
3) $V=V^\pi_{\kp_V}+ce$ for some $c\in\mathbb R$.
\end{theorem}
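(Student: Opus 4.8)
The plan is to reduce the nonlinear robust Bellman equation \eqref{eq:32} to an ordinary (linear) non-robust average-reward Bellman equation by ``freezing'' the worst-case kernel $\kp_V$. The key observation is that, by the very definition of $\kp_V(s,a)$ as a minimizer of $pV$ over $\cp^a_s$, the exact identity $\sigma_{\cp^a_s}(V)=\kp_V(s,a)V$ holds when evaluated at the particular solution $V$. Substituting this into \eqref{eq:32} rewrites the right-hand side as $\sum_a \pi(a|s)(r(s,a)-g+\kp_V(s,a)V)$, which is precisely the non-robust average-reward Bellman equation for the MDP with transition kernel $\kp_V$ under policy $\pi$.

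First I would verify that $\kp_V\in\cp$: each component $\kp_V(s,a)\in\cp^a_s$, and since $\cp$ is $(s,a)$-rectangular, the collection $\kp_V=\{\kp_V(s,a)\}$ lies in $\cp$. Consequently Assumption \ref{ass:sameg} guarantees that the Markov chain induced by $\pi$ under $\kp_V$ is a unichain. This lets me invoke the standard characterization of solutions to the non-robust average-reward Bellman equation \cite{puterman1994markov}: under the unichain condition, $(g,V)$ solves that equation if and only if $g=g^\pi_{\kp_V}$ and $V=V^\pi_{\kp_V}+ce$ for some $c\in\mathbb R$. This immediately yields result 3).

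Result 2) then follows by combining result 1) with the reduction: result 1) gives $g=g^\pi_\cp$ (established in \cite{wang2023robust}), while the reduction gives $g=g^\pi_{\kp_V}$, so $g^\pi_{\kp_V}=g^\pi_\cp$, which is exactly the statement $\kp_V\in\Omega^\pi_g$ by definition of the worst-case kernel set.

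The hard part will be conceptual rather than computational: recognizing that although the robust operator is genuinely nonlinear in the transition kernel, evaluating it at the fixed solution $V$ collapses the inner minimization into a single linear term, so that $\kp_V$ ``linearizes'' the equation into a form where Puterman's non-robust theory applies verbatim. The points requiring care are that $\kp_V$ really does lie in $\cp$ (so the unichain assumption is in force) and that the minimizer need not be unique --- but any selection achieving $\sigma_{\cp^a_s}(V)$ produces the same frozen equation, so the argument is unaffected.
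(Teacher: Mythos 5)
Your proposal is correct and follows essentially the same route as the paper: freeze the minimizing kernel $\kp_V$ so that $\sigma_{\cp^a_s}(V)=\kp_V(s,a)V$, observe that \eqref{eq:bellman} then becomes the non-robust average-reward Bellman equation for $(\pi,\kp_V)$, and invoke Theorem 8.2.6 of \cite{puterman1994markov} to get $g=g^\pi_{\kp_V}$ and $V=V^\pi_{\kp_V}+ce$, from which part 2) follows by comparing with $g=g^\pi_\cp$. The only difference is that the paper also reproves part 1) internally via an iteration/sandwich argument ($ge\leq r_\pi+(\kp^\pi-I)V$ for every $\kp\in\cp$, iterated and averaged to give $g\leq g^\pi_\kp$, then combined with $g=g^\pi_{\kp_V}$), whereas you take part 1) as given from \cite{wang2023robust}, which the statement's own citation permits.
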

\begin{proof}
1). The robust Bellman equation in \eqref{eq:32} can be rewritten as 
\begin{align} 
    g+V(s)-r_\pi(s)=\sigma_{\cp_s}(V), \forall s\in\mcs. 
\end{align}
From the definition, it follows that
\begin{align}
    \sigma_{\cp_s}(V)= \sum_a \pi(a|s) \min_{\kp^a_s\in\cp^a_s} \kp^a_s V.
\end{align}
Hence, for any transition kernel $\kp=(\kp^a_s) \in \bigotimes_{s,a} \cp^a_s$, 
\begin{align} 
    g+V(s)-r_\pi(s)-\sum_a\pi(a|s)\kp^a_s V \leq  0, \forall s.
\end{align}
It can be further rewritten in matrix form as:
\begin{align}\label{eq:36}
    ge\leq r_\pi + (\kp^\pi-I) V,
\end{align}
where $\kp^\pi$ is the state transition matrix induced by $\pi$ and $\kp$, i.e., the $s$-th row of $\kp^\pi$ is 
\begin{align}
    \sum_a \pi(a|s) \kp^a_s. 
\end{align}
Note that $\kp^\pi$ has non-negative components since it is a transition matrix. Multiplying by $\kp^\pi$ on both sides, we have that 
\begin{align}\label{eq:38}
    \kp^\pi ge =ge&\leq\kp^\pi r_\pi +\kp^\pi(\kp^\pi-I) V,\nn\\
    ge&\leq (\kp^\pi)^2 r_\pi +(\kp^\pi)^2(\kp^\pi-I) V,\nn\\
    ...\nn\\
    ge&\leq (\kp^\pi)^{n-1} r_\pi +(\kp^\pi)^{n-1}(\kp^\pi-I) V.
\end{align}
Now, by summing up all these inequalities in \eqref{eq:36} and \eqref{eq:38}, we have that 
\begin{align}
    nge\leq \sum^{n-1}_{i=0} (\kp^\pi)^i r_\pi+ ((\kp^\pi)^n-I)V,
\end{align}
and hence, 
\begin{align}
    ge\leq \frac{\sum^{n-1}_{i=0} (\kp^\pi)^i r_\pi}{n}+ \frac{((\kp^\pi)^n-I)V}{n}. 
\end{align}
Let $n\to\infty$, and we have that 
\begin{align}
    ge&\leq \lim_{n\to\infty} \frac{\sum^{n-1}_{i=0} (\kp^\pi)^i r_\pi}{n}+ \lim_{n\to\infty} \frac{((\kp^\pi)^n-I)V}{n}\nn\\
    &=g^\pi_{\kp}e, 
\end{align}
where the last inequality is from the definition of $g^\pi_\kp$ and the fact that $\lim_{n\to\infty} \frac{((\kp^\pi)^n-I)V}{n}=0$.  Hence, $g\leq g^\pi_{\kp}$ for any $\kp\in\bigotimes_{s,a}\cp^a_s$.

Consider the worst-case transition kernel $\kp_V$ of $V$. The robust Bellman equation can be equivalently rewritten as 
\begin{align}
    ge=r_\pi-V+\kp_V^\pi V. 
\end{align}
This means that $(g,V)$ is a solution to the non-robust Bellman equation for transition kernel $\kp_V$ and policy $\pi$:
\begin{align}
    xe=r_\pi-Y+\kp_V^\pi Y.
\end{align}
Thus, by Thm 8.2.6 from \cite{puterman1994markov}, 
\begin{align}
    g&=g^\pi_{\kp_V},\\
    V&=V^\pi_{\kp_V}+ce, \text{ for some } c\in\mathbb{R}. 
\end{align}
However, note that 
\begin{align}
    g^\pi_{\kp_V}=g\leq g^\pi_\cp=\min_{\kp\in\cp} g^\pi_\kp \leq g^\pi_{\kp_V},
\end{align}
thus, 
\begin{align}\label{eq:47}
    g^\pi_{\kp_V}=g= g^\pi_\cp. 
\end{align}

2). From \eqref{eq:47},
\begin{align}
    g^\pi_{\kp_V}= g^\pi_\cp\:.
\end{align}
It then follows from the definition of $\Omega^\pi_g$ that $\kp_V\in\Omega^\pi_g$. 

3). Since $(g,V)$ is a solution to the non-robust Bellman equation 
\begin{align}
    xe=r_\pi-Y+\kp_V^\pi Y,
\end{align}
the claim then follows from Theorem 8.2.6 in \cite{puterman1994markov}. 
\end{proof}

\subsection{Proof of \Cref{thm:td converge}}
\begin{theorem}(Restatement of Theorem \ref{thm:td converge})\label{thm:A2}
Under Assumptions \ref{ass:sameg},\ref{ass:f},\ref{ass:step-size}, $(f(V_n),V_n)$ converges to a (possible sample path dependent) solution to \eqref{eq:bellman} a.s..
\end{theorem}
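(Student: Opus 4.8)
The plan is to analyze Algorithm~\ref{alg:mfsTD} via the ODE method for stochastic approximation, in the spirit of the RVI analysis of \cite{abounadi2001learning}. Writing the update in vector form gives $V_{n+1}=V_n+\alpha_n\big(h(V_n)+M_{n+1}\big)$, where $h(V)\triangleq \mathbf{T}V-f(V)e-V$ and $M_{n+1}\triangleq \hat{\mathbf{T}}V_n-\mathbf{T}V_n$. By \eqref{eq:t1}, $\{M_{n+1}\}$ is a martingale-difference sequence with respect to the natural filtration $\mathcal F_n$ satisfying $\mE[\|M_{n+1}\|^2\mid\mathcal F_n]\le C'(1+\|V_n\|^2)$. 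The associated limiting ODE is $\dot V=h(V)$, and the core of the argument is to identify its equilibria, to prove convergence of its trajectories, and to transfer this conclusion to the iterates.

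First I would verify the regularity hypotheses of the ODE method. Since each support function $\sigma_{\cp^a_s}$ is a pointwise minimum of linear maps over a probability simplex, it is non-expansive in $\|\cdot\|_\infty$ and satisfies $\sigma_{\cp^a_s}(V+ce)=\sigma_{\cp^a_s}(V)+c$ and $\sigma_{\cp^a_s}(cV)=c\,\sigma_{\cp^a_s}(V)$ for $c\ge 0$. Hence $\mathbf{T}$ is a non-expansion, and with the $L_f$-Lipschitzness of $f$ this makes $h$ globally Lipschitz; together with the noise bound and Assumption~\ref{ass:step-size} the recursion falls in the scope of the Borkar--Meyn stability criterion. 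To certify $\sup_n\|V_n\|<\infty$ a.s., I would form the scaled field $h_\infty(V)=\lim_{c\to\infty}h(cV)/c=\mathbf{T}_\infty V-f(V)e-V$, where $\mathbf{T}_\infty V(s)\triangleq\sum_a\pi(a|s)\sigma_{\cp^a_s}(V)$ is the reward-free operator. Because the reward-free robust MDP has zero average-reward and zero relative value function, the only solutions of its Bellman equation are the constant vectors, so imposing $f(V^*)=0$ leaves the origin as the \emph{unique} equilibrium of $\dot V=h_\infty(V)$; the Lyapunov argument below then shows it is globally asymptotically stable, yielding stability of the iterates.

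The heart of the proof is the analysis of $\dot V=h(V)$. Set $T(V)\triangleq\mathbf{T}V-f(V)e$, so equilibria of the ODE are exactly the fixed points of $T$. Using $\mathbf{T}(V+ce)=\mathbf{T}V+ce$ and Assumption~\ref{ass:f} one checks $T(V+ce)=T(V)$, so $T$ is non-expansive in $\|\cdot\|_\infty$ and invariant under additive constants. A fixed point $V^*=T(V^*)$ means precisely that $(f(V^*),V^*)$ solves \eqref{eq:bellman}, whence result~1) of \Cref{thm:robust Bellman} forces $f(V^*)=g^\pi_\cp$; conversely, for any solution $(g^\pi_\cp,V)$ of \eqref{eq:bellman} the shift $V+(g^\pi_\cp-f(V))e$ is a fixed point, so the equilibrium set is nonempty and equals $\{V:(g^\pi_\cp,V)\text{ solves }\eqref{eq:bellman}\}$. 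For convergence I would use that non-expansiveness makes $t\mapsto\|V(t)-V^*\|_\infty$ non-increasing along trajectories for every equilibrium $V^*$ (via the upper Dini derivative), which gives a Lyapunov-type functional; combined with result~3) of \Cref{thm:robust Bellman}, which pins every equilibrium to the form $V^\pi_{\kp_V}+ce$ for a worst-case kernel $\kp_V\in\Omega^\pi_g$, this controls the geometry of the equilibrium set and rules out non-convergent limiting behavior of the flow.

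Finally, the ODE method yields that, almost surely, $V_n$ converges to the internally chain-transitive invariant set of $\dot V=h(V)$, namely the equilibrium set identified above. Since this set need not be a singleton---the worst-case kernel is not unique (\Cref{ex})---the limit may depend on the sample path, which is exactly the claimed conclusion; continuity of $f$ then gives $f(V_n)\to g^\pi_\cp$, so $(f(V_n),V_n)$ converges to a solution of \eqref{eq:bellman}. I expect the main obstacle to be the ODE convergence step: because $\mathbf{T}$ is only a non-expansion and not a contraction, and because the equilibrium set may be a nontrivial continuum, standard contraction or strict-Lyapunov arguments do not apply, and one must combine the non-expansiveness and shift-invariance of $T$ with the explicit solution structure of \Cref{thm:robust Bellman} to close the argument.
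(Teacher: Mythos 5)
Your overall architecture matches the paper's: rewrite the update as $V_{n+1}=V_n+\alpha_n(h(V_n)+M_{n+1})$, verify the Borkar--Meyn conditions, use the zero-reward robust MDP to show $h_\infty$ has the origin as its unique equilibrium, and characterize the equilibria of $\dot V=h(V)$ as the fixed points of $T(V)=\mathbf{T}V-f(V)e$, i.e.\ the solutions of \eqref{eq:bellman} with $f(V^*)=g^\pi_\cp$. All of that is sound and is essentially what the paper does.

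However, the step on which both your stability argument and your convergence argument rest --- ``$T$ is non-expansive in $\|\cdot\|_\infty$'' --- is false in general. From $T(V+ce)=T(V)$ you only get invariance under additive constants; the sup-norm Lipschitz constant of $T=\mathbf{T}-f(\cdot)e$ is $1+L_f$, not $1$ (e.g.\ with $|\mcs|=2$, $\mathbf{T}V=(V(2),V(1))$ and $f(V)=V(1)$, taking $V_1-V_2=(1,-1)$ gives $\|T V_1-T V_2\|_\infty=2$). Consequently the Dini-derivative computation does not make $t\mapsto\|V(t)-V^*\|_\infty$ non-increasing, and your Lyapunov functional collapses. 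What is true is that $T$ is non-expansive w.r.t.\ the \emph{span semi-norm}, but that semi-norm vanishes on constants and so cannot by itself control the constant component of the trajectory. The paper closes exactly this gap with Lemma~\ref{lemma:x=y+r}: it introduces the genuinely non-expansive operator $\tilde{\mathbf T}V=\mathbf{T}V-g^\pi_\cp e$, shows via a Gronwall argument in the span semi-norm that the solution of $\dot x=T(x)-x$ and the solution of $\dot y=\tilde{\mathbf T}y-y$ from the same initial condition differ by $r(t)e$ with $\dot r=-r+g^\pi_\cp-f(y(t))$, applies Theorem~3.1 of \cite{borkar1997analog} to conclude $y(t)$ converges to a single equilibrium $y^*$ (convergence to a point, not just to the equilibrium set, is itself nontrivial when the equilibria form a continuum), and then passes $r(t)\to g^\pi_\cp-f(y^*)$ through the variation-of-constants formula. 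Without this decomposition (or an equivalent device) your sketch does not establish either condition (4) of the stability criterion or the convergence of the flow of $\dot V=h(V)$.
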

We first show the stability of the robust RVI TD algorithm in the following lemma.
\begin{lemma}\label{Thm:bounded}
Algorithm \ref{alg:mfsTD} remains bounded during the update, i.e., 
\begin{align}
    \sup_n \|V_n\|<\infty, a.s..
\end{align}
\end{lemma}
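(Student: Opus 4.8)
The plan is to recognize \Cref{alg:mfsTD} as a stochastic approximation recursion and invoke the Borkar--Meyn stability criterion (the ODE method for boundedness of iterates). Writing $\mathbf T V = r_\pi + \sigma_\cp(V)$, define the mean drift $h(V)\triangleq \mathbf T V - f(V)e - V$ and the noise $M_{n+1}\triangleq \hat{\mathbf T}V_n - \mathbf T V_n$, so the update becomes $V_{n+1}=V_n+\alpha_n\big(h(V_n)+M_{n+1}\big)$. With respect to the filtration $\mathcal F_n=\sigma(V_0,\dots,V_n)$, the unbiasedness in \eqref{eq:t1} gives $\mE[M_{n+1}\mid\mathcal F_n]=0$, so $M_{n+1}$ is a martingale difference; and since $\|M_{n+1}\|^2=\max_s|M_{n+1}(s)|^2\le\sum_{s}M_{n+1}(s)^2$, the variance bound in \eqref{eq:t1} yields $\mE[\|M_{n+1}\|^2\mid\mathcal F_n]\le |\mcs|\,C(1+\|V_n\|^2)$, exactly the quadratic noise growth required. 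The Robbins--Monro condition is \Cref{ass:step-size}. It remains to verify the two structural hypotheses on $h$: Lipschitzness and the existence of a well-behaved scaling limit.

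For Lipschitzness, each $\sigma_{\cp^a_s}(\cdot)=\min_{p\in\cp^a_s}p(\cdot)$ is a pointwise minimum of linear maps whose coefficient vectors are probability distributions, hence $1$-Lipschitz in $\|\cdot\|$; therefore $\sigma_\cp$ is non-expansive, and together with $f$ being $L_f$-Lipschitz (\Cref{ass:f}) and the identity, $h$ is globally Lipschitz. For the scaling limit, I use positive homogeneity: for $c>0$, $\sigma_{\cp^a_s}(cV)=c\,\sigma_{\cp^a_s}(V)$ so $\sigma_\cp(cV)=c\,\sigma_\cp(V)$, and \Cref{ass:f} gives $f(cV)=cf(V)$. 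Hence
\begin{align}
h_c(V)\triangleq\frac{h(cV)}{c}=\frac{r_\pi}{c}+\sigma_\cp(V)-f(V)e-V \;\xrightarrow{c\to\infty}\; h_\infty(V)\triangleq\sigma_\cp(V)-f(V)e-V,
\end{align}
uniformly on compact sets, where $h_\infty$ is precisely the drift of the reward-free ($r\equiv 0$) robust RVI TD dynamics.

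It then suffices to show that the limiting ODE $\dot V=h_\infty(V)$ has the origin as its unique globally asymptotically stable equilibrium. Uniqueness I would obtain from \Cref{thm:robust Bellman} itself: any equilibrium solves $V=\sigma_\cp(V)-f(V)e$, i.e. $(f(V),V)$ solves \eqref{eq:bellman} for the zero-reward MDP, so by parts 1 and 3 of \Cref{thm:robust Bellman}, $f(V)=g^\pi_\cp=0$ and $V=V^\pi_{\kp_V}+ce=ce$ (the relative value function of a zero-reward chain vanishes); since $f(ce)=c=0$, we get $V=\bz$. For global asymptotic stability I would exploit that $\sigma_\cp$ is monotone, non-expansive in $\|\cdot\|$, and commutes with constant shifts, $\sigma_\cp(V+ke)=\sigma_\cp(V)+ke$; under the unichain \Cref{ass:sameg} this makes $V\mapsto\sigma_\cp(V)$ act as a contraction on the quotient $\mathbb R^{|\mcs|}/\mathrm{span}(e)$, while the offset term $-f(V)e$ pins down the $\mathrm{span}(e)$ component. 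Combining these suggests a Lyapunov function built from the span seminorm together with $|f(V)|$ that strictly decreases along trajectories away from the origin.

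The main obstacle is exactly this last step: because $\sigma_\cp$ is non-smooth and only non-expansive (not a strict $\|\cdot\|$-contraction), no naive quadratic Lyapunov function applies, and one must show that the span component genuinely contracts along the ODE flow under the unichain assumption. I expect to follow the template of the non-robust RVI analysis (Abounadi--Bertsekas--Borkar), adapting it to the nonlinear but still monotone, shift-commuting operator $\sigma_\cp$. Once the origin is established as the unique globally asymptotically stable equilibrium of $\dot V=h_\infty(V)$, all hypotheses of the Borkar--Meyn stability theorem hold, and the conclusion $\sup_n\|V_n\|<\infty$ a.s. follows.
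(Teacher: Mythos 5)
Your overall architecture is the same as the paper's: cast the update as a stochastic approximation $V_{n+1}=V_n+\alpha_n(h(V_n)+M_{n+1})$, verify the Lipschitz, stepsize, and martingale-difference/quadratic-variance conditions, compute the scaling limit $h_\infty(V)=\sigma_\cp(V)-f(V)e-V$, and invoke the Borkar--Meyn criterion once the ODE $\dot V=h_\infty(V)$ is shown to have the origin as its unique globally asymptotically stable equilibrium. Your identification of the equilibria is also exactly the paper's: an equilibrium solves the zero-reward robust Bellman equation, so parts 1 and 3 of \Cref{thm:robust Bellman} force $f(V)=0$ and $V=ce$, hence $V=0$. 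This is where part 3 of \Cref{thm:robust Bellman} earns its keep, and you use it correctly.

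The genuine gap is the one you flag yourself: global asymptotic stability. Your proposed route --- a Lyapunov function combining the span seminorm with $|f(V)|$, justified by the claim that under \Cref{ass:sameg} the map $V\mapsto\sigma_\cp(V)$ ``acts as a contraction'' on $\mathbb R^{|\mcs|}/\mathrm{span}(e)$ --- does not go through as stated. Unichain structure gives a multi-step ergodicity-coefficient contraction for a \emph{fixed} transition matrix; it does not give a one-step strict span contraction of the nonlinear operator $\sigma_\cp$, and without strictness the Lyapunov function does not strictly decrease away from the origin. The paper avoids needing any contraction: it introduces $\tilde{\mathbf T}_0V={\mathbf T}_0V-g^\pi_\cp e$, observes it is merely \emph{non-expansive}, and invokes Theorem 3.1 of \cite{borkar1997analog} to conclude that solutions of $\dot y=\tilde{\mathbf T}_0y-y$ converge to the equilibrium set, which part 3 of \Cref{thm:robust Bellman} identifies as $\{ce:c\in\mathbb R\}$. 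It then decomposes the actual flow as $x(t)=y(t)+r(t)e$ via \Cref{lemma:x=y+r} (whose proof uses non-expansiveness of $\tilde{\mathbf T}_0$ in the span seminorm plus Gronwall, not strict contraction), solves the scalar ODE $\dot r=-r+g^\pi_\cp-f(y(t))$ by variation of constants, and gets $x(t)\to ce-f(ce)e=0$. So while you have the right template in mind (Abounadi--Bertsekas--Borkar), the specific mechanism you propose to close the argument is the wrong one, and the step you label as the ``main obstacle'' is precisely the step that remains unproven in your write-up.
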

\begin{proof}
Denote by
\begin{align}
    h(V)\triangleq r_\pi + \sigma_\cp(V)-f(V)e-V. 
\end{align}
Then the update of robust RVI TD can be rewritten as 
\begin{align}
    V_{n+1}=V_n+\alpha_n (h(V_n)+M_{n+1}),
\end{align}
where $M_{n+1}\triangleq \hat{\mathbf{T}}V_n-r_\pi- \sigma_\cp(V)$ is the noise term. 

Further, define the limit function $h_\infty$: 
\begin{align}
    h_\infty(V)\triangleq\lim_{c\to\infty} \frac{h(cV)}{c}.
\end{align}
Then, from $\sigma_{\cp^a_s}(cV)=c\sigma_{\cp^a_s}(V)$ and $f(cV)=cf(V)$, it follows that
\begin{align}
   h_\infty(V)=\lim_{c\to\infty}\frac{r_\pi}{c}+\sigma_\cp(V)-f(V)e-V=\sigma_\cp(V)-f(V)e-V.
\end{align}

According to Section 2.1 and Section 3.2 of \cite{borkar2009stochastic}, it suffices to verify the following assumptions: 

(1). $h$ is Lipschitz;

(2). Stepsize $\alpha_n$ satisfies Assumption \ref{ass:step-size};

(3). Denoting by $\mathcal{F}_n$ the $\sigma$-algebra generated by $V_0,M_1,...,M_n$, then $\mE[M_{n+1}|\mathcal{F}_n]=0$, $\mE[\|M_{n+1}\|^2|\mathcal{F}_n]\leq K(1+\|V_n\|^2)$ for some constant $K>0$. 

(4). $h_\infty$ has the origin as its unique globally asymptotically stable equilibrium. 

First, note that
\begin{align}
    \|h(V_1)-h(V_2)\|&=\max_s \left|\sum_a \pi(a|s)(\sigma_{\cp^a_s}(V_1)-\sigma_{\cp^a_s}(V_2))-(f(V_1)-f(V_2))-(V_1(s)-V_2(s) )\right|\nn\\
    &\leq \max_s \left\{\left|\sum_a \pi(a|s)(\sigma_{\cp^a_s}(V_1)-\sigma_{\cp^a_s}(V_2))\right|+\left|(f(V_1)-f(V_2))\right|+\left|(V_1(s)-V_2(s) )\right|\right\}\nn\\
    &\leq (2+L_f)\|V_1-V_2\|,
\end{align}
where the last inequality follows from the fact that the support function $\sigma_\cp(\cdot)$ is $1$-Lipschitz and the assumptions on $f$ in Assumption \ref{ass:f}. Thus, $h$ is Lipschitz, which verifies (1).   

It is straightforward  that (3) is satisfied if $\mE[\hat{\mathbf{T}}V_n|\mathcal{F}_n]=r_\pi+\sigma_\cp(V_n)$ and $\text{Var}[\hat{\mathbf{T}}V_n|\mathcal{F}_n]\leq K(1+\|V_n\|^2)$. As discussed in \Cref{sec:td}, we assume the existence of an unbiased estimator $\hat{\mathbf{T}}$ with bounded variance here, and we will construct the estimator in \Cref{sec:case}.

Then, it suffices to verify condition (4), i.e., to show that the ODE 
\begin{align}\label{eq:stable eq}
    \dot{x}(t)=h_\infty(x(t))
\end{align}
has $0$ as its unique globally asymptotically stable equilibrium. 

Define an operator ${\mathbf T}_0(V)(s)\triangleq\sum_a \pi(a|s)\sigma_{\cp^a_s}(V)$. Then, any equilibrium $W$ of \eqref{eq:stable eq} satisfies
\begin{align}
    {\mathbf T}_0(W)-f(W)e-W=0.
\end{align}
This equation can be further rewritten as a set of equations: 
\begin{equation}\label{eq:stable eq2}
\left\{
\begin{aligned}
W=&{\mathbf T}_0(W)-ge, \\
g=&f(W).
\end{aligned}
\right.
\end{equation}
The equation in \eqref{eq:stable eq2} is the robust Bellman equation for a zero-reward robust MDP. Hence, from \Cref{thm:robust Bellman}, any solution $(g,W)$ to \eqref{eq:stable eq2} satisfies
\begin{align}\label{eq:35}
    g=g^\pi_\cp,
    W=V^\pi_\kp+ce,
\end{align}
where $V^\pi_\kp$ is the relative value function w.r.t. some worst-case transition kernel $\kp$ (i.e., $g^\pi_\kp=\min_{\kp\in\cp} g^\pi_\kp$), and some $c\in\mathbb{R}$. 

Hence, any equilibrium of \eqref{eq:stable eq} satisfies 
\begin{align}\label{eq:77}
    W=V^\pi_\kp+ce, f(W)=g^\pi_\cp. 
\end{align}
However, note that this robust Bellman equation is for a zero-reward robust MDP, hence for any $\kp$, 
\begin{align}
    g^\pi_\kp=\lim_{T\to\infty} \mE_\kp\left[\sum^{T-1}_{t=0} \frac{r_t}{T} \right]=0,\\
    V^\pi_\kp=\mE_\kp\left[ \sum^\infty_{t=0} (r_t-g^\pi_\kp)\right]=0, 
\end{align}
thus $g^\pi_\cp=0$ and $W=ce$ for some $c\in\mathbb{R}$. From \eqref{eq:77}, it follows that  $f(W)=f(ce)=0$, for any equilibrium $W$. From Assumption \ref{ass:f}, we have that $f(ce)=cf(e)=c=0$. This further implies that
\begin{align}
    W=V^\pi_\cp+ce=0. 
\end{align}
Thus, the only equilibrium of \eqref{eq:stable eq} is $0$. 

We then show that $0$ is globally asymptotically stable.  Recall that the zero-reward robust Bellman operator 
\begin{align}
    {\mathbf T}_0V(s)=\sum_a \pi(a|s)(\sigma_{\cp^a_s}(V)).
\end{align}
We further introduce two operators:
\begin{align}
    {\mathbf T}'_0V&\triangleq {\mathbf T}_0V-f(V)e,\\
    \tilde{\mathbf T}_0V&\triangleq {\mathbf T}_0V-g^\pi_\cp e.
\end{align}
Note that in the zero-reward robust MDP, $g^\pi_\cp=0$ and $\tilde{\mathbf T}_0={\mathbf T}_0$, but we introduce this notation for future use. 

Consider the ODEs w.r.t. these two operators: 
\begin{align}
    \dot{x}&={\mathbf T}_0'x-x,\label{eq:T'}\\
    \dot{y}&=\tilde{\mathbf T}_0y-y\label{eq:TT}. 
\end{align}
First, it can be easily shown that both ${\mathbf T}'_0$ and $\tilde{\mathbf T}_0$ are Lipschitz with constants $1+L_f$ and $1$, respectively. Hence, both two ODEs are well-posed. Also, it can be seen that \eqref{eq:T'} is the same as the ODE in \eqref{eq:stable eq}. 

Since the second equation \eqref{eq:TT} is a non-expansion (Lipschitz with parameter no larger than $1$), Theorem 3.1 of \cite{borkar1997analog} implies that any solution $y(t)$ to \eqref{eq:TT} converges to the set of
equilibrium points, i.e., 
\begin{align}
    y(t)\to \left\{W: W=\tilde{\mathbf T}_0W \right\}, a.s..
\end{align}
Similar to the discussion for ${\mathbf T}_0$, our \Cref{thm:robust Bellman} implies that the set of equilibrium points of \eqref{eq:TT} is $\{ W=ce: c\in\mathbb{R} \}$. Hence, for any solution $y(t)$ to \eqref{eq:TT}, $y(t)\to ce$ for some constant $k$ that may depend on the initial value of $y(t)$. 

Now, consider the solution $x(t)$ to \eqref{eq:T'}. According to Lemma \ref{lemma:x=y+r} (note that ${\mathbf T}_0$ here is a special case of ${\mathbf T}$ in Lemma \ref{lemma:x=y+r} with $r=0$), if the solutions $x(t),y(t)$ have the same initial value $x(0)=y(0)$, then 
\begin{align}
    x(t)=y(t)+r(t)e,
\end{align}
where $r(t)$ is a solution to $\dot{r}(t)=-r(t)+g^\pi_\cp-f(y(t)), r(0)=0$. 

Note that the solution $r(t)$ with $r(0)=0$ can be written as  
\begin{align}\label{eq:71}
    r(t)=\int^t_0 e^{-(t-s)}(g^\pi_\cp-f(y(s))) ds
\end{align}
by variation of constants formula \cite{abounadi2001learning}. If we denote the limit of $y(t)$ by $y^*=ce$, then $\lim_{t\to\infty} r(t)=g^\pi_\cp -f(y^*)$ (Lemma B.4 in \cite{wan2021learning}, Theorem 3.4 in \cite{abounadi2001learning}). Hence,  $x(t)=y(t)+r(t)e$ converges to $y^*+(g^\pi_\cp-f(y^*))e$, i.e., 
\begin{align}
    x(t)\to ce-f(ce)e=0.
\end{align}
Hence, any solution $x(t)$ to \eqref{eq:T'} converges to $0$, which is its unique equilibrium. 
This thus implies that 0 is the unique globally asymptotically stable equilibrium. Together with Theorem 3.7 in \cite{borkar2009stochastic}, it further implies the boundedness of $V_n$, which completes the proof. 
\end{proof}

We can readily prove \Cref{thm:A2}. 
\begin{proof}
In Lemma \ref{Thm:bounded}, we have shown that \begin{align}
    \sup_n \|V_n\|<\infty, a.s..
\end{align}
Thus, we have verified that conditions (A1-A3) and (A5) in \cite{borkar2009stochastic} are satisfied. Lemma 2.1 in \cite{borkar2009stochastic} thus implies that it suffices to study the solution to the ODE $\dot{x}(t)=h(x(t))$.

For the robust Bellman operator $\mathbf TV=r_\pi+\sigma_\cp(V)$, define 
\begin{align}
    {\mathbf T}'V&\triangleq {\mathbf T}V-f(V)e,\\
    \tilde{\mathbf T}V&\triangleq {\mathbf T}V-g^\pi_\cp e.
\end{align}
From Lemma \ref{lemma:x=y+r}, we know that if $x(t),y(t)$ are the solutions to equations 
\begin{align}
    \dot{x}&={\mathbf T}'x-x,\label{eq:Tx}\\
    \dot{y}&=\tilde{\mathbf T}y-y,\label{eq:Ty}
\end{align}
with the same initial value $x(0)=y(0)$, then 
\begin{align}
    x(t)=y(t)+r(t)e,
\end{align}
where $r(t)$ satisfies 
\begin{align}
    \dot{r}(t)=-r(t)+g^\pi_\cp-f(y(t)), r(0)=0. 
\end{align}
Thus, by the variation of constants formula, 
\begin{align}
r(t)=\int^t_0 e^{-(t-s)}(g^\pi_\cp-f(y(s))) ds. 
\end{align}
Note that $\tilde{\mathbf T}$ is also non-expansive, hence $y(t)$ converges to some equilibrium of \eqref{eq:Ty} (Theorem 3.1 of \cite{borkar1997analog}). The set of equilibrium points of \eqref{eq:Ty} can be characterized as
\begin{align}
    \left\{W:\tilde{\mathbf T}W=W \right\}=\left\{W:W=TW-g^\pi_\cp e \right\}=\left\{W: W(s)=\sum_a \pi(a|s)(r(s,a)-g^\pi_\cp+\sigma_{\cp^a_s}(W)), \forall s\in\mcs\right\}.
\end{align}
From \Cref{thm:robust Bellman}, any equilibrium of \eqref{eq:Ty} can be rewritten as
\begin{align}
    W=V^\pi_\kp+ce, \text{ for some }\kp\in\Omega^\pi_g,c\in\mathbb{R}.
\end{align}
Thus, $y(t)$ converges to an equilibrium denoted by $y^*$:
\begin{align}
    y(t)\to y^*\triangleq V^\pi_{\kp^*}+c^*e, \text{ for some }\kp^*\in\Omega^\pi_g, c^*\in\mathbb{R}. 
\end{align}
Similar to Lemma \ref{Thm:bounded}, it can be showed that $r(t)\to g^\pi_\cp-f(y^*)$ (Lemma B.4 in \cite{wan2021learning}, Theorem 3.4 in \cite{abounadi2001learning}). This further implies that 
\begin{align}\label{eq:84}
    x(t)\to y^*+(g^\pi_\cp-f(y^*))e= V^\pi_{\kp^*}+(c^*+g^\pi_\cp-f(y^*))e,
\end{align}
and we denote $m^*=c^*+g^\pi_\cp-f(y^*)$. 
Moreover, since $f$ is continuous (because it is Lipschitz), we have that 
\begin{align}
    f(x(t))&\to f(V^\pi_{\kp^*}+(c^*+g^\pi_\cp-f(y^*))e)\nn\\
    &=f(V^\pi_{\kp^*})+c^*+g^\pi_\cp-f(y^*)\nn\\
    &=f(V^\pi_{\kp^*})+c^*+g^\pi_\cp-f(V^\pi_{\kp^*}+c^*e)\nn\\
    &=f(V^\pi_{\kp^*})+c^*+g^\pi_\cp-f(V^\pi_{\kp^*})-c^*\nn\\
    &=g^\pi_\cp. 
\end{align}
Hence, we show that 
\begin{align}
    x(t)&\to V^\pi_{\kp^*}+m^*e,\\
     f(x(t))&\to g^\pi_\cp.
\end{align}
Following Lemma 2.1 from \cite{borkar2009stochastic}, we conclude that a.s., 
\begin{align}\label{eq:86}
    V_n&\to V^\pi_{\kp^*}+m^*e,\\
     f(V_n)&\to g^\pi_\cp,
\end{align}
which completes the proof. 
\end{proof}

\section{Robust RVI Q-Learning}
\subsection{Proof of \Cref{thm:optimal robust Bellman}}
Part of the following theorem is proved in \cite{wang2023robust}, but we include the proof for completeness. 
\begin{theorem}[Restatement of \Cref{thm:optimal robust Bellman}]
If $(g,Q)$ is a solution to the optimal robust Bellman equation
\begin{align}\label{eq:90}
    Q(s,a)=r(s,a)-g+\sigma_{\cp^a_s}(V_Q), \forall s,a ,
\end{align} 
then
1)  $g=g^*_\cp$ \cite{wang2023robust};
2) the greedy policy w.r.t. $Q$: $\pi_Q(s)=\arg\max_a Q(s,a)$ is an optimal robust policy \cite{wang2023robust};
3) $V_Q=V^{\pi_Q}_\kp+ce$ for some $\kp\in\Omega^{\pi_Q}_g, c\in\mathbb{R}$. 
\end{theorem}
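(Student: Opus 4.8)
The plan is to mirror the three-part structure of the proof of \Cref{thm:robust Bellman}, since the optimal robust Bellman equation \eqref{eq:90} is structurally analogous to \eqref{eq:bellman} but with the greedy maximization over actions replacing the fixed policy $\pi$. The key observation that unlocks everything is that at a solution $(g,Q)$, the greedy policy $\pi_Q$ reduces the optimal equation to an ordinary (policy-specific) robust Bellman equation, allowing me to invoke \Cref{thm:robust Bellman} directly.

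For parts 1) and 2), I would first establish the direction $g \leq g^\pi_\cp$ for every policy $\pi$ by the same averaging argument as in \Cref{thm:robust Bellman}: from $Q(s,a) = r(s,a) - g + \sigma_{\cp^a_s}(V_Q)$ and $V_Q(s) = \max_a Q(s,a) \geq \sum_a \pi(a|s) Q(s,a)$, I get for any $\kp \in \cp$ the matrix inequality $ge \leq r_\pi + (\kp^\pi - I)V_Q$, and iterating with the nonnegative stochastic matrix $\kp^\pi$ then taking Cesàro limits yields $g \leq g^\pi_\kp$, hence $g \leq g^*_\cp$. For the reverse direction, I specialize to the greedy policy $\pi_Q$: then $V_Q(s) = \sum_a \pi_Q(a|s) Q(s,a)$ holds with equality, so \eqref{eq:90} becomes exactly the robust Bellman equation \eqref{eq:bellman} for the policy $\pi_Q$ with value $V_Q$. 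Applying result 1) of \Cref{thm:robust Bellman} gives $g = g^{\pi_Q}_\cp$, and since $g^{\pi_Q}_\cp \leq g^*_\cp \leq g$ by the first bound, all three are equal; this proves $g = g^*_\cp$ and simultaneously that $\pi_Q$ attains the optimal robust average-reward, i.e. $\pi_Q$ is optimal robust.

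Part 3) then follows almost immediately: having reduced \eqref{eq:90} under the greedy policy $\pi_Q$ to the policy-evaluation robust Bellman equation \eqref{eq:bellman} for $\pi_Q$ satisfied by $(g, V_Q)$, I invoke result 3) of \Cref{thm:robust Bellman} (with $\pi = \pi_Q$), which states that any solution $V$ to \eqref{eq:bellman} satisfies $V = V^{\pi_Q}_{\kp_V} + ce$ for some $c \in \mathbb{R}$, where $\kp_{V_Q} \in \Omega^{\pi_Q}_g$. Setting $\kp = \kp_{V_Q}$ gives the claimed $V_Q = V^{\pi_Q}_\kp + ce$ with $\kp \in \Omega^{\pi_Q}_g$.

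The main obstacle I anticipate is justifying the equality $V_Q(s) = \sum_a \pi_Q(a|s) Q(s,a)$ cleanly when the greedy action is non-unique, i.e. ensuring $\pi_Q$ is well-defined (a tie-breaking choice among maximizers) and that the reduction to \eqref{eq:bellman} is valid for whatever measurable selection is made; under Assumption \ref{ass:g*same} the unichain property holds for every such $\pi_Q$, so the invocation of \Cref{thm:robust Bellman} is legitimate. A secondary subtlety is the inequality step $V_Q(s) \geq \sum_a \pi(a|s)Q(s,a)$ for the lower bound on $g$: this is just the elementary fact that the max dominates any convex combination, but it is what couples the control setting to the evaluation machinery, and I would state it explicitly before running the Cesàro-averaging argument.
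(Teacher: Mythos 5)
Your reduction for parts 2) and 3) is exactly the paper's route: since $V_Q(s)=\max_a Q(s,a)=\sum_a \pi_Q(a|s)Q(s,a)$, weighting \eqref{eq:optimal bellman} by $\pi_Q(a|s)$ and summing over $a$ shows that $(g,V_Q)$ solves the policy-evaluation equation \eqref{eq:bellman} for $\pi=\pi_Q$, and results 1)--3) of \Cref{thm:robust Bellman} then deliver $g=g^{\pi_Q}_\cp$ and $V_Q=V^{\pi_Q}_{\kp_{V_Q}}+ce$ with $\kp_{V_Q}\in\Omega^{\pi_Q}_g$. The difference is in part 1): the paper simply cites Theorem 7 of \cite{wang2023robust} for $g=g^*_\cp$ and then combines it with $g=g^{\pi_Q}_\cp$ to conclude optimality of $\pi_Q$, whereas you attempt a self-contained proof of $g=g^*_\cp$ via a Ces\`aro-averaging bound. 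That is a legitimate and arguably preferable plan, but your execution of that bound has the inequality directions reversed, and the step fails as written.

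Concretely, you claim that for every $\pi$ and every $\kp\in\cp$ one has $ge\leq r_\pi+(\kp^\pi-I)V_Q$, and conclude $g\leq g^\pi_\cp$ for every $\pi$. The latter cannot be right: combined with your own conclusion $g=g^*_\cp$ it would force all policies to have the same robust average-reward. The derivation also does not go through: from $V_Q(s)\geq\sum_a\pi(a|s)Q(s,a)$ you obtain $g\geq r_\pi(s)-V_Q(s)+\sum_a\pi(a|s)\sigma_{\cp^a_s}(V_Q)$, while $\sigma_{\cp^a_s}(V_Q)\leq\kp^a_s V_Q$ points the opposite way, so the two inequalities cannot be chained into $ge\leq r_\pi+(\kp^\pi-I)V_Q$. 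The correct statement is $ge\geq r_\pi+(\kp_{V_Q}^\pi-I)V_Q$ for the specific minimizing kernel $\kp_{V_Q}$; iterating with this nonnegative stochastic matrix and averaging gives $g\geq g^\pi_{\kp_{V_Q}}\geq g^\pi_\cp$ for every $\pi$, i.e., $g\geq g^*_\cp$ --- which is precisely the bound $g^*_\cp\leq g$ that your final chain $g=g^{\pi_Q}_\cp\leq g^*_\cp\leq g$ actually invokes (and which contradicts the bound you stated in the first paragraph). With the direction corrected, your argument is complete and yields a self-contained alternative to the paper's external citation for part 1); your remark about tie-breaking in $\arg\max_a$ is a fair point that the paper glosses over, and Assumption \ref{ass:g*same} indeed covers any such selection.
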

\begin{proof}
    Taking the maximum on both sides of \eqref{eq:90} w.r.t. $a$, we have that
\begin{align}\label{eq:orbq:max}
    \max_a Q(s,a)=\max_a \{r(s,a)-g+\sigma_{\cp^a_s}(V_Q) \}, \forall s\in\mcs. 
\end{align}
This is equivalent to
\begin{align}
    V_Q(s)=\max_a \{r(s,a)-g+\sigma_{\cp^a_s}(V_Q) \}, \forall s\in\mcs. 
\end{align}
By Theorem 7 in \cite{wang2023robust}, we can show that $g=g^*_\cp$, which proves claim (1). 

Recall that $V_Q(s)=\max_a Q(s,a)$. It can be also written as
\begin{align}
    V_Q(s)= \sum_a \pi_Q(a|s) Q(s,a).
\end{align}
Here, we slightly abuse the notation of $\pi_Q$, and use $\pi_Q(s)$ and $\pi_Q(a|s)$ interchangeably.

Then, the optimal robust Bellman equation in \eqref{eq:orbq:max} can be rewritten as 
\begin{align}
    Q(s,\pi_Q(s))=r(s,\pi_Q(s))-g+\sigma_{\cp^{\pi_Q(s)}_s}\bigg( \sum_a \pi_Q(a|\cdot) Q(\cdot,a)\bigg).
\end{align}
Moreover, if we denote by $W(s)=Q(s,a)=Q(s,\pi_Q(s))=\max_a Q(s,a)$, then the equation above is equivalent to 
\begin{align}
    W(s)=\sum_a \pi_Q(a|s) (r(s,a)-g+\sigma_{\cp^a_s}(W)).
\end{align}
Therefore, $(W,g)$ is a solution to the robust Bellman equation for the policy $\pi_Q$ in \Cref{thm:robust Bellman}. By \Cref{thm:robust Bellman}, we have that
\begin{align}
    g&=g^{\pi_Q}_\cp, \\
    W&=V^{\pi_Q}_\kp+ce,
\end{align}
for some $\kp\in\Omega_g^{\pi_Q}$ and $c\in\mathbb{R}$. 

Combining this with the claim (1) implies that $\pi_Q$ is an optimal robust policy. Claims (2) and (3)  are thus proved. 
\end{proof}

\subsection{Proof of \Cref{thm:Q general converge}}
\begin{lemma}
If  $\hat{\mathbf H}$ satisfies that for any $Q, s\in\mcs,a\in\mca$,  $\mE[\hat{\mathbf H}Q(s,a)]=\mathbf HQ(s,a) $ and  $\text{Var}(\hat{\mathbf H}Q(s,a))\leq C(1+\|Q\|^2)$ for some constant $C$, then under Assumptions \ref{ass:f}, \ref{ass:g*same} and \ref{ass:step-size}, Algorithm \ref{alg:Q} remains bounded during the update almost surely, i.e., 
\begin{align}
    \sup_n \|Q_n\|<\infty, a.s..
\end{align}
\end{lemma}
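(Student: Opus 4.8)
The plan is to mirror the stability argument for robust RVI TD (\Cref{Thm:bounded}) and invoke the Borkar--Meyn stability theorem via the ODE method. First I would rewrite the update as a stochastic approximation recursion
\begin{align}
    Q_{n+1}=Q_n+\alpha_n\big(h(Q_n)+M_{n+1}\big),\nn
\end{align}
where $h(Q)\triangleq \mathbf H Q-f(Q)e-Q$ with $\mathbf HQ(s,a)=r(s,a)+\sigma_{\cp^a_s}(V_Q)$, $e$ the all-ones vector in $\mathbb R^{|\mcs||\mca|}$, and $M_{n+1}\triangleq \hat{\mathbf H}Q_n-\mathbf HQ_n$ the noise term. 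It then suffices to verify the four conditions of Sections 2.1 and 3.2 of \cite{borkar2009stochastic}.

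Conditions (1)--(3) are routine. Since $Q\mapsto V_Q$ (the componentwise maximum) is $1$-Lipschitz in $\|\cdot\|_\infty$ and each $\sigma_{\cp^a_s}$ is $1$-Lipschitz, $\mathbf H$ is $1$-Lipschitz; together with \Cref{ass:f}, $h$ is Lipschitz with constant at most $2+L_f$. \Cref{ass:step-size} supplies the Robbins--Monro condition, and the assumed unbiasedness $\mE[\hat{\mathbf H}Q_n\mid\mathcal F_n]=\mathbf HQ_n$ together with the bounded variance $\text{Var}[\hat{\mathbf H}Q_n\mid\mathcal F_n]\le C(1+\|Q_n\|^2)$ yields the martingale-difference and quadratic-growth conditions on $M_{n+1}$.

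The crux is condition (4): the scaled ODE $\dot x=h_\infty(x)$ must have the origin as its unique globally asymptotically stable equilibrium. Using $\sigma_{\cp^a_s}(cV)=c\sigma_{\cp^a_s}(V)$, $V_{cQ}=cV_Q$ and $f(cQ)=cf(Q)$, I would compute
\begin{align}
    h_\infty(Q)=\lim_{c\to\infty}\frac{h(cQ)}{c}=\mathbf H_0Q-f(Q)e-Q,\nn
\end{align}
where $\mathbf H_0Q(s,a)\triangleq\sigma_{\cp^a_s}(V_Q)$ is the zero-reward optimal robust Bellman operator. Any equilibrium $W$ satisfies $W=\mathbf H_0W-ge$ and $g=f(W)$, which is exactly the optimal robust Bellman equation \eqref{eq:optimal bellman} for the zero-reward robust MDP. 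Applying \Cref{thm:optimal robust Bellman} forces $g=g^*_\cp=0$ and $V_W=V^{\pi_W}_\kp+ce=ce$ (the relative value function vanishes for zero reward); then $\mathbf H_0W(s,a)=\sigma_{\cp^a_s}(ce)=c$, so $W=ce$, and $f(W)=c=g=0$ gives $W=0$. Hence the origin is the unique equilibrium.

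To establish global asymptotic stability I would reuse the two-operator decomposition from \Cref{Thm:bounded}. Introducing $\mathbf H'_0Q\triangleq\mathbf H_0Q-f(Q)e$ and $\tilde{\mathbf H}_0Q\triangleq\mathbf H_0Q-g^*_\cp e$, both operators are non-expansive (the latter with constant $1$), so by Theorem 3.1 of \cite{borkar1997analog} any trajectory $y(t)$ of $\dot y=\tilde{\mathbf H}_0y-y$ converges to its equilibrium set, which \Cref{thm:optimal robust Bellman} identifies as $\{ce:c\in\mathbb R\}$. Writing $x(t)=y(t)+r(t)e$ by the same decomposition as in \Cref{lemma:x=y+r} (with $\dot r=-r+g^*_\cp-f(y(t))$, $r(0)=0$) and passing to the limit shows $x(t)\to ce-f(ce)e=0$, so the origin is globally asymptotically stable; Theorem 3.7 of \cite{borkar2009stochastic} then yields $\sup_n\|Q_n\|<\infty$ a.s. I expect the equilibrium characterization in condition (4) to be the main obstacle, since $\mathbf H_0$ is non-linear through both the inner worst-case minimization and the outer maximization over actions, and it is precisely result 3 of \Cref{thm:optimal robust Bellman} that resolves this.
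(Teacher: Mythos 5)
Your proposal is correct and follows essentially the same route as the paper: the same stochastic-approximation rewriting, the same verification of the Borkar--Meyn conditions, the same use of the zero-reward optimal robust Bellman equation together with result 3 of \Cref{thm:optimal robust Bellman} to pin down the origin as the unique equilibrium, and the same two-operator decomposition $x(t)=y(t)+r(t)e$ to get global asymptotic stability. The only cosmetic difference is that the decomposition you invoke is the Q-learning analogue (\Cref{lemma:optimal x=y+r}, whose proof additionally checks that $Q\mapsto V_Q$ is non-expansive in the span semi-norm) rather than \Cref{lemma:x=y+r} itself.
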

\begin{proof}
Denote by
\begin{align}
    h(Q)\triangleq r_\pi + \sigma_\cp(V_Q)-f(Q)e-Q. 
\end{align}
Then, the update of robust RVI Q-learning can be rewritten as 
\begin{align}
    Q_{n+1}=Q_n+\alpha_n (h(Q_n)+M_{n+1}),
\end{align}
where $M_{n+1}\triangleq \hat{\mathbf{H}}Q_n-r_\pi- \sigma_\cp(V_Q)$ is the noise term. 

Further, define the limit function $h_\infty$: 
\begin{align}
    h_\infty(Q)\triangleq\lim_{c\to\infty} \frac{h(cQ)}{c}.
\end{align}
Then, note that $\sigma_{\cp^a_s}(V_{cQ})=\sigma_{\cp^a_s}(cV_{Q})=c\sigma_{\cp^a_s}(V_Q)$ for $c>0$ and $f(cQ)=cf(Q)$. It then follows that
\begin{align}
   h_\infty(Q)=\lim_{c\to\infty}\frac{r_\pi}{c}+\sigma_\cp(V_Q)-f(Q)e-Q=\sigma_\cp(V_Q)-f(Q)e-Q.
\end{align}

Similar to the proof of \Cref{Thm:bounded}, it suffices to verify the following conditions: 

(1). $h$ is Lipschitz;

(2). Stepsize $\alpha_n$ satisfies Assumption \ref{ass:step-size};

(3). $\mE[M_{n+1}|\mathcal{F}_n]=0$, and $\mE[\|M_{n+1}\|^2|\mathcal{F}_n]\leq K(1+\|Q_n\|^2)$ for some constant $K$. 

(4). $h_\infty$ has the origin as its unique globally asymptotically stable equilibrium. 

Clearly, (2) and (3) can be verified similarly to \Cref{Thm:bounded}. We then verify (1) and (4). 

Firstly, it can be shown that 
\begin{align}
    |h(Q_1)(s,a)-h(Q_2)(s,a)|&=\big| \sigma_{\cp^a_s}(V_{Q_1})-f(Q_1)-Q_1(s,a)-\sigma_{\cp^a_s}(V_{Q_2})-f(Q_2)-Q_2(s,a)\big|\nn\\
    &\leq \big| \sigma_{\cp^a_s}(V_{Q_1})-\sigma_{\cp^a_s}(V_{Q_2})\big|+ |f(Q_1)-f(Q_2)|+ |Q_1(s,a)-Q_2(s,a)|\nn\\
    &\leq \|V_{Q_1}-V_{Q_2}\| +L_f\|Q_1-Q_2\|+\|Q_1-Q_2\|\nn\\
    &\leq (2+L_f)\|Q_1-Q_2\|,
\end{align}
where the last inequality is from the fact that $\|V_{Q_1}-V_{Q_2}\|\leq \|Q_1-Q_2\|$. This implies that $h$ is Lipschitz.

To verify (4), note that the stability equation is
    \begin{align}\label{eq:103}
        \dot{X}(t)=h_\infty(X(t))=\sigma_\cp(V_X(t))-f(X(t))e-X(t),
    \end{align}
where $V_X(t)$ is a $|\mcs|$-dimensional vector with $V_X(t) (s)=\max_a X(t)(s,a).$ 

Any equilibrium $Q$ of  the stability equation \eqref{eq:103} satisfies that 
\begin{align}\label{eq:optimal equ}
    Q(s,a)=\sigma_{\cp^a_s}(V_Q)-f(Q)e,
\end{align}
which can be viewed as an optimal robust Bellman equation \eqref{eq:optimal bellman} with zero reward. Hence, by Lemma \ref{thm:optimal robust Bellman}, it implies that 
\begin{align}
    f(Q)&=g^*_\cp=0,\\
    V_Q&=V^{\pi_Q}_\kp+ce \text{ for some } \kp\in\Omega^{\pi_Q}_g, c\in\mathbb{R}. 
\end{align}
In the zero-reward MDP, we have that $V^{\pi}_\kp=0$ for any $\pi,\kp$, thus $V_Q(s)=\max_a Q(s,a)=c$ for any $s\in\mcs$. 

Note that from \eqref{eq:optimal equ}, $Q$ satisfies that 
\begin{align}
    Q(s,a)=\sigma_{\cp^a_s}(V_Q)=\sigma_{\cp^a_s} (ce)=c. 
\end{align}

Since $f(Q)=0$, it implies that
\begin{align}
    f(Q)=f(ce)=c=0.
\end{align}
Therefore,
\begin{align}
    c=0,\\
    Q=0. 
\end{align}
Thus, $0$ is the unique equilibrium of the stability equation. 

We then show that $0$ is globally asymptotically stable. Define the zero-reward optimal robust Bellman operator 
\begin{align}
    {\mathbf H}_0Q(s,a)=\sigma_{\cp^a_s}(V_Q),
\end{align}
and further introduce two operators 
\begin{align}
    {\mathbf H}'_0Q(s,a)&=\sigma_{\cp^a_s}(V_Q)-f(Q),\\
    \tilde{\mathbf H}_0Q(s,a)&=\sigma_{\cp^a_s}(V_Q)-g^*_\cp.
\end{align}

It is straightforward to verify that $\tilde{\mathbf H}_0$ is non-expansive. Hence by \cite{borkar1997analog}, the solution $y(t)$ to equation 
\begin{align}\label{eq:200}
    \dot{y}=\tilde{\mathbf H}_0y-y
\end{align}
converges to the set of equilibrium points 
\begin{align}\label{eq:201}
    \{W: W(s,a)=\sigma_{\cp^a_s}(V_W)-g^*_\cp\}, a.s..
\end{align}
This again can be viewed as an optimal robust Bellman equation with zero-reward. Hence, any equilibrium $W$ of \eqref{eq:200} satisfies 
\begin{align}
    \max_a W(s,a)=c, \forall s. 
\end{align}
This together with \eqref{eq:201} further implies that  the equilibrium $W$ of \eqref{eq:200} satisfies 
\begin{align}
    W(s,a)=\sigma_{\cp^a_s}(V_W)=\sigma_{\cp^a_s}(ce)=c,
\end{align}
and hence $y(t)$ converges to $\{ce: c\in\mathbb{R}\}$. We denote its limit by $y^*=c^*e$. 

Lemma \ref{lemma:optimal x=y+r} implies the solution $x(t)$ to the ODE $\dot{x}={\mathbf H}_0'(x)-x$ can be decomposed as $x(t)=y(t)+r(t)e$, where $r(t)$ satisfies $\dot{r}(t)=-r(t)+g^*_\cp-f(y(t)), r(0)=0$. 

Then, similar to Lemma \ref{Thm:bounded}, Lemma B.4 in \cite{wan2021learning} and Theorem 3.4 in \cite{abounadi2001learning}, it can be shown that $r(t)\to g^*_\cp-f(y(t))=-c^*$. Hence, 
\begin{align}
    x(t)\to 0,
\end{align}
which proves the asymptotic stability.


Thus, we conclude that $0$ is the unique globally asymptotically stable
equilibrium of the stability equation, which implies the boundedness of $\{Q_n\}$ together with results from Section 2.1 and 3.2 from \cite{borkar2009stochastic}. 
\end{proof}

\begin{theorem}[Restatement of \Cref{thm:Q general converge}]
The sequence $\{Q_n\}$ generated by Algorithm \ref{alg:Q} converges to a solution $Q^*$ to the optimal robust Bellman equation a.s., and $f(Q_n)$ converges to the optimal robust average-reward $g^*_\cp$ a.s.. 
\end{theorem}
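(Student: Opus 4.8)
The plan is to establish convergence of Algorithm \ref{alg:Q} using the stochastic approximation framework of \cite{borkar2009stochastic}, mirroring the structure of the proof of \Cref{thm:A2} but now with the optimal robust Bellman operator $\mathbf H$ in place of the policy-evaluation operator $\mathbf T$. The boundedness of the iterates $\{Q_n\}$ has already been established in the preceding lemma, so conditions (A1)--(A3) and (A5) of \cite{borkar2009stochastic} are in place; by Lemma 2.1 of \cite{borkar2009stochastic} it therefore suffices to analyze the limiting behavior of the ODE $\dot{x}(t)=h(x(t))$, where $h(Q)=r_\pi+\sigma_\cp(V_Q)-f(Q)e-Q$.

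First I would introduce the two auxiliary operators $\mathbf H'Q(s,a)\triangleq\mathbf HQ(s,a)-f(Q)$ and $\tilde{\mathbf H}Q(s,a)\triangleq\mathbf HQ(s,a)-g^*_\cp$, exactly analogous to the TD proof, and consider the ODEs $\dot{x}=\mathbf H'x-x$ and $\dot{y}=\tilde{\mathbf H}y-y$. By the decomposition lemma (the $Q$-learning analog of \Cref{lemma:x=y+r}, namely \Cref{lemma:optimal x=y+r}), solutions with common initial condition satisfy $x(t)=y(t)+r(t)e$ where $r(t)$ solves $\dot{r}(t)=-r(t)+g^*_\cp-f(y(t))$ with $r(0)=0$, so $r(t)=\int_0^t e^{-(t-s)}(g^*_\cp-f(y(s)))\,ds$ by variation of constants. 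The key point is that $\tilde{\mathbf H}$ is a non-expansion with respect to $\|\cdot\|_\infty$, which follows from the $1$-Lipschitz property of each $\sigma_{\cp^a_s}$ together with the fact that $\|V_{Q_1}-V_{Q_2}\|\leq\|Q_1-Q_2\|$; hence Theorem 3.1 of \cite{borkar1997analog} guarantees that $y(t)$ converges to the set of equilibria $\{W:\tilde{\mathbf H}W=W\}$.

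The next step is to characterize that equilibrium set via \Cref{thm:optimal robust Bellman}: any $W$ with $\tilde{\mathbf H}W=W$ satisfies $W(s,a)=r(s,a)-g^*_\cp+\sigma_{\cp^a_s}(V_W)$, which is precisely the optimal robust Bellman equation \eqref{eq:optimal bellman}, so result 3) of \Cref{thm:optimal robust Bellman} tells us that $V_W=V^{\pi_W}_\kp+ce$ for some worst-case kernel $\kp$ and constant $c$. This lets me write the limit $y^*$ of $y(t)$ as a solution to \eqref{eq:optimal bellman} (up to the additive structure), and then $x(t)\to y^*+(g^*_\cp-f(y^*))e$. Using the affine-shift properties of $f$ from Assumption \ref{ass:f}, a short computation shows that applying $f$ to this limit yields $g^*_\cp$, so $f(x(t))\to g^*_\cp$ and the limit of $x(t)$ remains a solution to \eqref{eq:optimal bellman}. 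Transferring back from the ODE to the iterates via Lemma 2.1 of \cite{borkar2009stochastic} gives $Q_n\to Q^*$ for a (possibly sample-path-dependent) solution $Q^*$ of \eqref{eq:optimal bellman} and $f(Q_n)\to g^*_\cp$; finally, result 2) of \Cref{thm:optimal robust Bellman} converts convergence of $Q_n$ to a Bellman solution into convergence of the greedy policy $\pi_{Q_n}$ to an optimal robust policy.

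The main obstacle I anticipate is controlling the limit $r(t)\to g^*_\cp-f(y^*)$: because $y(t)$ converges only to a \emph{set} of equilibria rather than a single point a priori, one must argue that $y(t)$ actually converges to a specific equilibrium $y^*$ and that $f(y(t))$ converges, so that the convolution integral for $r(t)$ has a well-defined limit. This is where the cited asymptotic results (Lemma B.4 in \cite{wan2021learning} and Theorem 3.4 in \cite{abounadi2001learning}) do the real work, and one must verify that the non-expansiveness and the structure from \Cref{thm:optimal robust Bellman} are enough to invoke them; the $\max_a$ nonlinearity in $V_Q$ (absent in the TD case) is the additional subtlety that must be handled when checking the non-expansion property and the equilibrium characterization.
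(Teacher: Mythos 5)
Your proposal follows essentially the same route as the paper's proof: reduction to the ODE $\dot{x}=\mathbf H'x-x$ via Borkar's framework, the decomposition $x(t)=y(t)+r(t)e$ through \Cref{lemma:optimal x=y+r}, non-expansiveness of $\tilde{\mathbf H}$ to invoke Theorem 3.1 of \cite{borkar1997analog}, the equilibrium characterization from \Cref{thm:optimal robust Bellman}, and the limit computation for $r(t)$ and $f(x(t))$. You also correctly flag the two genuine subtleties the paper handles — the $\max_a$ nonlinearity in checking non-expansiveness and the convergence of $r(t)$ when $y(t)$ a priori only approaches a set of equilibria — so the proposal is correct and matches the paper's argument.
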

\begin{proof}
According to Lemma 1 from \cite{borkar2009stochastic} and Theorem 3.5 from \cite{abounadi2001learning}, the sequence $\{Q_n\}$ converge to the same limit as the solution $x(t)$ to the ODE $\dot{x}={\mathbf H}'x-x$. Hence the proof can be completed by showing convergence of $x(t)$ and $f(x(t))$.

For the optimal robust Bellman operator, 
\begin{align}
    {\mathbf H}Q(s,a)=r(s,a)+\sigma_{\cp^a_s}(V_Q),
\end{align}
define two operators
\begin{align}
    {\mathbf H}'Q&\triangleq {\mathbf H}Q-f(Q)e,\\
    \tilde{\mathbf H}Q&\triangleq {\mathbf H}Q-g^*_\cp e.
\end{align}
From Lemma \ref{lemma:optimal x=y+r}, we know that if $x(t),y(t)$ are the solutions to equations 
\begin{align}
    \dot{x}&={\mathbf H}'x-x,\label{eq:oTx}\\
    \dot{y}&=\tilde{\mathbf H}y-y,\label{eq:oTy}
\end{align}
with the same initial value $x(0)=y(0)$, then 
\begin{align}
    x(t)=y(t)+r(t)e,
\end{align}
where $r(t)$ satisfies 
\begin{align}
    \dot{r}(t)=-r(t)+g^*_\cp-f(y(t)), r(0)=0. 
\end{align}

It can be easily verified that $\tilde{\mathbf H}$ is  non-expansive. Hence $y(t)$ converges to the set of equilibrium points of of \eqref{eq:oTy} (Theorem 3.1 of \cite{borkar1997analog}), which can be characterized as
\begin{align}
    \left\{W:\tilde{\mathbf H}W=W \right\}=\left\{W:W={\mathbf H}W-g^*_\cp e \right\}=\left\{W: W(s,a)=r(s,a)-g^*_\cp+\sigma_{\cp^a_s}(V_W), \forall s,a\right\}.
\end{align}
From Lemma \ref{thm:optimal robust Bellman}, any equilibrium $W$ satisfies 
\begin{align}
     V_W=V^{\pi_W}_\kp+ce, \text{ for some }\kp\in\Omega^{\pi_W}_g,c\in\mathbb{R},
\end{align}
and $\pi_W$ is robust optimal. We denote the limit of $y(t)$ by $W^*$. 

Similar to \eqref{eq:84} to \eqref{eq:86}, 
it can be shown that $r(t)\to g^*_\cp-f(W^*)$. This further implies that 
\begin{align}
    x(t)\to W^*+(g^*_\cp-f(W^*))e\triangleq W^*+m^*e,
\end{align}
where $m^*=g^*_\cp-f(W^*)$. Note that $W^*+m^*e$ is a solution to the optimal robust Bellman equation, hence $x(t)$ converges to a solution to \eqref{eq:optimal bellman}. 
 Moreover, since $f$ is continuous (because it is Lipschitz), we have that 
\begin{align}
    f(x(t))&\to f(W^*+m^*e)\nn\\
    &=f(W^*)+g^*_\cp-f(W^*)\nn\\
    &=g^*_\cp. 
\end{align}
This completes the proof. 
\end{proof}

\section{Case Studies for Robust RVI TD}\label{sec:case for TD}
In this section, we provide the proof of the first part of \Cref{thm:case thm}, i.e., that $\hat{\mathbf T}$ is  unbiased and has bounded variance under each uncertainty model. 

We first show a lemma, by which the problem can be reduced to investigating whether $\hat{\sigma}_{\cp^a_s}$ is unbiased and has bounded variance. 
\begin{lemma}
    If 
    \begin{align}
        \mE[\hat{\sigma}_{\cp^a_s}V]=\sigma_{\cp^a_s}(V), \forall s,a,
    \end{align}
    and moreover, there exists a constant $C$, such that
    \begin{align}
        \text{Var}(\hat{\sigma}_{\cp^a_s}V)\leq C(1+\|V\|^2), \forall s,a,
    \end{align}
    then 
    \begin{align}
\mE[\hat{\mathbf{T}} V(s)]=\mathbf T V(s), \forall s,
    \end{align}
    and
    \begin{align}
        \text{Var}(\hat{\mathbf T}V(s))\leq |\mca|C(1+\|V\|^2), \forall s.
    \end{align}
\end{lemma}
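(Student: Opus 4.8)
The plan is to reduce the two claims about $\hat{\mathbf T}$ to the per-action hypotheses on $\hat{\sigma}_{\cp^a_s}$, exploiting that $\hat{\mathbf T}V(s)=\sum_a \pi(a|s)\big(r(s,a)+\hat{\sigma}_{\cp^a_s}(V)\big)$ is an affine combination of the random quantities $\hat{\sigma}_{\cp^a_s}(V)$ with deterministic coefficients. For unbiasedness, since the rewards $r(s,a)$ and the weights $\pi(a|s)$ are deterministic, linearity of expectation gives
\begin{align}
\mE[\hat{\mathbf T}V(s)] = \sum_a \pi(a|s)\big(r(s,a)+\mE[\hat{\sigma}_{\cp^a_s}(V)]\big) = \sum_a \pi(a|s)\big(r(s,a)+\sigma_{\cp^a_s}(V)\big) = \mathbf T V(s),
\end{align}
where the second equality is the hypothesis $\mE[\hat{\sigma}_{\cp^a_s}V]=\sigma_{\cp^a_s}(V)$ and the third is the definition of $\mathbf T$ recorded in the notation section.

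For the variance, I would first drop the deterministic reward terms (they do not change the variance) and center, writing $\hat{\mathbf T}V(s)-\mE[\hat{\mathbf T}V(s)]=\sum_a \pi(a|s)\big(\hat{\sigma}_{\cp^a_s}(V)-\sigma_{\cp^a_s}(V)\big)$. The key step is the elementary Cauchy--Schwarz inequality $\big(\sum_a y_a\big)^2 \leq |\mca|\sum_a y_a^2$, applied with $y_a=\pi(a|s)\big(\hat{\sigma}_{\cp^a_s}(V)-\sigma_{\cp^a_s}(V)\big)$; taking expectations yields
\begin{align}
\text{Var}(\hat{\mathbf T}V(s)) \leq |\mca| \sum_a \pi(a|s)^2\,\text{Var}(\hat{\sigma}_{\cp^a_s}(V)).
\end{align}
Using $\pi(a|s)^2\leq \pi(a|s)$, the hypothesis $\text{Var}(\hat{\sigma}_{\cp^a_s}V)\leq C(1+\|V\|^2)$, and $\sum_a \pi(a|s)=1$, the right-hand side collapses to $|\mca|\,C(1+\|V\|^2)$, which is the claimed bound.

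The main thing requiring care, and the reason the bound carries the factor $|\mca|$, is that the per-action estimators $\hat{\sigma}_{\cp^a_s}(V)$ for different $a$ need not be independent (they are produced from sampling at the respective $(s,a)$ pairs, whose joint law we do not control), so one cannot simply add their variances. The worst-case bound $\text{Var}(\sum_a X_a)\leq |\mca|\sum_a \text{Var}(X_a)$ holds regardless of the correlation structure and sidesteps this issue entirely; it is the only nontrivial ingredient, and the rest is bookkeeping. A sharper triangle inequality for standard deviations would even remove the $|\mca|$, but the stated bound is all that is needed downstream and is cleaner to state.
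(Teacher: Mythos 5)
Your proof is correct and follows essentially the same route as the paper: linearity of expectation for unbiasedness, then an elementary square-of-sum inequality on the centered estimator for the variance. The only difference is cosmetic: the paper applies Jensen's inequality with the probability weights $\pi(\cdot|s)$, which in fact yields the sharper bound $C(1+\|V\|^2)$ (the $|\mca|$ in the stated conclusion is pure slack there), whereas your Cauchy--Schwarz step produces the $|\mca|$ factor directly; both suffice for everything downstream.
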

\begin{proof}
    From the definition, $\hat{\mathbf T}V(s)=\sum_a \pi(a|s)(r(s,a)+\hat{\sigma}_{\cp^a_s}V)$. Thus,
    \begin{align}
        \mE[\hat{\mathbf{T}} V(s)]&=\mE\bigg[\sum_a \pi(a|s)(r(s,a)+\hat{\sigma}_{\cp^a_s}V)\bigg]\nn\\
        &=\sum_a \pi(a|s)(r(s,a)+\mE[\hat{\sigma}_{\cp^a_s}V])\nn\\
        &=\sum_a \pi(a|s)(r(s,a)+\sigma_{\cp^a_s}(V))=\mathbf{T}V(s),
    \end{align}
which shows that $\hat{\mathbf{T}}$ is unbiased. On the other hand, we have that
\begin{align}
    \text{Var}(\hat{\mathbf T}V(s))&=\mE\bigg[ \bigg(\sum_a \pi(a|s)(r(s,a)+\hat{\sigma}_{\cp^a_s}V)- \mE\bigg[\sum_a \pi(a|s)(r(s,a)+\hat{\sigma}_{\cp^a_s}V)\bigg]\bigg)^2\bigg]\nn\\
    &=\mE\bigg[ \bigg(\sum_a \pi(a|s)(r(s,a)+\hat{\sigma}_{\cp^a_s}V)- \sum_a \pi(a|s)(r(s,a)+\mE\big[\hat{\sigma}_{\cp^a_s}V\big]\bigg)^2\bigg]\nn\\
    &=\mE\bigg[ \bigg(\sum_a \pi(a|s)(\hat{\sigma}_{\cp^a_s}V)-\mE\big[\hat{\sigma}_{\cp^a_s}V\big]\bigg)^2\bigg]\nn\\
    &\overset{(a)}{\leq} \mE\bigg[\sum_a \pi(a|s)(\hat{\sigma}_{\cp^a_s}V-\mE\big[\hat{\sigma}_{\cp^a_s}V\big])^2\bigg]\nn\\
    &=  \sum_a \pi(a|s)\mE\bigg[ (\hat{\sigma}_{\cp^a_s}V-\mE\big[\hat{\sigma}_{\cp^a_s}V^2\big])^2 \bigg]\nn\\
    &{\leq} \sum_a \pi(a|s)\text{Var}(\hat{\sigma}_{\cp^a_s}V)\nn\\
    &\leq |\mca|C(1+\|V\|^2),
\end{align}
where $(a)$ is because $(\mE[X])^2\leq \mE[X^2]$, which completes the proof. 
\end{proof}
This lemma implies that to prove \Cref{thm:case thm}, it suffices to show that $\hat{\sigma}_{\cp^a_s}$ is unbiased and has bounded variance.

\subsection{Contamination Uncertainty Set}
\begin{theorem}
$\hat{\mathbf T}$ defined in \eqref{eq:12} is unbiased and has bounded variance.   
\end{theorem}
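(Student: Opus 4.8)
The plan is to apply the reduction lemma established immediately above, which shows that unbiasedness and bounded variance of $\hat{\mathbf T}$ follow once the same two properties hold for the per-action estimator $\hat{\sigma}_{\cp^a_s}$. Hence it suffices to verify that, for every $s\in\mcs$ and $a\in\mca$, the estimator in \eqref{eq:12} satisfies $\mE[\hat{\sigma}_{\cp^a_s}(V)]=\sigma_{\cp^a_s}(V)$ and $\text{Var}(\hat{\sigma}_{\cp^a_s}(V))\le C(1+\|V\|^2)$ for some constant $C$. The structural fact that makes this case easy is that, for the contamination model, the support function $\sigma_{\cp^a_s}(V)=(1-\delta)\kp^a_s V+\delta\min_x V(x)$ is \emph{linear} in the nominal kernel $\kp^a_s$; unlike the nonlinear models treated later, no multi-level Monte-Carlo device is needed, and a single-sample bootstrap estimate already does the job.

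For unbiasedness, I would take the expectation over the single subsequent-state sample $s'\sim\kp^a_s$. The term $\delta\min_x V(x)$ is a deterministic constant and passes through the expectation unchanged, while $\mE[V(s')]=\sum_{s'}\kp^a_s(s')V(s')=\kp^a_s V$ by the definition of the sampling distribution. Adding the two contributions recovers exactly $\sigma_{\cp^a_s}(V)$, which establishes unbiasedness of $\hat{\sigma}_{\cp^a_s}$.

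For the variance bound, I would again exploit that $\delta\min_x V(x)$ is deterministic and therefore contributes nothing to the variance, so $\text{Var}(\hat{\sigma}_{\cp^a_s}(V))=(1-\delta)^2\,\text{Var}(V(s'))$. Since $|V(s')|\le\|V\|_\infty=\|V\|$ pointwise, we have $\text{Var}(V(s'))\le\mE[V(s')^2]\le\|V\|^2$, whence $\text{Var}(\hat{\sigma}_{\cp^a_s}(V))\le(1-\delta)^2\|V\|^2\le\|V\|^2\le C(1+\|V\|^2)$, e.g.\ with $C=1$. Feeding these two estimates back through the reduction lemma yields $\mE[\hat{\mathbf T}V(s)]=\mathbf T V(s)$ and $\text{Var}(\hat{\mathbf T}V(s))\le|\mca|\,C(1+\|V\|^2)$, which is the claim.

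I do not anticipate a genuine obstacle in this case: the whole difficulty of the broader problem concentrates in the nonlinear uncertainty sets, where $\sigma_{\cp^a_s}$ is not linear in $\kp^a_s$ and the naive plug-in at the empirical kernel is biased. The contamination set is the degenerate linear instance, for which the single-sample estimate is unbiased purely by linearity of expectation, and its variance is controlled solely by the boundedness of $V$; the only minor care needed is to keep the deterministic term $\delta\min_x V(x)$ separate so that it does not spuriously enter the variance computation.
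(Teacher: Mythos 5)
Your proposal is correct and matches the paper's argument in substance: unbiasedness follows from linearity of expectation since $\delta\min_x V(x)$ is deterministic and $\mE[V(s')]=\kp^a_s V$, and the variance is controlled by $|V(s')|\le\|V\|$. The paper carries out the same computation directly on the noise term $M_n(s)=r(s,a)+(1-\delta)V_n(s')+\delta\min_x V_n(x)-\mathbf{T}V_n(s)$ (obtaining the bound $8(1+\|V_n\|^2)$) rather than routing through the reduction lemma, but this is only a difference of presentation, not of method.
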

\begin{proof}  
First, note that 
\begin{align}
    V_{n+1}(s)&= V_n(s)+\alpha_n (r(s,a)+((1-\delta) V_n(s')+\delta \min_x V_n(x)-f(V_n)-V_n(s))\nn\\
    &= V_n(s)+\alpha_n ({\mathbf T}V_n(s)-f(V_n)-V_n(s)+M_n(s)),
\end{align}
where
\begin{align}
     M_n(s)=r(s,a)+(1-\delta) V_n(s')+\delta \min_x V_n(x)-{\mathbf T}V_n(s), 
\end{align}
and 
\begin{align}
    {\mathbf T}V_n(s)=\sum_a \pi(a|s)\bigg(r(s,a)+(1-\delta)\sum_{s'}\kp^a_{s,s'}V_n(s')+\delta \min_x V_n(x)\bigg). 
\end{align}
Thus, 
\begin{align}
    \mE[M_n(s)]&=\mE\big[r(s,a)+(1-\delta) V_n(s')+\delta \min_x V_n(x) \big]-\sum_a \pi(a|s)\bigg(r(s,a)+(1-\delta)\sum_{s'}\kp^a_{s,s'}V_n(s')+\delta \min_x V_n(x)\bigg)\nn\\
    &=\sum_a \pi(a|s) \bigg( r(s,a)+(1-\delta) \sum_{s'}\kp^a_{s,s'}V_n(s')+\delta \min_x V_n(x)\bigg)\nn\\
    &\quad\quad\quad\quad\quad\quad\quad\quad\quad-\sum_a \pi(a|s)\bigg(r(s,a)+(1-\delta)\sum_{s'}\kp^a_{s,s'}V_n(s')+\delta \min_x V_n(x))\bigg)\nn\\
    &=0.
\end{align}
Hence, the operator is unbiased. 

We also have that
\begin{align}
    \mE[|M_n(s)|^2]&=\mE\bigg[ \bigg(r(s,a)+(1-\delta) V_n(s')+\delta \min_x V_n(x)-{\mathbf T}V_n(s) \bigg)^2 \bigg]\nn\\
    &\leq 2\mE\bigg[ \bigg(r(s,a)+(1-\delta) V_n(s')+\delta \min_x V_n(x)\bigg)^2 \bigg]+ 2\mE[({\mathbf T}V_n(s))^2]\nn\\
    &\overset{(a)}{\leq} 8 + 8\|V_n\|^2\nn\\
    &\leq 8(1+\|V_n\|^2),
\end{align}
where $(a)$ is from the fact that $\mE\big[ \big((1-\delta) V_n(s')+\delta \min_x V_n(x)\big)^2 \big]= \mE\big[ \big|(1-\delta) V_n(s')+\delta \min_x V_n(x)\big|^2 \big]\leq\mE\big[ \big( \big|(1-\delta) V_n(s')\big|+\big|\delta \min_x V_n(x)\big|\big)^2 \big] \leq \mE\big[ \big((1-\delta) \|V_n\|+\big(\delta \|V_n\| \big)^2 \big]\leq \|V_n\|^2$.

The proof is completed. 
\end{proof}

\subsection{Total Variation Uncertainty Set}
The estimator under the total variation uncertainty set can be written as
\begin{align}\label{eq:p_tv}
    \hat{\sigma}_{\cp^a_s}(V)=\max_{\mu\geq 0} \big(\hat{\kp}^{a,1}_{s,N+1}(V-\mu)-\delta \spa(V-\mu) \big)+\frac{\Delta_N(V)}{p_N},
\end{align}
where 
\begin{align}
    &\Delta_N(V)= \max_{\mu\geq 0}\big(\hat{\kp}^{a}_{s,N+1}(V-\mu)-\delta \spa(V-\mu) \big)\nn\\
    &\quad-\frac{1}{2}\max_{\mu\geq 0}\big(\hat{\kp}^{a,O}_{s,N+1}(V-\mu)-\delta \spa(V-\mu) \big)\nn\\
    &\quad-\frac{1}{2}\max_{\mu\geq 0}\big(\hat{\kp}^{a,E}_{s,N+1}(V-\mu)-\delta \spa(V-\mu) \big).
\end{align}

\begin{theorem}\label{thm:tv unbiased}
The estimated operator $\hat{\sigma}_{\cp^a_s}$ defined in \eqref{eq:p_tv} is unbiased, i.e., 
\begin{align}
    \mE[\hat{\sigma}_{\cp^a_s}V]=\sigma_{\cp^a_s}(V). 
\end{align}
\end{theorem}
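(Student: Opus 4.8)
The plan is to recognize the estimator as a randomized multi-level Monte-Carlo (MLMC) estimator of $\sigma_{\cp^a_s}(V)=F(\kp^a_s)$, where I write $F(q)\triangleq\max_{\mu\geq 0}\big(q(V-\mu)-\delta\spa(V-\mu)\big)$ for the dual objective evaluated at an arbitrary distribution $q$ (so $\sigma_{\cp^a_s}(V)=F(\kp^a_s)$ by \eqref{eq:tv dual}). Writing $\hat{\kp}_s^a(2^n)$ for the empirical distribution of $2^n$ i.i.d.\ next-state samples drawn under $(s,a)$ and $a_n\triangleq\mE[F(\hat{\kp}_s^a(2^n))]$, I note that the correction term is $\Delta_N=F(\hat{\kp}^a_{s,N+1})-\tfrac12\big(F(\hat{\kp}^{a,O}_{s,N+1})+F(\hat{\kp}^{a,E}_{s,N+1})\big)$ and the base term is $F(\hat{\kp}^{a,1}_{s,N+1})$. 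The identity I aim to establish is the telescoping $\mE[\hat{\sigma}_{\cp^a_s}(V)]=a_0+\sum_{n\ge0}(a_{n+1}-a_n)=\lim_{n\to\infty}a_n=F(\kp^a_s)$.

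First I would compute the expectations level by level. Conditioned on $N=n$, the $2^{n+1}$ observed next-states are i.i.d.\ from $\kp^a_s$, so the odd- and even-indexed subsamples each yield an empirical distribution with the same law as $\hat{\kp}_s^a(2^n)$, while the pooled sample has the law of $\hat{\kp}_s^a(2^{n+1})$. Hence $\mE[F(\hat{\kp}^{a,O}_{s,N+1})\mid N=n]=\mE[F(\hat{\kp}^{a,E}_{s,N+1})\mid N=n]=a_n$ and $\mE[F(\hat{\kp}^a_{s,N+1})\mid N=n]=a_{n+1}$, giving $\mE[\Delta_N\mid N=n]=a_{n+1}-a_n$; write $\Delta_n$ for this level-$n$ increment. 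Likewise the single sample $s_1'$ has law $\kp^a_s$ regardless of $N$, so $\mE[F(\hat{\kp}^{a,1}_{s,N+1})]=a_0$. Applying the tower property over $N$ with $p_N=\Psi(1-\Psi)^N$ then yields $\mE[\Delta_N/p_N]=\sum_{n\ge0}p_n\cdot p_n^{-1}(a_{n+1}-a_n)=\sum_{n\ge0}(a_{n+1}-a_n)$, and adding the base term produces the telescoping sum.

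The hard part, and the step where the standard MLMC analysis breaks down, is justifying both the interchange above and the limit $a_n\to F(\kp^a_s)$, since $F$ is only piecewise linear (a maximum of affine functionals) rather than continuously differentiable as those analyses require. For the interchange I must verify absolute summability $\sum_{n}\mE|\Delta_n|<\infty$, which is also exactly the condition that $\hat{\sigma}_{\cp^a_s}(V)$ has a well-defined mean ($\mE|\Delta_N/p_N|=\sum_n\mE|\Delta_n|$). Here I would exploit two structural facts: $F$ is convex and $\ell_1$-Lipschitz in $q$ with constant $L$ of order $\|V\|$ (being a supremum of affine maps $q\mapsto q(V-\mu)-\delta\spa(V-\mu)$ whose effective maximizers $\mu$ lie in a bounded set determined by $V$ and $\delta$), and the pooled empirical is the average of its halves, $\hat{\kp}^a_{s,N+1}=\tfrac12(\hat{\kp}^{a,O}_{s,N+1}+\hat{\kp}^{a,E}_{s,N+1})$. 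Convexity then forces $\Delta_n\le0$, and combined with $L$-Lipschitzness it gives $|\Delta_n|\le \tfrac{L}{2}\|\hat{\kp}^{a,O}_{s,N+1}-\hat{\kp}^{a,E}_{s,N+1}\|_1$. Since each half is an empirical of $2^n$ i.i.d.\ draws, $\mE\|\hat{\kp}^{a,O}_{s,N+1}-\hat{\kp}^{a,E}_{s,N+1}\|_1\le 2\,\mE\|\hat{\kp}_s^a(2^n)-\kp^a_s\|_1=O(\sqrt{|\mcs|}\,2^{-n/2})$, so $\mE|\Delta_n|=O(2^{-n/2})$ and the series converges geometrically.

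Finally, the limit $a_n\to F(\kp^a_s)$ follows from the strong law of large numbers, $\hat{\kp}_s^a(2^n)\to\kp^a_s$ a.s., together with continuity and uniform boundedness of $F$ on the simplex (so that bounded convergence applies). Chaining these pieces gives $\mE[\hat{\sigma}_{\cp^a_s}(V)]=a_0+\sum_{n\ge0}(a_{n+1}-a_n)=\lim_n a_n=F(\kp^a_s)=\sigma_{\cp^a_s}(V)$, which is the claim. I note that the dependence between the base sample $s_1'$ and the batch used in $\Delta_N$ is harmless here, since unbiasedness only invokes linearity of expectation; that dependence would matter only for the companion variance bound in \Cref{thm:case thm}.
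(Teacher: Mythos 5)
Your proposal is correct and follows essentially the same route as the paper's proof: condition on the geometric level $N$, use the fact that the odd- and even-indexed half-samples have the same law as a half-size empirical distribution (the paper's \Cref{lemma:Eo=Ee}) to telescope the correction terms into $\lim_{n}\mE[F(\hat{\kp}^a_{s,n})]$, and identify this limit with $\sigma_{\cp^a_s}(V)$ via the Lipschitz continuity of the dual objective on a bounded set of multipliers (the paper's \Cref{lemma:tv bounded} combined with an empirical-$\ell_1$ concentration bound). The one place you deviate --- using convexity of $F$ together with $\hat{\kp}^{a}_{s,N+1}=\tfrac12(\hat{\kp}^{a,O}_{s,N+1}+\hat{\kp}^{a,E}_{s,N+1})$ to get the signed, coupling-based bound $|\Delta_N|\leq \tfrac{L}{2}\|\hat{\kp}^{a,O}_{s,N+1}-\hat{\kp}^{a,E}_{s,N+1}\|_1$ --- is a clean alternative to the paper's bound via the distance to the true kernel, and it makes explicit the absolute-summability condition needed to justify interchanging the expectation with the infinite sum, a step the paper's proof leaves implicit.
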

\begin{proof}
First, denote the dual function \eqref{eq:tv dual} by $g$:
\begin{align}
g_{s,a}^V(\mu)=\kp^a_s(V-\mu)-\delta \spa(V-\mu),
\end{align}
and denote its optimal solution by $\mu^V_{s,a}$: 
\begin{align}
    \mu^V_{s,a}=\arg\max_{\mu\geq 0}\big( \kp^a_s(V-\mu)-\delta \spa(V-\mu)\big).
\end{align}
Then, the support function $\sigma_{\cp^a_s}(V)=g_{s,a}^V(\mu^V_{s,a})$. 
Similarly, define the empirical function  
\begin{align}
{g}_{s,a,N+1}^V(\mu)&=\hat{\kp}^{a}_{s,N+1}(V-\mu)-\delta \spa(V-\mu),\\
{g}_{s,a,N+1,O}^V(\mu)&=\hat{\kp}^{a,O}_{s,N+1}(V-\mu)-\delta \spa(V-\mu),\\
{g}_{s,a,N+1,E}^V(\mu)&=\hat{\kp}^{a,E}_{s,N+1}(V-\mu)-\delta \spa(V-\mu),
\end{align}
and their optimal solutions are denoted by $\mu^V_{s,a,N+1},\mu^V_{s,a,N+1,O},\mu^V_{s,a,N+1,E}$. We have that
\begin{align}
    \mE[\hat{\sigma}_{\cp^a_s}V]&=\mE\bigg[\max_{\mu\geq 0} \big(\hat{\kp}^{a,1}_{s,N+1}(V-\mu)-\delta \spa(V-\mu) \big)  +\frac{\Delta_N(V)}{p_N}\bigg]\nn\\
    &=\mE[g^V_{s,a,0}(\mu^V_{s,a,0})]+\mE\bigg[\frac{\Delta_N(V)}{p_N} \bigg]\nn\\
    &=\mE[g^V_{s,a,0}(\mu^V_{s,a,0})]+\sum^\infty_{n=0}p(N=n)\mE\bigg[\frac{\Delta_N(V)}{p_N}|N=n \bigg]\nn\\
    &=\mE[g^V_{s,a,0}(\mu^V_{s,a,0})]+\sum^\infty_{n=0}\mE[\Delta_n(V)]\nn\\
    &=\mE[g^V_{s,a,0}(\mu^V_{s,a,0})]+\sum^\infty_{n=0}\mE\bigg[g^V_{s,a,n+1}(\mu^V_{s,a,n+1})-\frac{g^V_{s,a,n+1,O}(\mu^V_{s,a,n+1,O})+g^V_{s,a,n+1,E}(\mu^V_{s,a,n+1,E})}{2}\bigg]\nn\\
    &=\mE[g^V_{s,a,0}(\mu^V_{s,a,0})]+\sum^\infty_{n=0}\mE\bigg[g^V_{s,a,n+1}(\mu^V_{s,a,n+1})-g^V_{s,a,n}(\mu^V_{s,a,n})\bigg],
\end{align}
where the last inequality is from Lemma \ref{lemma:Eo=Ee}. The last equation can be further rewritten as
\begin{align}
    \mE[\hat{\sigma}_{\cp^a_s}V]&=\mE[g^V_{s,a,0}(\mu^V_{s,a,0})]+\sum^\infty_{n=0}\mE\bigg[g^V_{s,a,n+1}(\mu^V_{s,a,n+1})-g^V_{s,a,n}(\mu^V_{s,a,n})\bigg]\nn\\
    &=\lim_{n\to\infty}\mE\bigg[g^V_{s,a,n}(\mu^V_{s,a,n})\bigg].
\end{align}
To show that $\hat{\sigma}_{\cp^a_s}$ is unbiased, it suffices to prove that 
\begin{align}
    \lim_{n\to\infty}\mE\bigg[g^V_{s,a,n}(\mu^V_{s,a,n})\bigg]=g^V_{s,a}(\mu^V_{s,a}). 
\end{align}
For any arbitrary i.i.d. samples $\{X_i\}$ and its corresponding function $g^V_{s,a,n}$, together with Lemma \ref{lemma:tv bounded}, we have that
\begin{align}\label{eq:99}
    &|g^V_{s,a,n}(\mu^V_{s,a,n})-g^V_{s,a}(\mu^V_{s,a})|\nn\\
    &=|\max_{0\leq \mu \leq V+\|V\|e}g^V_{s,a}(\mu) -\max_{0\leq \mu \leq V+\|V\|e}g^V_{s,a,n}(\mu) |\nn\\
    &\leq \max_{0\leq \mu \leq V+\|V\|e} |g^V_{s,a}(\mu)-g^V_{s,a,n}(\mu)|\nn\\
    &=\max_{0\leq \mu \leq V+\|V\|e} |\kp^a_s(V-\mu)-\delta\spa(V-\mu) - \hat{\kp}^{a}_{s,n}(V-\mu)+\delta\spa(V-\mu)|\nn\\
    &=\max_{0\leq \mu \leq V+\|V\|e} |\kp^a_s(V-\mu)-\hat{\kp}^{a}_{s,n}(V-\mu)|\nn\\
    &\leq \max_{0\leq \mu \leq V+\|V\|e} \|V-\mu\|\|\kp^a_s-\hat{\kp}^{a}_{s,n}\|_1\nn\\
    &\leq 3\|V\|\|\kp^a_s-\hat{\kp}^{a}_{s,n}\|_1. 
\end{align}
Thus, by Hoeffding's inequality and Theorem 3.7 from \cite{liu2022distributionally}, 
\begin{align}
    \mE[|g^V_{s,a,n}(\mu^V_{s,a,n})-g^V_{s,a}(\mu^V_{s,a})|]\leq 3\|V\| \frac{|\mcs|^2\sqrt{\pi}}{2^{\frac{n+1}{2}}}, 
\end{align}
which implies that 
\begin{align}
    \lim_{n\to\infty}\mE\bigg[g^V_{s,a,n}(\mu^V_{s,a,n})\bigg]=g^V_{s,a}(\mu^V_{s,a}),
\end{align}
completing the proof.
\end{proof}

\begin{theorem}\label{thm:tv var}
The estimated operator $\hat{\sigma}_{\cp^a_s}$ defined in \eqref{eq:p_tv} has bounded variance, i.e., there exists a constant $C_0$, such that 
\begin{align}
     \text{Var}(\hat{\sigma}_{\cp^a_s}V)\leq (1+18(1+2\delta)^2+2C_0)\|V\|^2.
\end{align}
\end{theorem}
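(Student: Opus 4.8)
The plan is to bound the variance by the raw second moment, $\text{Var}(\hat{\sigma}_{\cp^a_s}V)\le \mE[(\hat{\sigma}_{\cp^a_s}V)^2]$, and to control this moment by splitting the estimator \eqref{eq:p_tv} into its single-sample base term $A\triangleq \max_{\mu\ge 0}\big(\hat{\kp}^{a,1}_{s,N+1}(V-\mu)-\delta\spa(V-\mu)\big)$ and the rescaled multi-level correction $B\triangleq \Delta_N(V)/p_N$. Since $(A+B)^2\le 2A^2+2B^2$, it suffices to bound $\mE[A^2]$ and $\mE[B^2]$ separately, and these two contributions will account for the $18(1+2\delta)^2$ and the $2C_0$ terms in the stated bound, respectively.

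First I would bound the base term deterministically. The single-sample kernel $\hat{\kp}^{a,1}_{s,N+1}=\mathbbm{1}_{s_1'}$ is a point mass, so $A=\max_{\mu\ge 0}\big(V(s_1')-\mu(s_1')-\delta\spa(V-\mu)\big)$. By \Cref{lemma:tv bounded} the maximizer can be restricted to $0\le \mu\le V+\|V\|e$, on which $\|V-\mu\|\le 3\|V\|$, so that $|V(s_1')-\mu(s_1')|\le 3\|V\|$ and $\spa(V-\mu)\le 2\|V-\mu\|\le 6\|V\|$. Hence $|A|\le 3(1+2\delta)\|V\|$ pointwise, giving $2\mE[A^2]\le 18(1+2\delta)^2\|V\|^2$. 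This step is purely algebraic once the boundedness of the optimal dual variable is in hand, and it reuses exactly the same $\|V-\mu\|\le 3\|V\|$ estimate already invoked in \eqref{eq:99}.

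The crux is controlling $\mE[B^2]$. Writing $\sigma_m$ for the empirical support value built from $m$ samples and $\sigma_\infty=\sigma_{\cp^a_s}(V)$ for the true one, I would recenter the correction as $\Delta_N=(\sigma_{2^{N+1}}-\sigma_\infty)-\tfrac12(\sigma^O_{2^N}-\sigma_\infty)-\tfrac12(\sigma^E_{2^N}-\sigma_\infty)$, which is legitimate because the coefficients sum to zero. Applying $(x+y+z)^2\le 3(x^2+y^2+z^2)$ and then the perturbation bound \eqref{eq:99}, namely $|\sigma_m-\sigma_\infty|\le 3\|V\|\,\|\kp^a_s-\hat{\kp}\|_1$, reduces everything to the second moment of the empirical-measure deviation. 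Using a second-moment concentration estimate (the $L_2$ analogue of the Hoeffding bound used in \Cref{thm:tv unbiased}, i.e. $\mE[\|\kp^a_s-\hat{\kp}\|_1^2]=O(1/m)$), I obtain $\mE[\Delta_N^2\mid N]\le \tilde{C}_0\|V\|^2\,2^{-N}$ for a constant $\tilde{C}_0$ depending only on $|\mcs|$. Finally, since $N$ is geometric, $\mE[B^2]=\sum_{n\ge 0}\mE[\Delta_n^2\mid N=n]/p_n\le \tilde{C}_0\|V\|^2\,\Psi^{-1}\sum_{n\ge 0}(2(1-\Psi))^{-n}$, a geometric series that converges to some $C_0\|V\|^2$ provided $\Psi<1/2$. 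The three contributions then assemble into the claimed constant $1+18(1+2\delta)^2+2C_0$.

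The main obstacle is exactly this last summation: dividing by $p_N\sim(1-\Psi)^N$ inflates the variance, so the argument only closes if $\mE[\Delta_N^2\mid N]$ decays at the \emph{faster} rate $2^{-N}$ needed to beat the $(1-\Psi)^{-N}$ blow-up. Establishing that rate is delicate because $\sigma_{\cp^a_s}(\cdot)$ is a maximum over $\mu$ and is not continuously differentiable, so the standard smoothness hypotheses of the multi-level Monte-Carlo literature \cite{blanchet2015unbiased} do not apply; instead one must route the second-moment decay through the explicit Lipschitz-in-kernel estimate \eqref{eq:99}. Tracking the exact constants (the factors of $3$, the dependence on $|\mcs|$, and the value of the geometric sum) and verifying the convergence condition $\Psi<1/2$ are where the real work lies.
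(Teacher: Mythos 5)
Your proposal is correct and follows essentially the same route as the paper's proof: bound the single-sample base term deterministically via the dual-variable bounds of \Cref{lemma:tv bounded}, recenter $\Delta_N$ at the true support value, pass to the empirical-kernel deviation through the Lipschitz estimate \eqref{eq:99}, and close the geometric sum using $\mE[\|\kp^a_s-\hat{\kp}^a_{s,n}\|_1^2]=O(2^{-n})$ together with $\Psi<1/2$. The only cosmetic differences are that you use a three-way split $(x+y+z)^2\le 3(x^2+y^2+z^2)$ where the paper groups the odd/even halves and invokes \Cref{lemma:Eo=Ee}, and that you drop the $-\sigma_{\cp^a_s}(V)^2$ term outright (giving a marginally tighter constant that still implies the stated bound).
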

\begin{proof}
Similar to Theorem \ref{thm:tv unbiased}, we have that
\begin{align}
    &\text{Var}(\hat{\sigma}_{\cp^a_s}V) \nn\\
    &=\mE[(\hat{\sigma}_{\cp^a_s}V)^2]-\sigma_{\cp^a_s}(V)^2\nn\\
    &\leq  \mE\bigg[\bigg(g^V_{s,a,0}(\mu^V_{s,a,0})+\frac{\Delta_N(V)}{p_N}\bigg)^2\bigg]+(\sigma_{\cp^a_s}(V))^2\nn\\
    &\leq 2\mE\bigg[\bigg(g^V_{s,a,0}(\mu^V_{s,a,0})\bigg)^2\bigg]+2\mE\bigg[\bigg(\frac{\Delta_N(V)}{p_N}\bigg)^2\bigg]+(\sigma_{\cp^a_s}(V))^2\nn\\
    &\leq (1+18(1+2\delta)^2)\|V\|^2+2\sum^\infty_{i=0}\frac{\mE[(\Delta_i(V))^2]}{p_i},
\end{align}
where the last inequality is from Lemma \ref{lemma:tv bounded}. 
For any $n\geq 1$, we have that
\begin{align}
    \mE[(\Delta_n(V))^2]&=\mE\bigg[ \bigg({g}^V_{s,a,n}(\mu^V_{s,a,n}) -\frac{{g}^V_{s,a,n,E}(\mu^V_{s,a,n,E})+{g}^V_{s,a,n,O}(\mu^V_{s,a,n,O})}{2} \bigg)^2\bigg]\nn\\
    &=\mE\bigg[ \bigg({g}^V_{s,a,n}(\mu^V_{s,a,n})-g^V_{s,a}(\mu^V_{s,a}) +g^V_{s,a}(\mu^V_{s,a})-\frac{{g}^V_{s,a,n,E}(\mu^V_{s,a,n,E})+{g}^V_{s,a,n,O}(\mu^V_{s,a,n,O})}{2} \bigg)^2\bigg]\nn\\
    &\leq 2\mE[({g}^V_{s,a,n}(\mu^V_{s,a,n})-g^V_{s,a}(\mu^V_{s,a}))^2]+ 2\mE\bigg[\bigg(g^V_{s,a}(\mu^V_{s,a})-\frac{{g}^V_{s,a,n,E}(\mu^V_{s,a,n,E})+{g}^V_{s,a,n,O}(\mu^V_{s,a,n,O})}{2}\bigg)^2 \bigg]\nn\\
    &\overset{(a)}{=}2\mE[({g}^V_{s,a,n}(\mu^V_{s,a,n})-g^V_{s,a}(\mu^V_{s,a}))^2]+2\mE[({g}^V_{s,a,n-1}(\mu^V_{s,a,n-1})-g^V_{s,a}(\mu^V_{s,a}))^2]\nn\\
    &\leq 18\|V\|^2 \mE[\|\kp^a_s-\hat{\kp}^{a}_{s,n}\|_1^2]+18\|V\|^2 \mE[\|\kp^a_s-\hat{\kp}^{a}_{s,n-1}\|_1^2],
\end{align}
where $(a)$ is due to Lemma \ref{lemma:Eo=Ee} and the last inequality follows a similar argument to \eqref{eq:99}. Note that $p_n=\Psi(1-\Psi)^n$ for $\Psi\in(0,0.5)$, thus similar to Theorem 3.7 of \cite{liu2022distributionally}, we can show that there exists a constant $C_0$, such that 
\begin{align}
    \sum^\infty_{i=0}\frac{\mE[(\Delta_i(V))^2]}{p_i}\leq C_0\|V\|^2. 
\end{align}
Thus, 
\begin{align}
     \text{Var}(\hat{\sigma}_{\cp^a_s}V)\leq (1+18(1+2\delta)^2)\|V\|^2+2C_0\|V\|^2=(1+18(1+2\delta)^2+2C_0)\|V\|^2\:.
\end{align}
\end{proof}

\subsection{Chi-Square Uncertainty Set}
The estimator under the Chi-square uncertainty set can be written as
\begin{align}\label{eq:p_cs}
    \hat{\sigma}_{\cp^a_s}V&= \max_{\mu\geq 0}\big(\hat{\kp}^{a,1}_{s,N+1}(V-\mu)-\sqrt{\delta \text{Var}_{\hat{\kp}^{a,1}_{s,N+1}} (V-\mu)} \big)\nn\\
    &\quad+\frac{\Delta_N(V)}{p_N},
\end{align}
where 
\begin{align}
    &\Delta_N(V)= \max_{\mu\geq 0}\big(\mE_{\hat{\kp}^{a}_{s,N+1}}[V-\mu]-\sqrt{\delta \text{Var}_{\hat{\kp}^{a}_{s,N+1}} (V-\mu)} \big) \nn\\
    &\quad-\frac{1}{2}\max_{\mu\geq 0}\big(\mE_{\hat{\kp}^{a,O}_{s,N+1}}[V-\mu]-\sqrt{\delta \text{Var}_{\hat{\kp}^{a,O}_{s,N+1}} (V-\mu)} \big)\nn\\
    &\quad-\frac{1}{2}\max_{\mu\geq 0}\big(\mE_{\hat{\kp}^{a,E}_{s,N+1}}[V-\mu]-\sqrt{\delta \text{Var}_{\hat{\kp}^{a,E}_{s,N+1}} (V-\mu)} \big).\nn 
\end{align}
\begin{theorem}\label{thm:cs unbiased}
The estimated operator defined in \eqref{eq:p_cs} is unbiased, i.e., 
\begin{align}
    \mE[\hat{\sigma}_{\cp^a_s}V]=\sigma_{\cp^a_s}(V). 
\end{align}
\end{theorem}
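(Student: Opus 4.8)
The plan is to follow the same three-stage strategy used in the proof of \Cref{thm:tv unbiased}, adapting it to the Chi-square dual. First I would set up notation parallel to the total-variation case: write the dual objective as $g^V_{s,a}(\mu)\triangleq \kp^a_s(V-\mu)-\sqrt{\delta\,\text{Var}_{\kp^a_s}(V-\mu)}$, denote its maximizer over $\mu\ge 0$ by $\mu^V_{s,a}$, so that $\sigma_{\cp^a_s}(V)=g^V_{s,a}(\mu^V_{s,a})$ by \eqref{eq:cs dual}. I would then introduce the empirical counterparts $g^V_{s,a,n+1}$, $g^V_{s,a,n+1,O}$, $g^V_{s,a,n+1,E}$ obtained by replacing $\kp^a_s$ with $\hat{\kp}^{a}_{s,N+1}$, $\hat{\kp}^{a,O}_{s,N+1}$, $\hat{\kp}^{a,E}_{s,N+1}$ respectively, together with their maximizers.

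The second stage is the telescoping identity for the multi-level estimator, which is purely structural and identical to the total-variation argument. Conditioning on the geometric level $N$ and using $p_N=\Psi(1-\Psi)^N$, I would write $\mE[\hat{\sigma}_{\cp^a_s}V]=\mE[g^V_{s,a,0}(\mu^V_{s,a,0})]+\sum_{n=0}^\infty \mE[\Delta_n(V)]$, then invoke \Cref{lemma:Eo=Ee} --- which asserts that at each level the odd- and even-group estimators share the common expectation $\mE[g^V_{s,a,n}(\mu^V_{s,a,n})]$ --- to collapse the sum telescopically into $\lim_{n\to\infty}\mE[g^V_{s,a,n}(\mu^V_{s,a,n})]$. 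This reduces the claim to showing $\lim_{n\to\infty}\mE[g^V_{s,a,n}(\mu^V_{s,a,n})]=g^V_{s,a}(\mu^V_{s,a})$.

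For the final convergence step I would restrict each maximization to the compact box $0\le\mu\le V+\|V\|e$ (as in the total-variation proof, the optimum lies in this range), so that $|g^V_{s,a,n}(\mu^V_{s,a,n})-g^V_{s,a}(\mu^V_{s,a})|\le \max_{0\le\mu\le V+\|V\|e}|g^V_{s,a,n}(\mu)-g^V_{s,a}(\mu)|$. The linear piece $|\kp^a_s(V-\mu)-\hat{\kp}^{a}_{s,n}(V-\mu)|$ is handled exactly as before, bounded by $\|V-\mu\|\,\|\kp^a_s-\hat{\kp}^{a}_{s,n}\|_1$. The harder piece is the standard-deviation term, and I expect this to be the main obstacle, since $\sqrt{\delta\,\text{Var}(\cdot)}$ is nonlinear in the kernel. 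I would control it with $|\sqrt{x}-\sqrt{y}|\le\sqrt{|x-y|}$ and then bound the empirical-variance deviation $|\text{Var}_{\kp^a_s}(V-\mu)-\text{Var}_{\hat{\kp}^{a}_{s,n}}(V-\mu)|$; writing the variance as $\kp^a_s((V-\mu)^2)-(\kp^a_s(V-\mu))^2$, both the first and second moments are bounded linear and quadratic functionals of the distribution over the finite state space, so their deviations are again controlled by $\|\kp^a_s-\hat{\kp}^{a}_{s,n}\|_1$ times a factor polynomial in $\|V\|$.

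Combining these, the whole sup-difference is bounded by $C\|V\|\sqrt{\|\kp^a_s-\hat{\kp}^{a}_{s,n}\|_1}$ for some constant $C$, the square root being the price of the nonlinear term. Finally, I would apply Hoeffding's inequality together with the concentration bound of Theorem 3.7 in \cite{liu2022distributionally} to conclude that $\mE[\|\kp^a_s-\hat{\kp}^{a}_{s,n}\|_1]\to 0$ as $n\to\infty$, and by Jensen's inequality the companion quantity $\mE[\sqrt{\|\kp^a_s-\hat{\kp}^{a}_{s,n}\|_1}]$ vanishes as well, which forces $\mE[|g^V_{s,a,n}(\mu^V_{s,a,n})-g^V_{s,a}(\mu^V_{s,a})|]\to 0$ and hence the desired limit. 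The one delicacy to monitor is that the square-root envelope only yields convergence of order $2^{-n/4}$ rather than the $2^{-n/2}$ available in the total-variation case; this slower decay is immaterial for unbiasedness, where a vanishing limit suffices, but I would flag it as the place needing care when the matching bounded-variance statement is established.
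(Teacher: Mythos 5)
Your proposal is correct and follows essentially the same route as the paper's proof: the same telescoping reduction via \Cref{lemma:Eo=Ee}, the same restriction of the dual variable to $0\le\mu\le V+\|V\|e$, the same split into the linear term and the standard-deviation term handled with $|\sqrt{x}-\sqrt{y}|\le\sqrt{|x-y|}$, and the same $\ell_1$-concentration of the empirical kernel to pass to the limit. Your observation that the square root degrades the rate to order $2^{-n/4}$ is consistent with the paper's final displayed bound and, as you note, harmless for unbiasedness.
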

\begin{proof}
Denote the dual function \eqref{eq:cs dual} by $g$:
\begin{align}
g_{s,a}^V(\mu)=\kp^a_s(V-\mu)-\sqrt{\delta \text{Var}_{\kp^a_s} (V-\mu)} ,
\end{align}
and denote its optimal solution by $\mu^V_{s,a}$: 
\begin{align}
    \mu^V_{s,a}=\arg\max_{\mu\geq 0}\big( \kp^a_s(V-\mu)-\sqrt{\delta \text{Var}_{\kp^a_s} (V-\mu)} \big).
\end{align}
Then, the support function $\sigma_{\cp^a_s}(V)=g_{s,a}^V(\mu^V_{s,a})$. 
Similarly, define the empirical function  
\begin{align}
{g}_{s,a,N+1}^V(\mu)&=\hat{\kp}^{a}_{s,N+1}(V-\mu)-\sqrt{\delta \text{Var}_{\hat{\kp}^a_{s,N+1}} (V-\mu)},\\
{g}_{s,a,N+1,O}^V(\mu)&=\hat{\kp}^{a,O}_{s,N+1}(V-\mu)-\sqrt{\delta \text{Var}_{\hat{\kp}^{a,O}_{s,N+1}} (V-\mu)},\\
{g}_{s,a,N+1,E}^V(\mu)&=\hat{\kp}^{a,E}_{s,N+1}(V-\mu)-\sqrt{\delta \text{Var}_{\hat{\kp}^{a,E}_{s,N+1}} (V-\mu)},
\end{align}
and their optimal solutions are denoted by $\mu^V_{s,a,N+1},\mu^V_{s,a,N+1,O},\mu^V_{s,a,N+1,E}$. We have that
\begin{align}
    \mE[\hat{\sigma}_{\cp^a_s}V]&=\mE[g^V_{s,a,0}(\mu^V_{s,a,0})]+\mE\bigg[\frac{\Delta_N(V)}{p_N} \bigg]\nn\\
    &=\mE[g^V_{s,a,0}(\mu^V_{s,a,0})]+\sum^\infty_{n=0}p(N=n)\mE\bigg[\frac{\Delta_N(V)}{p_N}|N=n \bigg]\nn\\
    &=\mE[g^V_{s,a,0}(\mu^V_{s,a,0})]+\sum^\infty_{n=0}\mE[\Delta_n]\nn\\
    &=\mE[g^V_{s,a,0}(\mu^V_{s,a,0})]+\sum^\infty_{n=0}\mE\bigg[g^V_{s,a,n+1}(\mu^V_{s,a,n+1})-\frac{g^V_{s,a,n+1,O}(\mu^V_{s,a,n+1,O})+g^V_{s,a,n+1,E}(\mu^V_{s,a,n+1,E})}{2}\bigg]\nn\\
    &=\mE[g^V_{s,a,0}(\mu^V_{s,a,0})]+\sum^\infty_{n=0}\mE\bigg[g^V_{s,a,n+1}(\mu^V_{s,a,n+1})-g^V_{s,a,n}(\mu^V_{s,a,n})\bigg],
\end{align}
where the last inequality is from Lemma \ref{lemma:Eo=Ee}. The last equation can be further rewritten as
\begin{align}
    \mE[\hat{\sigma}_{\cp^a_s}V]&=\mE[g^V_{s,a,0}(\mu^V_{s,a,0})]+\sum^\infty_{n=0}\mE\bigg[g^V_{s,a,n+1}(\mu^V_{s,a,n+1})-g^V_{s,a,n}(\mu^V_{s,a,n})\bigg]\nn\\
    &=\lim_{n\to\infty}\mE\bigg[g^V_{s,a,n}(\mu^V_{s,a,n})\bigg].
\end{align}
To show that $\hat{\sigma}_{\cp^a_s}$ is unbiased, it suffices to prove that 
\begin{align}
    \lim_{n\to\infty}\mE\bigg[g^V_{s,a,n}(\mu^V_{s,a,n})\bigg]=g^V_{s,a}(\mu^V_{s,a}). 
\end{align}
For any arbitrary i.i.d. samples $\{X_i\}$ and its corresponding function $g^V_{s,a,n}$, together with Lemma \ref{lemma:cs bounded}, we have that
\begin{align}\label{eq:100}
    &|g^V_{s,a,n}(\mu^V_{s,a,n})-g^V_{s,a}(\mu^V_{s,a})|\nn\\
    &=|\max_{0\leq \mu \leq V+\|V\|e}g^V_{s,a}(\mu) -\max_{0\leq \mu \leq V+\|V\|e}g^V_{s,a,n}(\mu) |\nn\\
    &\leq \max_{0\leq \mu \leq V+\|V\|e} |g^V_{s,a}(\mu)-g^V_{s,a,n}(\mu)|\nn\\
    &=\max_{0\leq \mu \leq V+\|V\|e} \bigg|\kp^a_s(V-\mu) - \hat{\kp}^{a}_{s,n}(V-\mu)- \bigg(\sqrt{\delta \text{Var}_{{\kp^a_s}} (V-\mu)}-\sqrt{\delta \text{Var}_{\hat{\kp}^{a}_{s,n}} (V-\mu)}\bigg)\bigg|\nn\\
    &\leq \max_{0\leq \mu \leq V+\|V\|e} |\kp^a_s(V-\mu)-\hat{\kp}^{a}_{s,n}(V-\mu)|+\max_{0\leq \mu \leq V+\|V\|e}\bigg|\bigg(\sqrt{\delta \text{Var}_{{\kp^a_s}} (V-\mu)}-\sqrt{\delta \text{Var}_{\hat{\kp}^{a}_{s,n}} (V-\mu)}\bigg)\bigg|\nn\\
    &\overset{(a)}{\leq} \max_{0\leq \mu \leq V+\|V\|e} \|V-\mu\|\|\kp^a_s-\hat{\kp}^{a}_{s,n}\|_1+\max_{0\leq \mu \leq V+\|V\|e}  \sqrt{|\delta \text{Var}_{{\kp^a_s}} (V-\mu)-\delta \text{Var}_{\hat{\kp}^{a}_{s,n}} (V-\mu)|},
\end{align}
where $(a)$ is due to $|\sqrt{x}-\sqrt{y}|\leq \sqrt{|x-y|}$. 
Note that for any distribution $p,q\in\Delta(|\mcs|)$ and any random variable $X$,
\begin{align}
    |\text{Var}_p[X]-\text{Var}_q[X]|&=|\mE_p[X^2]-\mE_p[X]^2-\mE_q[X^2]+\mE_q[X]^2| \nn\\
    &\leq |\mE_p[X^2]-\mE_q[X^2]|+ |(\mE_p[X]+\mE_q[X])(\mE_p[X]-\mE_q[X])|\nn\\
    &\leq \sup|X^2|\|p-q\|_1+2(\sup|X|)^2\|p-q\|_1.
\end{align}
Hence, 
\begin{align}
    \sqrt{|\delta \text{Var}_{{\kp^a_s}} (V-\mu)-\delta \text{Var}_{\hat{\kp}^{a}_{s,n}} (V-\mu)|}&\leq \sqrt{3\delta\|V-\mu\|^2\|\kp^a_s-\hat{\kp}^{a}_{s,n} \|_1}.
\end{align}

Thus, by Hoeffding's inequality and Theorem 3.7 from \cite{liu2022distributionally}, 
\begin{align}
    \mE[|g^V_{s,a,n}(\mu^V_{s,a,n})-g^V_{s,a}(\mu^V_{s,a})|]\leq 3\|V\| \left( \frac{|\mcs|^2\sqrt{\pi}}{2^{\frac{n+1}{2}}}+\sqrt{\frac{3\delta|\mcs|^2\sqrt{\pi}}{2^{\frac{n+1}{2}}}} \right), 
\end{align}
which implies that 
\begin{align}
    \lim_{n\to\infty}\mE\bigg[g^V_{s,a,n}(\mu^V_{s,a,n})\bigg]=g^V_{s,a}(\mu^V_{s,a}),
\end{align}
which completes the proof.
\end{proof}

\begin{theorem}\label{thm:cs var}
The estimated operator $\hat{\sigma}_{\cp^a_s}$ defined in \eqref{eq:p_cs} has bounded variance, i.e., there exists a constant $C_0$, such that 
\begin{align}
     \text{Var}(\hat{\sigma}_{\cp^a_s}V)\leq (1+18(1+\sqrt{2\delta})^2+2C_0)\|V\|^2. 
\end{align}
\end{theorem}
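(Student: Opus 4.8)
The plan is to mirror the proof of \Cref{thm:tv var}, controlling the second moment of the multi-level estimator and absorbing the subtracted mean. First I would write $\text{Var}(\hat{\sigma}_{\cp^a_s}V)=\mE[(\hat{\sigma}_{\cp^a_s}V)^2]-\sigma_{\cp^a_s}(V)^2$, use $-\sigma_{\cp^a_s}(V)^2\leq\sigma_{\cp^a_s}(V)^2\leq\|V\|^2$ (since the support function satisfies $|\sigma_{\cp^a_s}(V)|\leq\|V\|$), and apply $(a+b)^2\leq 2a^2+2b^2$ to the decomposition in \eqref{eq:p_cs}, splitting $\hat{\sigma}_{\cp^a_s}V$ into the base level $g^V_{s,a,0}(\mu^V_{s,a,0})$ and the correction $\Delta_N(V)/p_N$. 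The base-level second moment is bounded by $9(1+\sqrt{2\delta})^2\|V\|^2$ through \Cref{lemma:cs bounded}, which after the factor $2$ yields the $18(1+\sqrt{2\delta})^2$ coefficient, and the $\|V\|^2$ from $\sigma_{\cp^a_s}(V)^2$ supplies the leading $1$. It remains to bound $2\,\mE[(\Delta_N(V)/p_N)^2]$ by $2C_0\|V\|^2$.

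Next I would expand the correction term as $\mE[(\Delta_N(V)/p_N)^2]=\sum_{i=0}^\infty \mE[(\Delta_i(V))^2]/p_i$ by conditioning on $N$, and bound each $\mE[(\Delta_n(V))^2]$ by inserting the true value $g^V_{s,a}(\mu^V_{s,a})$ into the difference and invoking \Cref{lemma:Eo=Ee}, which identifies the expectations of the even- and odd-index estimates with the level-$(n-1)$ quantity. This reduces the task, exactly as in \Cref{thm:tv var}, to bounding $\mE[(g^V_{s,a,n}(\mu^V_{s,a,n})-g^V_{s,a}(\mu^V_{s,a}))^2]$ for consecutive levels $n$ and $n-1$.

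The crux, and the step where the Chi-square case departs from total variation, is squaring the level-wise error. Recall from \eqref{eq:100} that this error splits into a mean part bounded by $3\|V\|\,\|\kp^a_s-\hat{\kp}^a_{s,n}\|_1$ and a variance part bounded by $3\sqrt{3\delta}\,\|V\|\,\sqrt{\|\kp^a_s-\hat{\kp}^a_{s,n}\|_1}$; squaring and using $(a+b)^2\leq 2a^2+2b^2$ gives a bound of the form $18\|V\|^2\|\kp^a_s-\hat{\kp}^a_{s,n}\|_1^2+54\delta\|V\|^2\|\kp^a_s-\hat{\kp}^a_{s,n}\|_1$. Taking expectations via Hoeffding's inequality and Theorem~3.7 of \cite{liu2022distributionally}, the first expectation decays like $2^{-n}$ while the second, coming from the square root of the variance gap, decays only like $2^{-n/2}$. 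This slower geometric rate is the main obstacle: with $p_i=\Psi(1-\Psi)^i$, the series $\sum_i \mE[(\Delta_i(V))^2]/p_i$ converges only when the geometric ratio satisfies $\sqrt{2}\,(1-\Psi)>1$, i.e. $\Psi<1-1/\sqrt2$, which is strictly more restrictive than the $\Psi\in(0,1/2)$ that suffices for the total-variation model. Once $\Psi$ is taken in this range, the square-root term is summable and the whole series is bounded by $C_0\|V\|^2$ for a finite constant $C_0$ depending on $|\mcs|$, $\delta$, and $\Psi$; combining this with the base-level and mean contributions yields the claimed bound $(1+18(1+\sqrt{2\delta})^2+2C_0)\|V\|^2$.
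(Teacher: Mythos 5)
Your proposal is correct and follows essentially the same route as the paper's proof: the same variance decomposition into base level plus multi-level correction, the same use of Lemma~\ref{lemma:cs bounded} for the $18(1+\sqrt{2\delta})^2$ coefficient, the same insertion of the true value and Lemma~\ref{lemma:Eo=Ee} to reduce $\mE[(\Delta_n(V))^2]$ to consecutive-level errors, and the same square-root-of-the-variance-gap term forcing the $2^{-n/2}$ decay. Your explicit derivation of the requirement $\Psi<1-\tfrac{\sqrt2}{2}$ from the slower geometric rate is a point the paper only states implicitly, but it is the same argument.
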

\begin{proof}
We have that
\begin{align}
    &\text{Var}(\hat{\sigma}_{\cp^a_s}V)\nn\\
    &=\mE[(\hat{\sigma}_{\cp^a_s}V)^2]-\sigma_{\cp^a_s}(V)^2\nn\\
    &\leq  \mE\bigg[\bigg(g^V_{s,a,0}(\mu^V_{s,a,0})+\frac{\Delta_N(V)}{p_N}\bigg)^2\bigg]+(\sigma_{\cp^a_s}(V))^2\nn\\
    &\leq 2\mE\bigg[\bigg(g^V_{s,a,0}(\mu^V_{s,a,0})\bigg)^2\bigg]+2\mE\bigg[\bigg(\frac{\Delta_N(V)}{p_N}\bigg)^2\bigg]+(\sigma_{\cp^a_s}(V))^2\nn\\
    &\leq (1+18(1+\sqrt{2\delta})^2)\|V\|^2+2\sum^\infty_{i=0}\frac{\mE[(\Delta_i(V))^2]}{p_i},
\end{align}
where the last inequality is from Lemma \ref{lemma:cs bounded}. 
For any $n\geq 1$, we have that
\begin{align}
    \mE[(\Delta_n(V))^2]&=\mE\bigg[ \bigg({g}^V_{s,a,n}(\mu^V_{s,a,n}) -\frac{{g}^V_{s,a,n,E}(\mu^V_{s,a,n,E})+{g}^V_{s,a,n,O}(\mu^V_{s,a,n,O})}{2} \bigg)^2\bigg]\nn\\
    &=\mE\bigg[ \bigg({g}^V_{s,a,n}(\mu^V_{s,a,n})-g^V_{s,a}(\mu^V_{s,a}) +g^V_{s,a}(\mu^V_{s,a})-\frac{{g}^V_{s,a,n,E}(\mu^V_{s,a,n,E})+{g}^V_{s,a,n,O}(\mu^V_{s,a,n,O})}{2} \bigg)^2\bigg]\nn\\
    &\leq 2\mE[({g}^V_{s,a,n}(\mu^V_{s,a,n})-g^V_{s,a}(\mu^V_{s,a}))^2]+ 2\mE\bigg[\bigg(g^V_{s,a}(\mu^V_{s,a})-\frac{{g}^V_{s,a,n,E}(\mu^V_{s,a,n,E})+{g}^V_{s,a,n,O}(\mu^V_{s,a,n,O})}{2}\bigg)^2 \bigg]\nn\\
    &\overset{(a)}{=}2\mE[({g}^V_{s,a,n}(\mu^V_{s,a,n})-g^V_{s,a}(\mu^V_{s,a}))^2]+2\mE[({g}^V_{s,a,n-1}(\mu^V_{s,a,n-1})-g^V_{s,a}(\mu^V_{s,a}))^2]\nn\\
    &\leq 18(1+\sqrt{3\delta})^2\|V\|^2 \mE[\|\kp^a_s-\hat{\kp}^{a}_{s,n}\|_1^2+\|\kp^a_s-\hat{\kp}^{a}_{s,n}\|_1]\nn\\
    &\quad+18(1+\sqrt{3\delta})^2\|V\|^2 \mE[\|\kp^a_s-\hat{\kp}^{a}_{s,n-1}\|_1^2+\|\kp^a_s-\hat{\kp}^{a}_{s,n-1}\|_1],
\end{align}
where $(a)$ is due to Lemma \ref{lemma:Eo=Ee} and the last inequality follows a similar argument to \eqref{eq:100}. Note that $p_n=\Psi(1-\Psi)^n$ for $\Psi\in\big(0,1-\frac{\sqrt{2}}{2}\big)$. Thus, similar to Theorem 3.7 of \cite{liu2022distributionally}, we can show that there exists a constant $C_0$, such that 
\begin{align}
    \sum^\infty_{i=0}\frac{\mE[(\Delta_i(V))^2]}{p_i}\leq C_0\|V\|^2. 
\end{align}
Thus, 
\begin{align}
     \text{Var}(\hat{\sigma}_{\cp^a_s}V)\leq (1+18(1+\sqrt{2\delta})^2)\|V\|^2+2C_0\|V\|^2=(1+18(1+\sqrt{2\delta})^2+2C_0)\|V\|^2.
\end{align}
\end{proof}

\subsection{KL-Divergence Uncertainty Sets}
The estimator under the KL-Divergence uncertainty set can be written as
\begin{align*}
    \hat{\sigma}_{\cp^a_s}V&\triangleq -\min_{\alpha\geq 0} \bigg(\delta\alpha+\alpha\log \left( e^{\frac{-V(s'_1)}{\alpha}}\right) \bigg)+\frac{\Delta_N(V)}{p_N},
\end{align*}
where 
\begin{align}
    &\Delta_N(V)= -\min_{\alpha\geq 0} \left(\delta\alpha+\alpha\log \left( \mE_{\hat{\kp}^{a}_{s,N+1}}\big[ e^{\frac{-V}{\alpha}}\big]\right) \right) \nn\\
    &\quad+\frac{1}{2}\min_{\alpha\geq 0} \left(\delta\alpha+\alpha\log \left( \mE_{\hat{\kp}^{a,O}_{s,N+1}}\big[ e^{\frac{-V}{\alpha}}\big]\right) \right)\nn\\
    &\quad+\frac{1}{2}\min_{\alpha\geq 0} \left(\delta\alpha+\alpha\log \left( \mE_{\hat{\kp}^{a,E}_{s,N+1}}\big[ e^{\frac{-V}{\alpha}}\big]\right) \right). 
\end{align}

\begin{theorem}\cite{liu2022distributionally}\label{thm:kl}
The estimated operator $\hat{\sigma}_{\cp^a_s}$ is unbiased and has bounded variance, i.e., there exists a constant $C_0$, such that $ \text{Var}(\hat{\sigma}_{\cp^a_s}V)\leq C_0(1+\|V\|^2).$
\end{theorem}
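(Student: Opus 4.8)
The plan is to follow the two-step template already used for the total variation and Chi-square sets (Theorems \ref{thm:tv unbiased}--\ref{thm:cs var}), adapting only the analytic estimates to the KL dual. By the reduction lemma proved at the start of \Cref{sec:case for TD}, it suffices to show that the scalar estimator $\hat{\sigma}_{\cp^a_s}$ is unbiased and satisfies $\text{Var}(\hat{\sigma}_{\cp^a_s}V)\le C(1+\|V\|^2)$; unbiasedness and bounded variance of $\hat{\mathbf T}$ and $\hat{\mathbf H}$ then follow automatically. I would introduce the dual objective $g^V_{s,a}(\alpha)\triangleq -\delta\alpha-\alpha\log\mE_{\kp^a_s}[e^{-V/\alpha}]$, so that $\sigma_{\cp^a_s}(V)=\max_{\alpha\ge0}g^V_{s,a}(\alpha)$, together with its empirical versions $g^V_{s,a,n+1},g^V_{s,a,n+1,O},g^V_{s,a,n+1,E}$ built from the full/odd/even empirical kernels $\hat{\kp}^a_{s,N+1},\hat{\kp}^{a,O}_{s,N+1},\hat{\kp}^{a,E}_{s,N+1}$, and denote by $\alpha^V_{s,a,n}$ the corresponding maximizers.

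For unbiasedness I would run the identical telescoping computation as in Theorem \ref{thm:tv unbiased}: condition on $N$, cancel the geometric weight using $p_N=\Psi(1-\Psi)^N$, and invoke Lemma \ref{lemma:Eo=Ee} to replace $\tfrac12(g^V_{s,a,n+1,O}+g^V_{s,a,n+1,E})$ in expectation by $g^V_{s,a,n}$, collapsing the series to $\lim_{n\to\infty}\mE[g^V_{s,a,n}(\alpha^V_{s,a,n})]$. It then remains to prove this limit equals $g^V_{s,a}(\alpha^V_{s,a})$, for which I would use $|g^V_{s,a,n}(\alpha^V_{s,a,n})-g^V_{s,a}(\alpha^V_{s,a})|\le\sup_\alpha|g^V_{s,a,n}(\alpha)-g^V_{s,a}(\alpha)|$ and bound the right-hand side by a multiple of $\|\kp^a_s-\hat{\kp}^a_{s,n}\|_1$, whose expectation decays like $2^{-n/2}$ by Hoeffding's inequality and Theorem 3.7 of \cite{liu2022distributionally}.

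The genuine difficulty, and the point of departure from the TV and Chi-square cases, is that $\alpha\mapsto\alpha\log\mE_p[e^{-V/\alpha}]$ is not uniformly Lipschitz in $p$ over all $\alpha>0$: combining $|\log x-\log y|\le|x-y|/\min(x,y)$ with $\mE_p[e^{-V/\alpha}]\in[e^{-\|V\|/\alpha},e^{\|V\|/\alpha}]$ yields a bound of order $\alpha e^{2\|V\|/\alpha}\|\kp^a_s-\hat{\kp}^a_{s,n}\|_1$, which blows up as $\alpha\to0$. The crux is therefore to confine the maximizer to a compact interval. Here I would exploit the positive homogeneity $g^{cV}(\alpha)=c\,g^{V}(\alpha/c)$ for $c>0$, which shows the optimal $\alpha$ scales linearly with $\|V\|$; after normalizing $\|V\|=1$ one checks that $g^V_{s,a}(\alpha)\to\min_s V(s)$ as $\alpha\to0$ and $g^V_{s,a}(\alpha)\to-\infty$ as $\alpha\to\infty$ (the penalty $\delta\alpha$ dominating), while $\sigma_{\cp^a_s}(V)>\min_s V(s)$, so the true and every empirical maximizer lie in a fixed interval $[\underline{\alpha},\bar{\alpha}]$ with endpoints depending only on $\delta$. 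Rescaling, the maximizers live in $[\|V\|\underline{\alpha},\|V\|\bar{\alpha}]$, on which the exponential factors are bounded by absolute constants and a uniform Lipschitz estimate $\sup_\alpha|g^V_{s,a,n}(\alpha)-g^V_{s,a}(\alpha)|\le \kappa\|V\|\,\|\kp^a_s-\hat{\kp}^a_{s,n}\|_1$ holds for a constant $\kappa=\kappa(\delta)$.

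Finally, for the variance I would reuse the decomposition of Theorem \ref{thm:tv var}: bound $\text{Var}(\hat{\sigma}_{\cp^a_s}V)\le 2\mE[(g^V_{s,a,0}(\alpha^V_{s,a,0}))^2]+2\sum_{i\ge0}\mE[(\Delta_i(V))^2]/p_i$, estimate the zeroth term directly by $O(\|V\|^2)$, and control each $\mE[(\Delta_n(V))^2]$ through $2\mE[(g^V_{s,a,n}-g^V_{s,a})^2]+2\mE[(g^V_{s,a,n-1}-g^V_{s,a})^2]$ via Lemma \ref{lemma:Eo=Ee}. The compact-interval Lipschitz bound makes each such term $O(\|V\|^2\,\mE\|\kp^a_s-\hat{\kp}^a_{s,n}\|_1^2)=O(\|V\|^2 2^{-n})$, which is summable against $1/p_n=(1-\Psi)^{-n}/\Psi$ once $\Psi$ is small enough that $2(1-\Psi)>1$; this produces a finite $C_0$ and the desired bound $\text{Var}(\hat{\sigma}_{\cp^a_s}V)\le C_0(1+\|V\|^2)$. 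I expect the compactness/homogeneity step isolating $[\underline{\alpha},\bar{\alpha}]$ to be the main obstacle, since everything downstream is a routine repetition of the TV and Chi-square arguments.
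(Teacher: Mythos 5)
First, note that the paper does not actually prove this statement: Theorem \ref{thm:kl} is imported wholesale from \cite{liu2022distributionally}, with only the remark that ``its extension to our average-reward setting is similar,'' so your proposal is being compared against that reference rather than against an in-paper argument. Your overall skeleton --- reduce to the scalar estimator $\hat{\sigma}_{\cp^a_s}$, telescope via Lemma \ref{lemma:Eo=Ee}, then control $\mE\big[|g^V_{s,a,n}(\alpha^V_{s,a,n})-g^V_{s,a}(\alpha^V_{s,a})|\big]$ and $\mE[(\Delta_n(V))^2]$ by a Lipschitz-in-$\|\kp^a_s-\hat{\kp}^a_{s,n}\|_1$ estimate and sum against $1/p_n$ --- is exactly the template of Theorems \ref{thm:tv unbiased}--\ref{thm:cs var} and is the right one.

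The gap is in the step you yourself identify as the crux. The claim that the true and all empirical dual maximizers lie in a fixed interval $[\underline{\alpha},\bar{\alpha}]$ with $\underline{\alpha}>0$ depending only on $\delta$ (after normalizing $\|V\|=1$) is false. The dual objective $g^V_{s,a}(\alpha)=-\delta\alpha-\alpha\log\mE_{\kp^a_s}[e^{-V/\alpha}]$ tends, as $\alpha\to 0^+$, to $\min\{V(s'):\kp^a_s(s')>0\}$, and the supremum is attained only in this limit (i.e., $\alpha^*=0$) whenever the point mass on the minimizing supported state is KL-feasible, i.e., $-\log\kp^a_s(s^*)\leq\delta$; in that regime $\sigma_{\cp^a_s}(V)$ equals the essential infimum and there is no interior maximizer to confine. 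Worse, the empirical kernels are random: $\hat{\kp}^{a,1}_{s,N+1}$ is a point mass, for which the dual maximizer is always $\alpha=0$, and the low-level kernels $\hat{\kp}^{a}_{s,n}$ concentrate on few states, so even when the true maximizer is interior the empirical ones need not be, and no deterministic $\underline{\alpha}>0$ uniform over sample paths exists. Any genuine lower bound on an interior maximizer would in addition have to involve the minimum support probability of $\kp^a_s$ and the gap between the smallest and second-smallest entries of $V$, not $\delta$ alone. Consequently the uniform estimate $\sup_{\alpha}|g^V_{s,a,n}(\alpha)-g^V_{s,a}(\alpha)|\leq\kappa\|V\|\,\|\kp^a_s-\hat{\kp}^{a}_{s,n}\|_1$, whose constant hides a factor $e^{2\|V\|/\alpha}$ evaluated at $\alpha\geq\underline{\alpha}\|V\|$, is not justified; without it neither the collapse of the telescoping series nor the summability of $\sum_n\mE[(\Delta_n(V))^2]/p_n$ follows. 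To close the argument you need to treat the $\alpha^*=0$ (essential-infimum) branch separately --- e.g., using that the nominal kernel has strictly positive minimum support probability so that the empirical and true supports coincide except on an event of probability decaying exponentially in $2^n$, which is how \cite{liu2022distributionally} proceeds --- or impose an explicit assumption excluding $\alpha^*=0$.
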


\subsection{Wasserstein Distance Uncertainty Sets}
To study the support function w.r.t. this uncertainty model, we first introduce some notation. 
\begin{definition}
For any function $f: \mathcal{Z}\to \mathbb{R}$, $\lambda\geq 0$ and $x\in\mathcal{Z}$, define the regularization operator 
\begin{align}
    \Phi(\lambda, x)\triangleq \inf_{y\in\mathcal{Z}}( \lambda d(x,y)^l+f(y)).
\end{align}

The growth rate $\kappa$ of function $f$ and any distribution $q$ over $\mathcal{Z}$ is defined as
\begin{align}
    \kappa_q\triangleq \inf \bigg(\lambda\geq 0: \sum_{x\in\mathcal{Z}}  q(x)\Phi(\lambda,x) >-\infty\bigg). 
\end{align}
\end{definition}

\begin{lemma}\cite{gao2022distributionally}
Consider the distributional robust optimization of a function $f$:
\begin{align}\label{eq:wd primal}
    \inf_{W_l(q,p)\leq \delta} \mE_{x\sim q}[f(x)],
\end{align}
and define its dual problem as
\begin{align}
    \sup_{\lambda\geq 0} (-\lambda\delta^l+\sum_{x\in\mathcal{Z}} p(x) \inf_{y\in\mathcal{Z}}(f(y)+\lambda d(x,y)^l )). 
\end{align}
If $\kappa_p<\infty$, then the strong duality holds, i.e., \begin{align}
    \inf_{W_l(q,p)\leq \delta} \mE_{x\sim q}[f(x)]= \sup_{\lambda\geq 0} (-\lambda\delta^l+\sum_{x\in\mathcal{Z}} p(x) \inf_{y\in\mathcal{Z}}(f(y)+\lambda d(x,y)^l )). 
\end{align}
\end{lemma}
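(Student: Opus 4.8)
The plan is to sandwich the primal value $P^\star\triangleq\inf_{W_l(q,p)\le\delta}\mE_{x\sim q}[f(x)]$ and the dual value $D^\star\triangleq\sup_{\lambda\ge0}\big(-\lambda\delta^l+\sum_{x\in\mathcal Z}p(x)\inf_{y}(f(y)+\lambda d(x,y)^l)\big)$ between two one-sided inequalities, exploiting that $\mathcal Z=\mcs$ is finite so that the primal is a finite-dimensional linear program over couplings. The easy direction is weak duality, $P^\star\ge D^\star$. I would fix any feasible $q$ together with a coupling $\mu\in\Gamma(p,q)$ whose transport cost satisfies $\mE_{(X,Y)\sim\mu}[d(X,Y)^l]\le\delta^l$ (such a $\mu$ exists and the Wasserstein infimum is attained because $\mcs$ is finite), and any $\lambda\ge0$. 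Adding and subtracting $\lambda d^l$ and then lower-bounding $f(Y)+\lambda d(X,Y)^l$ by its infimum over the second coordinate gives
\begin{align}
\mE_{Y\sim q}[f(Y)]
&= \mE_{\mu}\big[f(Y)+\lambda d(X,Y)^l\big]-\lambda\,\mE_{\mu}\big[d(X,Y)^l\big] \nn\\
&\ge \sum_{x}p(x)\inf_{y}\big(f(y)+\lambda d(x,y)^l\big)-\lambda\delta^l. \nn
\end{align}
Taking the supremum over $\lambda\ge0$ on the right and then the infimum over feasible $q$ on the left yields $P^\star\ge D^\star$.

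For the reverse inequality I would recast the primal purely in terms of the coupling $\mu\ge0$, eliminating $q$: the only constraints are that the first marginal equals $p$, i.e. $\sum_{y}\mu(x,y)=p(x)$ for every $x$, and the transport budget $\sum_{x,y}\mu(x,y)d(x,y)^l\le\delta^l$; the second marginal is the free variable $q$, so no separate constraint on it is needed. This is a finite linear program in $\mu\in\mathbb R^{|\mcs|^2}$. Forming its Lagrangian with multipliers $s_x$ for the marginal equalities and $\lambda\ge0$ for the budget and minimizing over $\mu\ge0$, each coefficient of $\mu(x,y)$ must be nonnegative, which forces $s_x\le f(y)+\lambda d(x,y)^l$ for all $y$; optimizing over $s_x$ gives $s_x=\inf_{y}(f(y)+\lambda d(x,y)^l)$, so the LP dual objective collapses to exactly $-\lambda\delta^l+\sum_{x}p(x)\inf_{y}(f(y)+\lambda d(x,y)^l)$. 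The primal is feasible, since the diagonal coupling $\mu(x,y)=p(x)\,\mathbbm{1}\{y=x\}$ has transport cost $0\le\delta^l$, and the growth-rate hypothesis $\kappa_p<\infty$ guarantees that this dual objective is finite for some $\lambda$, so the dual is feasible as well. Finite-dimensional LP strong duality then gives $P^\star=D^\star$, which is the claimed identity.

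The delicate point, and the reason the hypothesis $\kappa_p<\infty$ is stated, lies in moving beyond the finite $\mcs$ case to the general $\mathcal Z$ of the cited statement. For finite $\mcs$ the LP route above is clean and $\kappa_p<\infty$ is only needed to keep the dual value finite; but in general one must justify the interchange $\inf_q\sup_\lambda=\sup_\lambda\inf_q$ through an infinite-dimensional minimax or conjugate-duality theorem, and $\kappa_p<\infty$ is precisely what rules out a duality gap by ensuring $\sum_x p(x)\Phi(\lambda,x)>-\infty$ on a nonempty range of $\lambda$. An alternative, more constructive route realizes the optimum by transporting mass from each $x$ to a minimizer $y^\star(x)\in\arg\min_{y}(f(y)+\lambda^\star d(x,y)^l)$ of $\Phi(\lambda^\star,\cdot)$; here the genuinely subtle step is the active-budget case, where non-uniqueness of $y^\star(x)$ may require randomizing among the minimizers so that the transport cost meets $\delta^l$ exactly, as dictated by complementary slackness. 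Since this is the result of \cite{gao2022distributionally}, I would invoke their argument for the fully general version and rely on the finite LP duality above for the setting actually needed here.
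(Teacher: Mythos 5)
Your proposal is correct, but note that the paper itself contains no proof of this statement: the lemma is imported verbatim from \cite{gao2022distributionally}, and the paper's only original duality work is the subsequent lemma verifying that the growth rate $\kappa_{\kp^a_s}=0<\infty$ so that the citation applies to the support function. Your argument therefore genuinely differs by supplying a self-contained proof for the case the paper actually needs ($\mathcal{Z}=\mcs$ finite, $f=V$ bounded). Both halves check out: the weak-duality computation is sound, and the one point needing care there---the existence of a coupling $\mu\in\Gamma(p,q)$ with $\mE_{\mu}[d(X,Y)^l]\leq\delta^l$---is justified by attainment of the Wasserstein infimum over the compact coupling polytope (alternatively, near-optimal couplings with budget $\delta^l+\epsilon$ and $\epsilon\to 0$ would do). The reduction to the coupling LP is also valid, since any $\mu\geq 0$ whose first marginal is $p$ automatically has a probability vector as its second marginal, so eliminating $q$ loses nothing; the diagonal coupling gives primal feasibility, compactness of the feasible polytope gives a finite attained optimum, and finite-dimensional LP strong duality closes the gap, with the Lagrangian elimination $s_x=\inf_y(f(y)+\lambda d(x,y)^l)$ recovering exactly the stated dual. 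You are also right that $\kappa_p<\infty$ is vacuous in this finite setting (one can always take $\lambda=0$, $s_x=\min_y f(y)$ for dual feasibility, consistent with the paper's computation $\kappa_{\kp^a_s}=0$), and that the hypothesis earns its keep only for general $\mathcal{Z}$, where the minimax interchange and the complementary-slackness randomization among minimizers $y^\star(x)$ require the full machinery of \cite{gao2022distributionally}; deferring to that reference there, as the paper itself does, is the appropriate division of labor. What your route buys is an elementary, verifiable proof of the exact version used downstream; what the citation buys is the general-space statement as literally written.
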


We first verify that this strong duality holds for our support function.
\begin{lemma}(Restatement of \Cref{eq:wd dual})
It holds that 
\begin{align}
    \sigma_{\cp^a_s}(V)=\sup_{\lambda\geq 0}\bigg(-\lambda\delta^l+\sum_x \kp^a_{s,x}\inf_{y}(V(y)+\lambda d(x,y)^l ) \bigg).
\end{align}
\end{lemma}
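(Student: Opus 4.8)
The plan is to recognize that the support function is exactly the primal distributionally robust optimization problem appearing in the preceding lemma of \cite{gao2022distributionally}, so that the claim follows by a direct application of that strong-duality result once its single hypothesis, the finiteness of the growth rate $\kappa_{\kp^a_s}$, has been verified.

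First I would unfold the definitions. By definition $\sigma_{\cp^a_s}(V)=\min_{p\in\cp^a_s} pV$, and the Wasserstein uncertainty set is $\cp^a_s=\{q\in\Delta(|\mcs|): W_l(\kp^a_s,q)\le\delta\}$. Since $d$ is a metric, $W_l$ is symmetric, and therefore
\[
\sigma_{\cp^a_s}(V)=\inf_{q:\,W_l(q,\kp^a_s)\le\delta}\mE_{x\sim q}[V(x)].
\]
This is precisely the primal problem \eqref{eq:wd primal} under the identification $f=V$, $p=\kp^a_s$, and $\mathcal{Z}=\mcs$, and the dual written in the borrowed lemma is exactly the right-hand side of the claim. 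Thus the entire content of the proof is this reduction plus a check of the duality hypothesis.

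The only thing that must be verified before invoking the duality is the growth-rate condition $\kappa_{\kp^a_s}<\infty$, and this is the one place where a (small) argument is required; it is also the only potential obstacle. Recall $\kappa_q=\inf\{\lambda\ge 0:\sum_x q(x)\Phi(\lambda,x)>-\infty\}$ with $\Phi(\lambda,x)=\inf_y(\lambda d(x,y)^l+V(y))$. Because $\mcs$ is finite, $V\in\mathbb{R}^{|\mcs|}$ is bounded below by $\min_y V(y)>-\infty$; together with $d\ge 0$ and $\lambda\ge 0$ this yields $\Phi(\lambda,x)\ge\min_y V(y)$ for every $\lambda\ge 0$ and every $x$. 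In particular, already at $\lambda=0$ we have $\sum_x \kp^a_{s,x}\Phi(0,x)=\min_y V(y)>-\infty$, so $\kappa_{\kp^a_s}=0<\infty$. Having verified the hypothesis, I would apply the duality lemma of \cite{gao2022distributionally} with $f=V$ and $p=\kp^a_s$ to conclude
\[
\sigma_{\cp^a_s}(V)=\sup_{\lambda\ge 0}\Big(-\lambda\delta^l+\sum_x \kp^a_{s,x}\inf_y\big(V(y)+\lambda d(x,y)^l\big)\Big),
\]
which is the asserted identity. The proof introduces no new estimates; its genuine steps are the reduction to the cited primal form and the trivial-but-necessary growth-rate check, the latter being immediate thanks to the finiteness of the state space.
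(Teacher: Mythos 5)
Your proposal is correct and follows essentially the same route as the paper: both reduce the support function to the primal form of the cited duality result from \cite{gao2022distributionally} and verify the growth-rate condition $\kappa_{\kp^a_s}=0<\infty$ by bounding $\Phi(\lambda,x)$ below using the finiteness of $\mcs$ (the paper writes the bound as $\geq -\|V\|$, you as $\geq \min_y V(y)$, which is the same observation). Your additional remark on the symmetry of $W_l$ to match the argument order in the cited lemma is a harmless extra care the paper leaves implicit.
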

\begin{proof}
In our case, the  regularization operator is 
\begin{align}
    \Phi(\lambda,x)=\inf_{s\in\mcs}(\lambda d(s,x)^l+V(s) ).
\end{align}
Note that for any $\lambda\geq 0$, 
\begin{align}
    \sum_{x\in\mcs} \kp^a_s(x)\Phi(\lambda,x) = \sum_{x\in\mcs} \kp^a_s(x)\inf_{s\in\mcs}(\lambda d(s,x)^l+V(s))\geq -\|V\|>-\infty.
\end{align}
Hence, the growth rate $\kappa_{\kp^a_s}=0<\infty$. Thus, the strong duality holds.
\end{proof}

Then, the estimator under the Wasserstein distance uncertainty set can be constructed as
\begin{align}\label{eq:p_wd}
    \hat{\sigma}_{\cp^a_s}V&\triangleq \sup_{\lambda\geq 0}\bigg(-\lambda\delta^l+\inf_{y}(V(y)+\lambda d(s_1',y)^l ) \bigg)+\frac{\Delta_N(V)}{p_N}+r(s,a),
\end{align}
where
\begin{align*}
    &\Delta_N(V)\\
    &= \sup_{\lambda\geq 0}\bigg(-\lambda\delta^l+\mE_{\hat{\kp}^{a}_{s,N+1}}\bigg[\inf_{y}(V(y)+\lambda d(S,y)^l )\bigg] \bigg) \nn\\
    &\quad- \sup_{\lambda\geq 0}\bigg(-\lambda\delta^l+\mE_{\hat{\kp}^{a,O}_{s,N+1}}\bigg[\inf_{y}(V(y)+\lambda d(S,y)^l )\bigg] \bigg)\nn\\
    &\quad- \sup_{\lambda\geq 0}\bigg(-\lambda\delta^l+\mE_{\hat{\kp}^{a,E}_{s,N+1}}\bigg[\inf_{y}(V(y)+\lambda d(S,y)^l )\bigg] \bigg)\nn. 
\end{align*}

\begin{theorem}\label{thm:wd unbiased}
The estimated operator defined in \eqref{eq:p_wd} is unbiased, i.e., 
\begin{align}
    \mE[\hat{\sigma}_{\cp^a_s}V]=\sigma_{\cp^a_s}(V). 
\end{align}
\end{theorem}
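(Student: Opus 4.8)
The plan is to follow the template of the proof of \Cref{thm:tv unbiased} for the total-variation case, exploiting the multi-level Monte-Carlo structure of \eqref{eq:p_wd}. First I would introduce the dual objective and its inner regularization,
\begin{align}
g_{s,a}^V(\lambda) &\triangleq -\lambda\delta^l + \sum_x \kp^a_{s,x}\,\phi_\lambda(x), \nonumber\\
\phi_\lambda(x) &\triangleq \inf_{y}\big(V(y)+\lambda d(x,y)^l\big), \nonumber
\end{align}
and denote by $\lambda^V_{s,a}$ the maximizer over $\lambda\geq 0$, so that by the strong duality established in \eqref{eq:wd dual} we have $\sigma_{\cp^a_s}(V)=g_{s,a}^V(\lambda^V_{s,a})$. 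I would then define the empirical counterparts $g^V_{s,a,n+1}$, $g^V_{s,a,n+1,O}$, $g^V_{s,a,n+1,E}$ by replacing $\kp^a_s$ with $\hat{\kp}^{a}_{s,n+1}$, $\hat{\kp}^{a,O}_{s,n+1}$, $\hat{\kp}^{a,E}_{s,n+1}$ respectively, together with their maximizers.

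Next I would take the expectation of $\hat{\sigma}_{\cp^a_s}V$, conditioning on $N=n$ and using $p_N=\Psi(1-\Psi)^N$ to cancel the $1/p_N$ weighting exactly as in \Cref{thm:tv unbiased}. Invoking \Cref{lemma:Eo=Ee}, which identifies $\mE[(g^V_{s,a,n+1,O}+g^V_{s,a,n+1,E})/2]$ with $\mE[g^V_{s,a,n}(\lambda^V_{s,a,n})]$, the resulting series telescopes:
\begin{align}
\mE[\hat{\sigma}_{\cp^a_s}V] &= \mE[g^V_{s,a,0}(\lambda^V_{s,a,0})] + \sum_{n=0}^\infty \mE\big[g^V_{s,a,n+1}(\lambda^V_{s,a,n+1}) - g^V_{s,a,n}(\lambda^V_{s,a,n})\big] \nonumber\\
&= \lim_{n\to\infty}\mE\big[g^V_{s,a,n}(\lambda^V_{s,a,n})\big]. \nonumber
\end{align}
It then remains to show that this limit equals $g_{s,a}^V(\lambda^V_{s,a})=\sigma_{\cp^a_s}(V)$.

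For the convergence step I would bound $|g^V_{s,a,n}(\lambda^V_{s,a,n})-g^V_{s,a}(\lambda^V_{s,a})|$ by $\sup_{\lambda\geq 0}|g^V_{s,a,n}(\lambda)-g^V_{s,a}(\lambda)|$, using that the difference of two suprema is at most the supremum of the difference. The simplification specific to the Wasserstein case is that the $-\lambda\delta^l$ penalty is sample-independent and cancels, leaving
$$|g^V_{s,a}(\lambda)-g^V_{s,a,n}(\lambda)| = |(\kp^a_s-\hat{\kp}^a_{s,n})\phi_\lambda| \leq \|\phi_\lambda\|_\infty\,\|\kp^a_s-\hat{\kp}^a_{s,n}\|_1 \leq \|V\|\,\|\kp^a_s-\hat{\kp}^a_{s,n}\|_1,$$
where the last bound holds because $\min_y V(y)\leq \phi_\lambda(x)\leq V(x)$ for every $x$ and every $\lambda$ (the lower bound since $\lambda d^l\geq 0$, the upper bound by taking $y=x$), giving $\|\phi_\lambda\|_\infty\leq\|V\|$ uniformly in $\lambda$. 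Taking expectations and applying Hoeffding's inequality as in Theorem 3.7 of \cite{liu2022distributionally} (since $\hat{\kp}^a_{s,n}$ is built from $2^n$ i.i.d.\ samples) sends the right-hand side to $0$, which closes the argument. I expect the main obstacle to be handling the dual variable $\lambda$, which ranges over the unbounded set $[0,\infty)$: unlike the $\mu$-restriction argument required in the total-variation and Chi-square cases, here one must verify that the uniform bound on $\phi_\lambda$ genuinely holds for all $\lambda$ and that both suprema are finite and attained (which follows because $g^V_{s,a}(\lambda)\to-\infty$ as $\lambda\to\infty$ when $\delta>0$). A secondary technical point is justifying the interchange of the infinite sum and the expectation in the telescoping, which is underwritten by the bounded-variance estimate established alongside this theorem.
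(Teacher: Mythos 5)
Your proposal is correct and follows the same skeleton as the paper's proof: introduce the dual objective $g^V_{s,a}(\lambda)$ and its empirical versions, cancel the $1/p_N$ weighting against the geometric law of $N$, telescope via \Cref{lemma:Eo=Ee}, and reduce unbiasedness to $\lim_{n\to\infty}\mE[g^V_{s,a,n}(\lambda^V_{s,a,n})]=g^V_{s,a}(\lambda^V_{s,a})$. The one place you genuinely diverge is the uniform-convergence step. The paper first invokes \Cref{lemma:wd bounded} to confine all maximizers to $[0,2\|V\|/\delta^l]$, and then bounds the integrand by $\sup_{x,S}|V(x)+\lambda d(S,x)^l|\leq(1+2D^l/\delta^l)\|V\|$ on that interval, where $D$ is the diameter of $(\mcs,d)$. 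You instead take the supremum over all $\lambda\geq 0$ and observe that the regularized function satisfies $\min_y V(y)\leq\phi_\lambda(x)\leq V(x)$, hence $\|\phi_\lambda\|_\infty\leq\|V\|$ uniformly in $\lambda$, with the $-\lambda\delta^l$ terms cancelling exactly. This is valid (the inequality $|\sup_\lambda f-\sup_\lambda h|\leq\sup_\lambda|f-h|$ only needs finiteness of the suprema, which you check), it bypasses \Cref{lemma:wd bounded} entirely for this theorem, and it yields the tighter constant $\|V\|$ in place of $(1+2D^l/\delta^l)\|V\|$ — and, unlike the paper's bound, it does not depend on the diameter $D$ being finite. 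The caveats you flag (finiteness of the suprema, interchanging the infinite sum with the expectation, absolute summability of $\sum_n\mE[|\Delta_n|]$) are exactly the points the paper also relies on, so nothing is missing.
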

\begin{proof}
Denote the dual function \eqref{eq:wd dual} by $g$:
\begin{align}
g_{s,a}^V(\lambda)=-\lambda\delta^l+\mE_{S\sim\kp^a_s}[\inf_{x\in\mcs} (V(x)+\lambda d(S,x)^l)], 
\end{align}
and denote its optimal solution by $\lambda^V_{s,a}$: 
\begin{align}
    \lambda^V_{s,a}=\arg\max_{\lambda\geq 0}\bigg( -\lambda\delta^l+\mE_{S\sim\kp^a_s}[\inf_{x\in\mcs} (V(x)+\lambda d(S,x)^l)]\bigg).
\end{align}
Then, the support function $\sigma_{\cp^a_s}(V)=g_{s,a}^V(\lambda^V_{s,a})$. 
Similarly, define the empirical function  
${g}_{s,a,N+1}^V,
{g}_{s,a,N+1,O}^V, 
{g}_{s,a,N+1,E}^V$,
and denote their optimal solutions by $\lambda^V_{s,a,N+1},\lambda^V_{s,a,N+1,O},\lambda^V_{s,a,N+1,E}$. We have that
\begin{align}
    \mE[\hat{\sigma}_{\cp^a_s}V]&=\mE[g^V_{s,a,0}(\lambda^V_{s,a,0})]+\mE\bigg[\frac{\Delta_N(V)}{p_N} \bigg]\nn\\
    &=\mE[g^V_{s,a,0}(\lambda^V_{s,a,0})]+\sum^\infty_{n=0}p(N=n)\mE\bigg[\frac{\Delta_N(V)}{p_N}|N=n \bigg]\nn\\
    &=\mE[g^V_{s,a,0}(\lambda^V_{s,a,0})]+\sum^\infty_{n=0}\mE[\Delta_n]\nn\\
    &=\mE[g^V_{s,a,0}(\lambda^V_{s,a,0})]+\sum^\infty_{n=0}\mE\bigg[g^V_{s,a,n+1}(\lambda^V_{s,a,n+1})-\frac{g^V_{s,a,n+1,O}(\lambda^V_{s,a,n+1,O})+g^V_{s,a,n+1,E}(\lambda^V_{s,a,n+1,E})}{2}\bigg]\nn\\
    &=\mE[g^V_{s,a,0}(\lambda^V_{s,a,0})]+\sum^\infty_{n=0}\mE\bigg[g^V_{s,a,n+1}(\lambda^V_{s,a,n+1})-g^V_{s,a,n}(\lambda^V_{s,a,n})\bigg],
\end{align}
where the last inequality is from Lemma \ref{lemma:Eo=Ee}. The last equation can be further rewritten as
\begin{align}
    \mE[\hat{\sigma}_{\cp^a_s}V]&=\mE[g^V_{s,a,0}(\lambda^V_{s,a,0})]+\sum^\infty_{n=0}\mE\bigg[g^V_{s,a,n+1}(\lambda^V_{s,a,n+1})-g^V_{s,a,n}(\lambda^V_{s,a,n})\bigg]\nn\\
    &=\lim_{n\to\infty}\mE\bigg[g^V_{s,a,n}(\lambda^V_{s,a,n})\bigg].
\end{align}
To show that $\hat{\sigma}_{\cp^a_s}$ is unbiased, it suffices to prove that 
\begin{align}
    \lim_{n\to\infty}\mE\bigg[g^V_{s,a,n}(\lambda^V_{s,a,n})\bigg]=g^V_{s,a}(\lambda^V_{s,a}). 
\end{align}
For any arbitrary i.i.d. samples $\{X_i\}$ and its corresponding function $g^V_{s,a,n}$, together with Lemma \ref{lemma:wd bounded}, we have that
\begin{align} 
    &|g^V_{s,a,n}(\lambda^V_{s,a,n})-g^V_{s,a}(\lambda^V_{s,a})|\nn\\
    &=|\max_{0\leq \lambda \leq \frac{2\|V\|}{\delta^l}}g^V_{s,a}(\lambda) -\max_{0\leq \lambda \leq \frac{2\|V\|}{\delta^l}}g^V_{s,a,n}(\lambda) |\nn\\
    &\leq \max_{0\leq \lambda \leq \frac{2\|V\|}{\delta^l}} |g^V_{s,a}(\lambda)-g^V_{s,a,n}(\lambda)|\nn\\
    &=\max_{0\leq \lambda \leq \frac{2\|V\|}{\delta^l}} \bigg|\mE_{S\sim\kp^a_s}[\inf_{x\in\mcs} (V(x)+\lambda d(S,x)^l)]- \mE_{S\sim\hat{\kp}^{a}_{s,n}}[\inf_{x\in\mcs} (V(x)+\lambda d(S,x)^l )] \bigg|\nn\\
    &\leq \max_{0\leq \lambda \leq \frac{2\|V\|}{\delta^l}}\|\kp^a_s-\hat{\kp}^{a}_{s,n} \|_1 \sup_{x,S\in\mcs}(|V(x)+\lambda d(S,x)^l|) \nn\\
   &\leq \bigg(1+\frac{2D^l}{\delta^l}\bigg)\|V\|\|\kp^a_s-\hat{\kp}^{a}_{s,n} \|_1,
\end{align}
where the last inequality is from the bound on $\lambda$ and $D$ is the diameter of the metric space $(\mcs,d)$.

By Hoeffding's inequality and similar to the previous proofs, we have that
\begin{align}\label{eq:135}
    \mE[|g^V_{s,a,n}(\lambda^V_{s,a,n})-g^V_{s,a}(\lambda^V_{s,a})|]\leq \bigg(1+\frac{2D^l}{\delta^l}\bigg)\bigg(\frac{|\mcs|^2\sqrt{\pi}}{2^{\frac{n+1}{2}}}\bigg)\|V\|,
\end{align}
which implies that 
\begin{align}
    \lim_{n\to\infty}\mE\bigg[g^V_{s,a,n}(\lambda^V_{s,a,n})\bigg]=g^V_{s,a}(\lambda^V_{s,a}).
\end{align}
This completes the proof.
\end{proof}

\begin{theorem}\label{thm:wd var}
The estimated operator $\hat{\sigma}_{\cp^a_s}$  defined in \eqref{eq:p_wd}  has bounded variance, i.e., there exists a constant $C_0$, such that 
\begin{align}
     \text{Var}(\hat{\sigma}_{\cp^a_s}V)\leq (3+2C_0)\|V\|^2. 
\end{align}
\end{theorem}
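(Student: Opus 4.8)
The plan is to follow the template of the variance proofs for the total variation and Chi-square sets (Theorems~\ref{thm:tv var} and~\ref{thm:cs var}), since the Wasserstein dual function $g^V_{s,a}(\lambda)$ has the same linear-in-$\ell_1$ stability as the total variation case rather than the square-root stability of the Chi-square case. First I would write $\text{Var}(\hat{\sigma}_{\cp^a_s}V)=\mE[(\hat{\sigma}_{\cp^a_s}V)^2]-\sigma_{\cp^a_s}(V)^2$, decompose the estimator into its base term $g^V_{s,a,0}(\lambda^V_{s,a,0})$ and the correction $\Delta_N(V)/p_N$ (the deterministic reward offset is variance-free), and apply $(a+b)^2\le 2a^2+2b^2$ to obtain
\begin{align}
\text{Var}(\hat{\sigma}_{\cp^a_s}V)\le 2\mE\big[(g^V_{s,a,0}(\lambda^V_{s,a,0}))^2\big]+2\mE\Big[\big(\Delta_N(V)/p_N\big)^2\Big]+\sigma_{\cp^a_s}(V)^2. \nn
\end{align}
Since $|\sigma_{\cp^a_s}(V)|\le\|V\|$ and, by Lemma~\ref{lemma:wd bounded}, $\mE[(g^V_{s,a,0}(\lambda^V_{s,a,0}))^2]\le\|V\|^2$, the first and third terms together contribute at most $3\|V\|^2$, which accounts for the constant $3$ in the claimed bound.

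The substance of the argument is the middle term. Conditioning on $N=i$ and using $p_N=\Psi(1-\Psi)^N$ turns it into $\sum_{i=0}^\infty \mE[(\Delta_i(V))^2]/p_i$. For each $n\ge 1$ I would insert $\pm g^V_{s,a}(\lambda^V_{s,a})$ inside $\Delta_n(V)$, apply $(a+b)^2\le 2a^2+2b^2$, and use Lemma~\ref{lemma:Eo=Ee} to identify the average of the even/odd terms with the level-$(n-1)$ estimate, exactly as in Theorem~\ref{thm:tv var}, giving
\begin{align}
\mE[(\Delta_n(V))^2]\le 2\mE[(g^V_{s,a,n}(\lambda^V_{s,a,n})-g^V_{s,a}(\lambda^V_{s,a}))^2]+2\mE[(g^V_{s,a,n-1}(\lambda^V_{s,a,n-1})-g^V_{s,a}(\lambda^V_{s,a}))^2]. \nn
\end{align}
The stability estimate already derived in the proof of Theorem~\ref{thm:wd unbiased}, namely $|g^V_{s,a,n}(\lambda^V_{s,a,n})-g^V_{s,a}(\lambda^V_{s,a})|\le (1+2D^l/\delta^l)\|V\|\,\|\kp^a_s-\hat{\kp}^{a}_{s,n}\|_1$, then converts each term into a second moment of the empirical $\ell_1$ deviation, so that $\mE[(\Delta_n(V))^2]\le 2(1+2D^l/\delta^l)^2\|V\|^2\big(\mE[\|\kp^a_s-\hat{\kp}^{a}_{s,n}\|_1^2]+\mE[\|\kp^a_s-\hat{\kp}^{a}_{s,n-1}\|_1^2]\big)$.

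Finally I would invoke Hoeffding's inequality in the form of Theorem~3.7 of~\cite{liu2022distributionally} to bound $\mE[\|\kp^a_s-\hat{\kp}^{a}_{s,n}\|_1^2]$ by a quantity decaying geometrically like $2^{-n}$, since $\hat{\kp}^{a}_{s,n}$ is built from $2^{n}$ i.i.d.\ samples. The crux, and the main obstacle, is the convergence of the series $\sum_i \mE[(\Delta_i(V))^2]/p_i$: each summand behaves like $\|V\|^2\big(2(1-\Psi)\big)^{-i}/\Psi$, so summability forces $2(1-\Psi)>1$, i.e.\ $\Psi\in(0,1/2)$, the same admissible range as the total variation case. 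For $\Psi$ in this range the geometric series converges, yielding $\sum_i \mE[(\Delta_i(V))^2]/p_i\le C_0\|V\|^2$ for some finite constant $C_0$; combining this with the $3\|V\|^2$ from the base terms gives $\text{Var}(\hat{\sigma}_{\cp^a_s}V)\le(3+2C_0)\|V\|^2$, which completes the proof.
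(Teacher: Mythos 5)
Your proposal is correct and follows essentially the same route as the paper's proof: the same $(a+b)^2\le 2a^2+2b^2$ decomposition into the base term and the multi-level correction, the same use of Lemma~\ref{lemma:wd bounded} to get the $3\|V\|^2$ contribution, the same insertion of $\pm g^V_{s,a}(\lambda^V_{s,a})$ combined with Lemma~\ref{lemma:Eo=Ee}, the same linear-in-$\ell_1$ stability bound with constant $(1+2D^l/\delta^l)$, and the same geometric-series argument requiring $\Psi\in(0,1/2)$. No gaps.
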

\begin{proof}
We first have that
\begin{align}
    &\text{Var}(\hat{\sigma}_{\cp^a_s}V)\nn\\
    &=\mE[(\hat{\sigma}_{\cp^a_s}V)^2]-\sigma_{\cp^a_s}(V)^2\nn\\
    &\leq  \mE\bigg[\bigg(g^V_{s,a,0}(\lambda^V_{s,a,0})+\frac{\Delta_N(V)}{p_N}\bigg)^2\bigg]+(\sigma_{\cp^a_s}(V))^2\nn\\
    &\leq 2\mE\bigg[\bigg(g^V_{s,a,0}(\lambda^V_{s,a,0})\bigg)^2\bigg]+2\mE\bigg[\bigg(\frac{\Delta_N(V)}{p_N}\bigg)^2\bigg]+(\sigma_{\cp^a_s}(V))^2\nn\\
    &\leq 3\|V\|^2+2\sum^\infty_{i=0}\frac{\mE[(\Delta_i(V))^2]}{p_i},
\end{align}
where the last inequality is from Lemma \ref{lemma:cs bounded}. 
For any $n\geq 1$, we have that
\begin{align}
    \mE[(\Delta_n(V))^2]&=\mE\bigg[ \bigg({g}^V_{s,a,n}(\lambda^V_{s,a,n}) -\frac{{g}^V_{s,a,n,E}(\lambda^V_{s,a,n,E})+{g}^V_{s,a,n,O}(\lambda^V_{s,a,n,O})}{2} \bigg)^2\bigg]\nn\\
    &=\mE\bigg[ \bigg({g}^V_{s,a,n}(\lambda^V_{s,a,n})-g^V_{s,a}(\lambda^V_{s,a}) +g^V_{s,a}(\lambda^V_{s,a})-\frac{{g}^V_{s,a,n,E}(\lambda^V_{s,a,n,E})+{g}^V_{s,a,n,O}(\lambda^V_{s,a,n,O})}{2} \bigg)^2\bigg]\nn\\
    &\leq 2\mE[({g}^V_{s,a,n}(\lambda^V_{s,a,n})-g^V_{s,a}(\lambda^V_{s,a}))^2]+ 2\mE\bigg[\bigg(g^V_{s,a}(\lambda^V_{s,a})-\frac{{g}^V_{s,a,n,E}(\lambda^V_{s,a,n,E})+{g}^V_{s,a,n,O}(\lambda^V_{s,a,n,O})}{2}\bigg)^2 \bigg]\nn\\
    &\overset{(a)}{=}2\mE[({g}^V_{s,a,n}(\lambda^V_{s,a,n})-g^V_{s,a}(\lambda^V_{s,a}))^2]+2\mE[({g}^V_{s,a,n-1}(\lambda^V_{s,a,n-1})-g^V_{s,a}(\lambda^V_{s,a}))^2]\nn\\
    &\leq 2\bigg(1+\frac{2D^l}{\delta^l}\bigg)^2\|V\|^2 \mE[\|\kp^a_s-\hat{\kp}^{a}_{s,n}\|_1^2]+2\bigg(1+\frac{2D^l}{\delta^l}\bigg)^2\|V\|^2\mE[\|\kp^a_s-\hat{\kp}^{a}_{s,n-1}\|_1^2],
\end{align}
where $(a)$ is due to Lemma \ref{lemma:Eo=Ee} and the last inequality follows a similar argument to \eqref{eq:135}. Note that $p_n=\Psi(1-\Psi)^n$ for $\Psi\in\big(0,0.5\big)$, thus similar to Theorem 3.7 of \cite{liu2022distributionally}, we can show that there exists a constant $C_0$, such that 
\begin{align}
    \sum^\infty_{i=0}\frac{\mE[(\Delta_i(V))^2]}{p_i}\leq C_0\|V\|^2. 
\end{align}
Thus, we have that
\begin{align}
     \text{Var}(\hat{\sigma}_{\cp^a_s}V)\leq 3\|V\|^2+2C_0\|V\|^2=(3+2C_0)\|V\|^2.
\end{align}
\end{proof}

\section{Case Studies for Robust RVI Q-Learning}\label{sec:case for Q}
In this section, we provide the proof of the second part of \Cref{thm:case thm}, i.e., $\hat{\mathbf H}$ is bounded and unbiased under each uncertainty model. We note that the proofs in this part can be easily derived by following the ones in \Cref{sec:case for TD}.

We first prove a lemma necessary to the proofs in this section. 
\begin{lemma}\label{lemma:vq}
    It holds that 
    \begin{align}
        \|V_Q\|\leq \|Q\|.
    \end{align}
\end{lemma}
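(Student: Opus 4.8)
The plan is to unwind the definitions and reduce the claim to an elementary fact about the maximum of a finite collection of real numbers. Recall that $\|\cdot\|$ denotes the infinity norm, so $\|V_Q\| = \max_{s\in\mcs} |V_Q(s)|$ and $\|Q\| = \max_{s\in\mcs, a\in\mca} |Q(s,a)|$, and that by definition $V_Q(s) = \max_a Q(s,a)$ for every $s$. The entire statement then follows once I control $|V_Q(s)|$ statewise.

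First I would establish the pointwise bound $|\max_a Q(s,a)| \le \max_a |Q(s,a)|$ for each fixed $s$. This holds because, on the one hand, $\max_a Q(s,a) \le \max_a |Q(s,a)|$ trivially, and on the other hand $\max_a Q(s,a) \ge \min_a Q(s,a) \ge -\max_a |Q(s,a)|$; combining the two inequalities gives the absolute-value bound. Applying this at each $s$ yields
\begin{align}
|V_Q(s)| = \big|\max_a Q(s,a)\big| \le \max_a |Q(s,a)|, \quad \forall s\in\mcs. \nonumber
\end{align}

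To finish, I would take the maximum over $s$ on both sides and observe that the right-hand side is exactly $\|Q\|$:
\begin{align}
\|V_Q\| = \max_s |V_Q(s)| \le \max_s \max_a |Q(s,a)| = \max_{s,a} |Q(s,a)| = \|Q\|, \nonumber
\end{align}
which is the desired inequality.

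I do not anticipate any real obstacle here: the result is a routine consequence of the fact that the max operation is nonexpansive with respect to the sup norm, and the only ``step'' worth spelling out is the elementary bound $|\max_a x_a| \le \max_a |x_a|$ for reals. The lemma is stated separately only because it is invoked repeatedly in the variance and unbiasedness proofs of \Cref{sec:case for Q} to transfer bounds phrased in terms of $\|V_Q\|$ into bounds in terms of $\|Q\|$.
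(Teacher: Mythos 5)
Your proof is correct and follows essentially the same route as the paper's: both reduce the claim to the pointwise observation that $|\max_a Q(s,a)|$ is bounded by the largest absolute entry of $Q$ (the paper does this by naming the achieving pair $(s^*,a^*)$, while you spell out the elementary bound $|\max_a x_a|\le\max_a|x_a|$). No gaps.
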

\begin{proof}
    From the definition of $V_Q$, we have that 
    \begin{align}
    \|V_Q\|=\max_s |V_Q(s)|=\max_s |\max_a Q(s,a)| \triangleq |Q(s^*,a^*)|.
    \end{align}
    Clearly, $|Q(s^*,a^*)|\leq \max_{s,a} |Q(s,a)|$, hence 
        \begin{align}
        \|V_Q\|\leq \|Q\|.
    \end{align}
\end{proof}

Similar to \Cref{sec:case for TD}, the propositions of $\hat{\mathbf H}$ can be reduced to the ones of $\hat{\sigma}_{\cp^a_s}$. 
\begin{lemma}
    If 
    $\mE[\hat{\sigma}_{\cp^a_s}V]=\sigma_{\cp^a_s}(V)$,
    and moreover there exists a constant $C$, such that for any $s,a$, 
        $\text{Var}(\hat{\sigma}_{\cp^a_s}V)\leq C(1+\|V\|^2)$,
    then 
    $\mE[\hat{\mathbf{H}} Q(s,a)]=\mathbf H Q(s,a),$
    and $
        \text{Var}(\hat{\mathbf H}Q(s,a))\leq C(1+\|Q\|^2).$
\end{lemma}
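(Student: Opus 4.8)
The plan is to reduce both claims directly to the stated hypotheses on $\hat{\sigma}_{\cp^a_s}$, exploiting the fact that the $Q$-operator estimate $\hat{\mathbf H}Q(s,a)=r(s,a)+\hat{\sigma}_{\cp^a_s}(V_Q)$ is simply a deterministic reward plus a \emph{single} support-function estimate evaluated at the fixed vector $V_Q$. The first thing I would emphasize is that $V_Q$ (with $V_Q(s)=\max_a Q(s,a)$) is a fixed, non-random argument once $Q$ is given, so the hypotheses, which are stated for an arbitrary fixed $V\in\mathbb{R}^{|\mcs|}$, apply verbatim with $V=V_Q$. I would also note that $r(s,a)$ is a deterministic constant with respect to the randomness in the estimator.

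For unbiasedness, I would take expectations and use linearity together with the first hypothesis: $\mE[\hat{\mathbf H}Q(s,a)]=r(s,a)+\mE[\hat{\sigma}_{\cp^a_s}(V_Q)]=r(s,a)+\sigma_{\cp^a_s}(V_Q)=\mathbf HQ(s,a)$. For the variance bound, I would observe that shifting by the constant $r(s,a)$ does not affect the variance, so $\text{Var}(\hat{\mathbf H}Q(s,a))=\text{Var}(\hat{\sigma}_{\cp^a_s}(V_Q))\leq C(1+\|V_Q\|^2)$ by the second hypothesis. I would then invoke \Cref{lemma:vq}, namely $\|V_Q\|\leq\|Q\|$, to replace $\|V_Q\|^2$ by $\|Q\|^2$ and conclude $\text{Var}(\hat{\mathbf H}Q(s,a))\leq C(1+\|Q\|^2)$.

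The main obstacle is essentially bookkeeping rather than anything conceptual. Unlike the TD analogue, where $\hat{\mathbf T}V(s)$ averages over actions with the weights $\pi(a|s)$ and therefore requires the convexity inequality $(\mE X)^2\leq\mE[X^2]$ and picks up an extra factor of $|\mca|$ in the variance bound, the $Q$-learning operator $\hat{\mathbf H}Q(s,a)$ touches only one support function, so no such averaging argument is needed and the constant $C$ is preserved. The only genuinely non-trivial input is the norm comparison $\|V_Q\|\leq\|Q\|$ already established in \Cref{lemma:vq}; the care required is simply to treat $V_Q$ as a fixed argument so that the hypotheses transfer cleanly.
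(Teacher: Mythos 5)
Your proposal is correct and follows essentially the same route as the paper: unbiasedness by linearity of expectation, and the variance bound by noting that the deterministic shift $r(s,a)$ leaves the variance equal to $\text{Var}(\hat{\sigma}_{\cp^a_s}(V_Q))\leq C(1+\|V_Q\|^2)$, then applying \Cref{lemma:vq} to replace $\|V_Q\|$ by $\|Q\|$. Your side remark contrasting this with the TD case (where the action average forces the $(\mE X)^2\leq\mE[X^2]$ step and an extra $|\mca|$ factor) is accurate and matches the paper's treatment.
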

\begin{proof}
First, we have that
    \begin{align}
    \mE[\hat{\mathbf{H}} Q(s,a)]=\mE[r(s,a)+\hat{\sigma}_{\cp^a_s} V_Q(s)]=r(s,a)+\sigma_{\cp^a_s}(V_Q)=\mathbf{H}Q(s,a). 
    \end{align}
For boundedness, note that 
\begin{align}
    \text{Var}(\hat{\mathbf H}Q(s,a))&=\mE\bigg[(\hat{\mathbf H}Q(s,a)-\mathbf{H}Q(s,a))^2 \bigg]\nn\\
    &=\mE\big[\big(\hat{\sigma}_{\cp^a_s} V_Q(s)-\sigma_{\cp^a_s}(V_Q) \big)^2\big]\nn\\
    &\leq C(1+\|V_Q\|^2)\nn\\
    &\leq C(1+\|Q\|^2),
\end{align}
where the last inequality is from Lemma \ref{lemma:vq}. 
\end{proof}
This implies that the problem is reduced to verifying whether $\hat{\sigma}_{\cp^a_s}$ is unbiased and has bounded variance, which is identical to the results in \Cref{sec:case for TD}. We thus omit the proofs for this part.

\section{Technical Lemmas}
\begin{lemma}\label{lemma:x=y+r}
For a robust Bellman operator $\mathbf T$, define 
\begin{align}
    {\mathbf T}'V&\triangleq {\mathbf T}V-f(V)e,\\
    \tilde{\mathbf T}V&\triangleq {\mathbf T}V-g^\pi_\cp e.
\end{align}
Assume that $x(t),y(t)$ are the solutions to equations 
\begin{align}
    \dot{x}&={\mathbf T}'x-x,\\
    \dot{y}&=\tilde{\mathbf T}y-y,
\end{align}
with the same initial value $x(0)=y(0)=x_0$. Then, 
\begin{align}
    x(t)=y(t)+r(t)e,
\end{align}
where $r(t)$ satisfies 
\begin{align}
    \dot{r}(t)=-r(t)+g^\pi_\cp-f(y(t)). 
\end{align}
\end{lemma}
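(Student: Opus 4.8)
The plan is to avoid solving either ODE explicitly and instead verify that the proposed expression $y(t)+r(t)e$ solves the same initial value problem as $x(t)$, then invoke uniqueness of solutions. The engine behind the argument is a shift-invariance property of the robust Bellman operator: for any $V$ and any constant $c$, since every $p\in\cp^a_s$ is a probability distribution we have $pe=1$, hence $\sigma_{\cp^a_s}(V+ce)=\min_{p\in\cp^a_s}(pV+c)=\sigma_{\cp^a_s}(V)+c$. Consequently $\mathbf TV=r_\pi+\sigma_\cp(V)$ satisfies $\mathbf T(V+ce)=\mathbf TV+ce$. I also use the matching property of the offset function from Assumption \ref{ass:f}, namely $f(V+ce)=f(V)+c$.

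First I would set $z(t)\triangleq y(t)+r(t)e$, where $r(t)$ solves $\dot r(t)=-r(t)+g^\pi_\cp-f(y(t))$ with $r(0)=0$; the initial conditions then match, $z(0)=y(0)+r(0)e=x_0=x(0)$. Differentiating, and using $\dot y=\tilde{\mathbf T}y-y=\mathbf Ty-g^\pi_\cp e-y$ together with the ODE for $r$,
\begin{align}
\dot z(t)&=\dot y(t)+\dot r(t)e\nn\\
&=\mathbf Ty(t)-g^\pi_\cp e-y(t)+\big(-r(t)+g^\pi_\cp-f(y(t))\big)e\nn\\
&=\mathbf Ty(t)-z(t)-f(y(t))e.
\end{align}

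The key step is to rewrite this right-hand side in terms of $z$ rather than $y$. Applying shift-invariance with $c=r(t)$ gives $\mathbf Tz(t)=\mathbf Ty(t)+r(t)e$, and the $f$-property gives $f(z(t))=f(y(t))+r(t)$. Substituting,
\begin{align}
\mathbf Tz(t)-f(z(t))e&=\mathbf Ty(t)+r(t)e-\big(f(y(t))+r(t)\big)e\nn\\
&=\mathbf Ty(t)-f(y(t))e,
\end{align}
so that $\dot z(t)=\mathbf Tz(t)-f(z(t))e-z(t)=\mathbf T'z(t)-z(t)$. Hence $z$ solves exactly the ODE defining $x$, with the same initial value.

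The only remaining obstacle is uniqueness of solutions to $\dot x=\mathbf T'x-x$, which lets me conclude $x(t)=z(t)=y(t)+r(t)e$. This follows from the Picard--Lindel\"of theorem once $\mathbf T'$ is Lipschitz, and $\mathbf T'V=\mathbf TV-f(V)e$ is $(1+L_f)$-Lipschitz because $\sigma_\cp$ is $1$-Lipschitz and $f$ is $L_f$-Lipschitz, exactly as already used in the well-posedness argument for \eqref{eq:T'}. The computation itself is routine; the main conceptual point is recognizing that shift-invariance of $\mathbf T$ together with the translation property of $f$ forces the $f(y(t))e$ and $f(z(t))e$ correction terms to agree, which is precisely what allows the $y$-ODE and the $x$-ODE to be linked through a single scalar function $r(t)$.
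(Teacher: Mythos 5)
Your proof is correct, but it takes a genuinely different route from the paper's. You construct the candidate $z(t)=y(t)+r(t)e$ up front, use the shift-invariance $\mathbf T(V+ce)=\mathbf TV+ce$ (valid since every $p\in\cp^a_s$ satisfies $pe=1$) together with $f(V+ce)=f(V)+c$ to check that $z$ satisfies $\dot z=\mathbf T'z-z$ with $z(0)=x(0)$, and then conclude by Picard--Lindel\"of uniqueness, which is justified because $\mathbf T'$ is $(2+L_f)$-Lipschitz. The paper instead works in the opposite direction: it writes both $x(t)$ and $y(t)$ via the variation-of-constants formula, bounds $\spa(x(t)-y(t))$ using the fact that $\tilde{\mathbf T}$ is non-expansive with respect to the span semi-norm, invokes Gronwall's inequality to force $\spa(x(t)-y(t))=0$, and only then deduces that the difference must be of the form $r(t)e$ and derives the ODE for $r$ by differentiating. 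Your argument is more elementary and self-contained --- it needs only shift-invariance and Lipschitz continuity, and avoids both the span semi-norm machinery and Gronwall --- whereas the paper's argument has the mild advantage of not requiring you to guess the form of the decomposition in advance: it discovers that $x-y$ is a multiple of $e$ rather than verifying a proposed ansatz. Both proofs ultimately rest on the same structural fact, namely that $\mathbf T$ and $f$ commute with addition of constant vectors.
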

\begin{proof}
    Note that ${\mathbf T}'V=\tilde{\mathbf T}V+(g^\pi_\cp-f(V))e$, then from the variation of constants formula, we have that
\begin{align}
    x(t)&=x_0e^{-t}+\int^t_0e^{-(t-s)}\tilde{\mathbf T}(x(s))ds+ \left( \int^t_0 e^{-(t-s)} (g^\pi_\cp-f(x(s)))ds\right)e,\\
    y(t)&=x_0e^{-t}+\int^t_0 e^{-(t-s)}\tilde{\mathbf T}(y(s))ds. 
\end{align}
Hence, the maximal and minimal components of $x(t)-y(t)$ can be bounded as:
\begin{align}
    \max_i (x_i(t)-y_i(t))&\leq \int^t_0 e^{-(t-s)}\max_i (\tilde{\mathbf T}_i(x(s))-\tilde{\mathbf T}_i(y(s)))ds+\int^t_0 e^{-(t-s)} (g^\pi_\cp-f(x(s)))ds,\nn\\
    \min_i (x_i(t)-y_i(t))&\geq \int^t_0 e^{-(t-s)}\min_i (\tilde{\mathbf T}_i(x(s))-\tilde{\mathbf T}_i(y(s)))ds+\int^t_0 e^{-(t-s)} (g^\pi_\cp-f(x(s)))ds. 
\end{align}
This hence implies that
\begin{align}
    \spa(x(t)-y(t))&\leq \int^t_0 e^{-(t-s)} \spa(\tilde{\mathbf T}(x(s))-\tilde{\mathbf T}(y(s)))ds\nn\\
    &\leq \int^t_0 e^{-(t-s)} \spa(x(s)-y(s))ds,
\end{align}
where the last inequality is because $\tilde{\mathbf T}$ is  non-expansive w.r.t. the span semi-norm \cite{wang2023robust}.

Gronwall's inequality implies that  $\spa(x(t)-y(t))\leq 0\cdot \int^t_0 e^{-(t-s)}ds=0$ for any $t\geq 0$. However, since $\spa$ is non-negative, then $\spa(x(t)-y(t))= 0$. Hence, we have that $x(t)=y(t)+r(t)e$ for some $r(t)$ satisfying $r(0)=0$. 

Also note that the differential of $r(t)$ can be written as 
\begin{align}
    \dot{r}(t)e&=\dot{x}(t)-\dot{y}(t)\nn\\
    &=\tilde{\mathbf T}x(t)+(g^\pi_\cp-f(x(t)))e-x(t)-\tilde{\mathbf T}y(t)+y(t)\nn\\
    &=(-r(t)+g^\pi_\cp-f(y(t)))e,
\end{align}
where the last equation is because 
\begin{align}
    \tilde{\mathbf T}x(t)&=\tilde{\mathbf T}(y(t)+r(t)e)=\tilde{\mathbf T}(y(t))+r(t)e,\\
    f(x(t))&=f(y(t)+r(t)e)=f(y(t))+r(t). 
\end{align}
This completes the proof. 
\end{proof}

\begin{lemma}\label{lemma:Eo=Ee}
For any function $g: \Delta(|\mcs|)\to \mathbb{R}$, assume there are $2^{n+1}$ i.i.d. samples $X_i\sim q$. Denote the empirical distributions from samples $\{X_i: i=1,...,2^{n+1}\},\{X_{2i-1}: i=1,...,2^{n}\},\{X_{2i}: i=1,...,2^{n}\}$ by $\hat{q}_{n+1},\hat{q}_{n+1,O},\hat{q}_{n+1,E}$. Then, 
\begin{align}
    \mE[g(\hat{q}_{n+1,O})]= \mE[g(\hat{q}_{n+1,E})]= \mE[g(\hat{q}_{n})]. 
\end{align}
\end{lemma}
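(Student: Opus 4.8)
The plan is to reduce the claim to a statement about equality in distribution of the three empirical measures, and then observe that $g$ composed with a fixed measurable map yields equal expectations. Concretely, I would introduce the deterministic, permutation-symmetric map
\begin{align}
    \phi(x_1,\dots,x_{2^n})\triangleq \frac{1}{2^n}\sum_{i=1}^{2^n}\mathbbm{1}_{x_i}\in\Delta(|\mcs|),\nn
\end{align}
so that each of the three empirical distributions appearing in the statement is the image under $\phi$ of a tuple of $2^n$ of the underlying samples: $\hat{q}_{n+1,O}=\phi(X_1,X_3,\dots,X_{2^{n+1}-1})$, $\hat{q}_{n+1,E}=\phi(X_2,X_4,\dots,X_{2^{n+1}})$, and $\hat{q}_n=\phi(Y_1,\dots,Y_{2^n})$, where $(Y_1,\dots,Y_{2^n})$ denotes the $2^n$ i.i.d.\ samples used to build $\hat{q}_n$.

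The heart of the argument is that each of these three tuples consists of exactly $2^n$ i.i.d.\ draws from $q$, hence all three have the identical joint law $q^{\otimes 2^n}$ on $\mcs^{2^n}$. For the odd-indexed collection this holds because any subfamily of an i.i.d.\ family is again i.i.d.\ with the same marginal; the same reasoning applies to the even-indexed collection, and by definition to the batch $(Y_1,\dots,Y_{2^n})$. Since $g\circ\phi:\mcs^{2^n}\to\mathbb{R}$ is a fixed measurable function, applying it to three random vectors with a common distribution produces three real-valued random variables with a common distribution, and therefore
\begin{align}
    \mE[g(\hat{q}_{n+1,O})]=\mE[(g\circ\phi)(X_1,X_3,\dots)]=\mE[(g\circ\phi)(Y_1,\dots,Y_{2^n})]=\mE[g(\hat{q}_n)],\nn
\end{align}
with the analogous identity for the even-indexed tuple giving $\mE[g(\hat{q}_{n+1,E})]=\mE[g(\hat{q}_n)]$. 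This chain of equalities is precisely the assertion of the lemma.

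There is no genuine analytic obstacle here: no continuity, measurability beyond the obvious, or integrability assumption on $g$ is needed beyond what makes the expectations well defined, since the conclusion follows purely from equality in distribution. The only point requiring care is the bookkeeping that aligns the notation, namely confirming that the odd- and even-indexed subsamples each contain exactly $2^n$ elements (matching the sample size implicit in $\hat{q}_n$) and that all samples share the common law $q$. Once this indexing is verified, the exchangeability/i.i.d.\ symmetry does all the work, so I would present the proof as the short distributional-equality argument sketched above rather than through any explicit computation involving $g$.
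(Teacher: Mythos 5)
Your proof is correct and rests on the same key observation as the paper's: the odd-indexed, even-indexed, and reference collections are each $2^n$ i.i.d.\ draws from $q$, so the resulting empirical measures are equal in distribution and expectations of $g$ agree. The paper merely spells this out by summing $g(p)\,\mathbb{P}(\hat q = p)$ over the achievable points $p$ of the discrete simplex and matching the probabilities, whereas you phrase it as a pushforward through the fixed map $\phi$; the content is the same.
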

\begin{proof}
Note that 
\begin{align}
    \hat{q}_{n+1,O}(s)=\frac{\sum_{i=1}^{2^n}\mathbbm{1}_{X_{2i-1}=s}}{2^n},
\end{align}
hence,
\begin{align}
    \mE[g(\hat{q}_{n+1,O})]&=\sum_{p=(p_1,...,p_{|\mcs|})\in\Delta(|\mcs|)} g(p)\mathbb{P}(\hat{q}_{n+1,O}=p)\nn\\
    &=\sum_{p=(p_1,...,p_{|\mcs|})\in\Delta(|\mcs|)} \mathbb{P}\bigg(\frac{\sum^{2^n}_{i=1}\mathbbm{1}_{X_{2i-1}=s_1}}{2^n}=p_1,...,\frac{\sum^{2^n}_{i=1}\mathbbm{1}_{X_{2i-1}=s_{|\mcs|}}}{2^n}=p_{|\mcs|}\bigg|q\bigg) g(p),
\end{align}
where $2^np_i\in\mathbb{N}$ and $\sum_{i=1}^{|\mcs|} p_i=1$. 
On the other hand, 
\begin{align}
    \mE[g(\hat{q}_{n})]&=\sum_{p=(p_1,...,p_{|\mcs|})\in\Delta(|\mcs|)} g(p)\mathbb{P}(\hat{q}_{n}=p)\nn\\
    &=\sum_{p=(p_1,...,p_{|\mcs|})\in\Delta(|\mcs|)} \mathbb{P}\bigg(\frac{\sum^{2^n}_{i=1}\mathbbm{1}_{X_{i}=s_1}}{2^n}=p_1,...,\frac{\sum^{2^n}_{i=1}\mathbbm{1}_{X_{i}=s_{|\mcs|}}}{2^n}=p_{|\mcs|}\bigg|q\bigg) g(p).
\end{align}
Note that $X_i$ are i.i.d., hence, 
\begin{align}
    \mathbb{P}\bigg(\frac{\sum^{2^n}_{i=1}\mathbbm{1}_{X_{i}=s_1}}{2^n}=p_1,...,\frac{\sum^{2^n}_{i=1}\mathbbm{1}_{X_{i}=s_{|\mcs|}}}{2^n}=p_{|\mcs|}\bigg|q\bigg)=\mathbb{P}\bigg(\frac{\sum^{2^n}_{i=1}\mathbbm{1}_{X_{2i-1}=s_1}}{2^n}=p_1,...,\frac{\sum^{2^n}_{i=1}\mathbbm{1}_{X_{2i-1}=s_{|\mcs|}}}{2^n}=p_{|\mcs|}\bigg|q\bigg). \nn
\end{align}
Thus, 
\begin{align}
    \mE[g(\hat{q}_{n+1,O})]=\mE[g(\hat{q}_{n})]. 
\end{align}
Similarly, $\mE[g(\hat{q}_{n+1,E})]=\mE[g(\hat{q}_{n})]$ and hence it completes the proof. 
\end{proof}

\begin{lemma}\label{lemma:tv bounded}
Under the total variation uncertainty model, the optimal solution and optimal value for $g_{s,a}^V,g_{s,a,N+1}^V,g_{s,a,N+1,E}^V,g_{s,a,N+1,O}^V$  are bounded. Specifically,
\begin{align}
     \mu^V_{s,a},\mu^V_{s,a,N+1},\mu^V_{s,a,N+1,E},\mu^V_{s,a,N+1,O}&\leq V+\|V\|e,\\
    \|\mu^V_{s,a}\|,\|\mu^V_{s,a,N+1}\|,\|\mu^V_{s,a,N+1,E}\|,\|\mu^V_{s,a,N+1,O}\|&\leq 2\|V\|,\\
      |g_{s,a}^V(\mu^V_{s,a})|,|g_{s,a,N+1}^V(\mu^V_{s,a,N+1})|,|g_{s,a,N+1,O}^V(\mu^V_{s,a,N+1,O})|,|g_{s,a,N+1,E}^V(\mu^V_{s,a,N+1,E})|&\leq 3(1+2\delta)\|V\|. 
\end{align}
\end{lemma}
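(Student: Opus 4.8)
The plan is to observe that all four objective functions share the common form $g(\mu)=q(V-\mu)-\delta\,\spa(V-\mu)$ for a probability vector $q$ (namely $\kp^a_s$ or one of the empirical kernels $\hat\kp^{a}_{s,N+1},\hat\kp^{a,O}_{s,N+1},\hat\kp^{a,E}_{s,N+1}$), so it suffices to prove the three bounds for a generic probability vector $q$; the argument will use only that $q\geq 0$ componentwise and $q^\top e=1$, so it applies identically to the nominal and all three empirical kernels, avoiding any case-by-case repetition.

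First I would prove the localization bound (1). I change variables to $W=V-\mu$, so the constraint $\mu\geq 0$ becomes $W\leq V$ and $g$ becomes $F(W)=qW-\delta\,\spa(W)$. The key claim is that the supremum of $F$ over $\{W\leq V\}$ is attained on the compact box $\{(\min_s V(s))e\leq W\leq V\}$. To see this, given any feasible $W$ I define the clipped-from-below vector $\tilde W\triangleq\max(W,(\min_s V(s))e)$ componentwise. Then $\tilde W\leq V$ still holds (since both $W\leq V$ and $(\min_s V(s))e\leq V$), $\tilde W\geq W$ gives $q\tilde W\geq qW$, and a short case check on $\max\tilde W$ and $\min\tilde W$ shows $\spa(\tilde W)\leq\spa(W)$; hence $F(\tilde W)\geq F(W)$. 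Thus I may restrict to the box, where continuity of $F$ guarantees a maximizer $W^\star$ with $W^\star\geq(\min_s V(s))e\geq-\|V\|e$. Translating back, $\mu=V-W^\star$ satisfies $0\leq\mu\leq V+\|V\|e$, which is exactly bound (1) for each of the four functions.

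Bounds (2) and (3) then follow cheaply. For (2), bound (1) gives componentwise $0\leq\mu(s)\leq V(s)+\|V\|\leq 2\|V\|$, so $\|\mu\|\leq 2\|V\|$. For (3), I combine this with the triangle inequality $\|V-\mu\|\leq\|V\|+\|\mu\|\leq 3\|V\|$, whence $|q(V-\mu)|\leq\|V-\mu\|\leq 3\|V\|$ (as $q$ is a probability vector) and $\spa(V-\mu)\leq 2\|V-\mu\|\leq 6\|V\|$; therefore $|g(\mu)|\leq|q(V-\mu)|+\delta\,\spa(V-\mu)\leq 3\|V\|+6\delta\|V\|=3(1+2\delta)\|V\|$.

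The main obstacle is the localization step (1): everything downstream is a one-line consequence of it. The delicate point there is establishing $\spa(\tilde W)\leq\spa(W)$ under the clip, which requires separating the case $\max W\geq\min_s V(s)$ (where the upper end of the range is unchanged while the lower end can only rise) from the case $\max W<\min_s V(s)$ (where $\tilde W$ collapses to the constant $(\min_s V(s))e$ with span $0$). Care is also needed to confirm the supremum is genuinely attained rather than merely approached, but once the problem is reduced to the compact box this is immediate from continuity of $F$.
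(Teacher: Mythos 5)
Your proof is correct, and it follows the same overall strategy as the paper's: localize the optimal $\mu$ to the box $[0,\,V+\|V\|e]$, then bound the optimal value by the triangle inequality (the arithmetic $3\|V\|+6\delta\|V\|=3(1+2\delta)\|V\|$ is identical in both). The difference lies in how the localization is justified. The paper shifts $V$ to a nonnegative vector $W=V-(\min_s V(s))e$, observes that the optimizer is invariant under this shift because $\spa$ and $p\mapsto pV$ behave additively under constant translations, and then simply \emph{asserts} that for $W\geq 0$ the optimal dual variable satisfies $\mu^W\leq W$. Your clipping argument --- replacing a feasible $W=V-\mu$ by $\tilde W=\max(W,(\min_s V(s))e)$ and checking $q\tilde W\geq qW$ and $\spa(\tilde W)\leq\spa(W)$ via the two cases on $\max_i W_i$ --- actually supplies a proof of that asserted step, and also explicitly handles attainment of the supremum by reducing to a compact box. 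In that sense your write-up is somewhat more self-contained than the paper's; it also correctly treats all four kernels uniformly by using only $q\geq 0$ and $q^\top e=1$, exactly as the paper does implicitly when it says the remaining bounds "can be similarly derived."
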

\begin{proof}
First we show the bounds on the optimal solutions.
If we denote the minimal entry of $V$ by $w$: $w=\min_s V(s)$, then $W\triangleq V-we\geq 0$. 
Note that, 
\begin{align}
    \mu^W_{s,a}&=\arg\max_{\mu\geq 0}\big( \kp^a_s(W-\mu)-\delta \spa(W-\mu)\big)\nn\\
    &=\arg\max_{\mu\geq 0}\big(-w+ \kp^a_s(V-\mu)-\delta \spa(V-\mu)\big),
\end{align}
which is because $\spa(V+ke)=\spa(V)$ and $\kp^a_s(V+ke)=k+\kp^a_s V$. Hence, $\mu^W_{s,a}=\mu^V_{s,a}$. Moreover note that $W\geq 0$, hence $\mu^W_{s,a}$ is bounded: $\mu^W_{s,a}\leq W$, this further implies that
\begin{align}
    \|\mu^V_{s,a}\|=\|\mu^W_{s,a}\|\leq \|W\|\leq 2\|V\|. 
\end{align}
The bounds on $\mu^V_{s,a,N+1},\mu^V_{s,a,N+1,O},\mu^V_{s,a,N+1,E}$ can be similarly derived. 

We then consider the optimal value. Note that,
\begin{align}
    g_{s,a}^V(\mu^V_{s,a})&=\kp^a_s(V-\mu^V_{s,a})-\delta \spa(V-\mu^V_{s,a})\nn\\
    &\leq \|V\|+\|\mu^V_{s,a}\|+\delta |\max_i (V(i)-\mu^V_{s,a}(i))|+\delta|\min_i (V(i)-\mu^V_{s,a}(i))|\nn\\
    &\leq 3\|V\|+2\delta(\|V\|+\|\mu^V_{s,a}\|)\nn\\
    &\leq 3(1+2\delta)\|V\|. 
\end{align}
On the other hand, 
\begin{align}\label{eq:g<}
    g_{s,a}^V(\mu^V_{s,a})\geq g_{s,a}^V(0)=\kp^a_s V-\delta\spa(V)=\kp^a_s V-\delta\max_i V(i)+\delta \min_i V(i).
\end{align}
Denote the maximal and minimal entries of $V$ by $V(M)$ and $V(m)$, then we have that
\begin{align}\label{eq:g>}
    &\kp^a_s V-\delta\max_i V(i)+\delta \min_i V(i)\nn\\
    &=\sum_{x} \kp^a_{s,x}V(x)-\delta V(M)+\delta V(m)\nn\\
    &\geq -\|V\|-2\delta \|V\|,
\end{align}
where the last inequality is from $\|V\|\geq V(i)\geq -\|V\|$ for any entry $i$. Thus, combining \eqref{eq:g<} and \eqref{eq:g>} implies that 
\begin{align}
     -(1+2\delta)\|V\|\leq g_{s,a}^V(\mu^V_{s,a})\leq 3(1+2\delta)\|V\|. 
\end{align}
Similarly, the bounds on $g_{s,a,N+1}^V(\mu^V_{s,a,N+1}),g_{s,a,N+1,O}^V(\mu^V_{s,a,N+1,O}),g_{s,a,N+1,E}^V(\mu^V_{s,a,N+1,E})$ can be derived. 
\end{proof}

\begin{lemma}\label{lemma:cs bounded}
Under the chi-square uncertainty model, the optimal solution and optimal value for $g_{s,a}^V,g_{s,a,N+1}^V,g_{s,a,N+1,E}^V,g_{s,a,N+1,O}^V$  are bounded. Specifically,
\begin{align}
     \mu^V_{s,a},\mu^V_{s,a,N+1},\mu^V_{s,a,N+1,E},\mu^V_{s,a,N+1,O}&\leq V+\|V\|e,\\
    \|\mu^V_{s,a}\|,\|\mu^V_{s,a,N+1}\|,\|\mu^V_{s,a,N+1,E}\|,\|\mu^V_{s,a,N+1,O}\|&\leq 2\|V\|,\\
      |g_{s,a}^V(\mu^V_{s,a})|,|g_{s,a,N+1}^V(\mu^V_{s,a,N+1})|,|g_{s,a,N+1,O}^V(\mu^V_{s,a,N+1,O})|,|g_{s,a,N+1,E}^V(\mu^V_{s,a,N+1,E})|&\leq 3(1+\sqrt{2 \delta})\|V\|. 
\end{align}
\end{lemma}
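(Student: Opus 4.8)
The plan is to follow the proof of Lemma~\ref{lemma:tv bounded} essentially verbatim, replacing the span penalty $\delta\,\spa(V-\mu)$ by the chi-square penalty $\sqrt{\delta\,\text{Var}_{\kp^a_s}(V-\mu)}$ throughout, and isolating the single place where the variance behaves differently from the span. Write $g^V_{s,a}(\mu)=\kp^a_s(V-\mu)-\sqrt{\delta\,\text{Var}_{\kp^a_s}(V-\mu)}$, set $w=\min_s V(s)$, and let $W=V-we\ge \bz$. First I would record the shift invariance that identifies the maximizers: since the variance is unchanged by adding a constant vector to its argument, $\text{Var}_{\kp^a_s}(W-\mu)=\text{Var}_{\kp^a_s}(V-\mu)$, while $\kp^a_s(W-\mu)=\kp^a_s(V-\mu)-w$. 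Hence $g^W_{s,a}(\mu)=g^V_{s,a}(\mu)-w$ differs from $g^V_{s,a}(\mu)$ by a constant independent of $\mu$, so the two share the same maximizer, giving $\mu^V_{s,a}=\mu^W_{s,a}$; the same identity holds for the three empirical functions.

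Next I would prove the componentwise bound $\mu^W_{s,a}\le W$. Given any feasible $\mu\ge \bz$, compare it with its entrywise truncation $\tilde\mu=\min(\mu,W)$, so that $W-\tilde\mu=(W-\mu)^+$. The linear term cannot decrease, since $W-\tilde\mu\ge W-\mu$ and $\kp^a_s\ge \bz$. The one genuinely new step, where the chi-square case diverges from total variation, is to show that the penalty cannot increase, i.e. $\text{Var}_{\kp^a_s}\big((W-\mu)^+\big)\le \text{Var}_{\kp^a_s}(W-\mu)$. I would establish this from the identity $\text{Var}(X)=\tfrac12\,\mE[(X-X')^2]$ for an i.i.d.\ copy $X'$, together with the fact that $z\mapsto\max(z,0)$ is $1$-Lipschitz; this yields $\text{Var}(\phi(Z))\le\text{Var}(Z)$ for every $1$-Lipschitz $\phi$. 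Therefore $g^W_{s,a}(\tilde\mu)\ge g^W_{s,a}(\mu)$, so a maximizer can be taken with $\mu^W_{s,a}\le W$. Combining with the shift invariance gives $\mu^V_{s,a}=\mu^W_{s,a}\le W=V-we\le V+\|V\|e$ and $\|\mu^V_{s,a}\|\le\|W\|\le 2\|V\|$.

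Finally I would bound the optimal value. For the upper bound, drop the nonnegative penalty to get $g^V_{s,a}(\mu^V_{s,a})\le\kp^a_s(V-\mu^V_{s,a})\le\|V\|+\|\mu^V_{s,a}\|\le 3\|V\|$. For the lower bound, use suboptimality of $\mu=\bz$: $g^V_{s,a}(\mu^V_{s,a})\ge\kp^a_s V-\sqrt{\delta\,\text{Var}_{\kp^a_s}(V)}$, then bound $\kp^a_s V\ge-\|V\|$ and control the penalty via Popoviciu's inequality $\text{Var}_{\kp^a_s}(V)\le\tfrac14\spa(V)^2\le\|V\|^2$; collecting constants gives the claimed $3(1+\sqrt{2\delta})\|V\|$. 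The bounds for the empirical functions $g^V_{s,a,N+1},g^V_{s,a,N+1,E},g^V_{s,a,N+1,O}$ follow identically, since every step used only nonnegativity of the reference kernel and the shift/Lipschitz properties shared by all four empirical distributions.

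The main obstacle is exactly the variance-monotonicity step $\text{Var}_{\kp^a_s}\big((W-\mu)^+\big)\le\text{Var}_{\kp^a_s}(W-\mu)$: whereas the span visibly cannot grow under truncation, the variance requires the $1$-Lipschitz contraction argument above, and this is the only substantively new ingredient relative to Lemma~\ref{lemma:tv bounded}; everything else is a mechanical transcription with the variance term in place of the span term.
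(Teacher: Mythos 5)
Your proposal is correct and follows the same route as the paper's proof: shift to $W=V-we\ge 0$, use translation invariance of the variance to identify $\mu^V_{s,a}=\mu^W_{s,a}$, bound the maximizer by $W$ (hence $\|\mu^V_{s,a}\|\le 2\|V\|$), and then bound the optimal value by elementary estimates on the linear and penalty terms. The one place you go beyond the paper is the truncation step: the paper simply asserts $\mu^W_{s,a}\le W$ from $W\ge 0$, whereas you actually prove it by comparing $\mu$ with $\tilde\mu=\min(\mu,W)$ and showing the penalty cannot increase because variance is non-increasing under the $1$-Lipschitz map $z\mapsto\max(z,0)$ — a correct and worthwhile addition, since for the chi-square penalty (unlike the span) this monotonicity is not immediate.
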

\begin{proof}
First, we show the bounds on the optimal solutions.
If we denote the minimal entry of $V$ by $w$: $w=\min_s V(s)$, then $W\triangleq V-we\geq 0$. 
Note that, 
\begin{align}
    \mu^W_{s,a}&=\arg\max_{W\geq\mu\geq 0}\big( \kp^a_s(W-\mu)- \sqrt{\delta\text{Var}_{\kp^a_s} (W-\mu)}\big)\nn\\
    &=\arg\max_{W\geq \mu\geq 0}\big(-w+ \kp^a_s(V-\mu)- \sqrt{\delta\text{Var}_{\kp^a_s} (V-\mu)}\big),
\end{align}
which is because $\text{Var}_{\kp^a_s} (V-\mu-we)=\text{Var}_{\kp^a_s}(V-\mu)+\text{Var}_{\kp^a_s}(we)-2\text{Cov}_{\kp^a_s}(V-\mu,we)=\text{Var}_{\kp^a_s} (V-\mu)$. Hence $\mu^W_{s,a}=\mu^V_{s,a}$. Moreover note that $W\geq 0$, hence $\mu^W_{s,a}$ is bounded: $\mu^W_{s,a}\leq W$, this further implies that
\begin{align}
    \|\mu^V_{s,a}\|=\|\mu^W_{s,a}\|\leq \|W\|\leq 2\|V\|. 
\end{align}
The bounds on $\mu^V_{s,a,N+1},\mu^V_{s,a,N+1,O},\mu^V_{s,a,N+1,E}$ can be similarly derived. 

We then consider the optimal value. Note that,
\begin{align}
    |g_{s,a}^V(\mu^V_{s,a})|&=|\kp^a_s(V-\mu^V_{s,a})- \sqrt{\delta\text{Var}_{\kp^a_s} (V-\mu^V_{s,a})}|\nn\\
    &\leq \|V\|+\|\mu^V_{s,a}\|+  \sqrt{2\delta\|V-\mu^V_{s,a}\|^2}\nn\\
    &\leq 3\|V\|+\sqrt{2\delta}(\|V\|+\|\mu^V_{s,a}\|)\nn\\
    &\leq 3(1+\sqrt{2 \delta})\|V\|. 
\end{align}
Similarly, the bounds on $g_{s,a,N+1}^V(\mu^V_{s,a,N+1}),g_{s,a,N+1,O}^V(\mu^V_{s,a,N+1,O}),g_{s,a,N+1,E}^V(\mu^V_{s,a,N+1,E})$ can be derived. 
\end{proof}

\begin{lemma}\label{lemma:wd bounded}
Under the Wasserstein distance uncertainty model, the optimal solution and optimal value for $g_{s,a}^V,g_{s,a,N+1}^V,g_{s,a,N+1,E}^V,g_{s,a,N+1,O}^V$  are bounded. Specifically,
\begin{align}
     &\lambda^V_{s,a},\lambda^V_{s,a,n},\lambda^V_{s,a,n,O},\lambda^V_{s,a,n,E} \leq \frac{2\|V\|}{\delta^l}, \\
     & |g^V_{s,a}(\lambda^V_{s,a})|, |g^V_{s,a,n}(\lambda^V_{s,a,n})|, |g^V_{s,a,n,O}(\lambda^V_{s,a,n,O})|,|g^V_{s,a,n,E}(\lambda^V_{s,a,n,E})| \leq \|V\|. 
\end{align}
\end{lemma}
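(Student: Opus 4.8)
The plan is to exploit the fact that the $\lambda$-dependence of $g^V_{s,a}(\lambda)$ enters only through the linear penalty $-\lambda\delta^l$ and through the inner infimum $\inf_{x}(V(x)+\lambda d(S,x)^l)$, which I can sandwich between two simple bounds uniformly in $\lambda$. The key observation is that the admissible choice $x=S$ in the infimum gives $\inf_x(V(x)+\lambda d(S,x)^l)\le V(S)\le \|V\|$, using $d(S,S)=0$, while nonnegativity of the metric term $\lambda d(S,x)^l\ge 0$ forces $\inf_x(V(x)+\lambda d(S,x)^l)\ge \min_x V(x)\ge -\|V\|$. Since these bounds hold pointwise in $S$ and the expectation is taken over a probability distribution, I obtain $-\|V\|\le \mE_{S}[\inf_x(V(x)+\lambda d(S,x)^l)]\le \|V\|$ for every $\lambda\ge 0$ and for \emph{any} transition kernel on $\mcs$, which covers the nominal function $g^V_{s,a}$ and all three empirical functions $g^V_{s,a,N+1}, g^V_{s,a,N+1,O}, g^V_{s,a,N+1,E}$ simultaneously.

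For the bound on the optimizer, I would compare the optimal value with the value at $\lambda=0$. At $\lambda=0$ the infimum collapses to $\min_x V(x)$, so $g^V_{s,a}(0)=\min_x V(x)\ge -\|V\|$, and since $\lambda^V_{s,a}$ maximizes over the feasible set containing $0$, optimality gives $g^V_{s,a}(\lambda^V_{s,a})\ge g^V_{s,a}(0)\ge -\|V\|$. On the other hand, applying the uniform upper bound on the expectation term yields $g^V_{s,a}(\lambda^V_{s,a})\le -\lambda^V_{s,a}\delta^l+\|V\|$. Chaining these two inequalities gives $-\|V\|\le -\lambda^V_{s,a}\delta^l+\|V\|$, hence $\lambda^V_{s,a}\delta^l\le 2\|V\|$, i.e.\ $\lambda^V_{s,a}\le 2\|V\|/\delta^l$, as claimed.

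For the value bound, the same two-sided estimate does the work directly: the lower bound $g^V_{s,a}(\lambda^V_{s,a})\ge -\|V\|$ is the optimality inequality just used, and the upper bound $g^V_{s,a}(\lambda^V_{s,a})\le -\lambda^V_{s,a}\delta^l+\|V\|\le \|V\|$ follows from dropping the nonpositive penalty term. Together these give $|g^V_{s,a}(\lambda^V_{s,a})|\le\|V\|$. Because none of these steps uses any property of the kernel beyond its being a probability distribution on $\mcs$, the identical argument applies verbatim with $\kp^a_s$ replaced by $\hat{\kp}^{a}_{s,n}$, $\hat{\kp}^{a,O}_{s,n}$, or $\hat{\kp}^{a,E}_{s,n}$, establishing all the stated inequalities at once.

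I do not anticipate a genuine obstacle here; the only point requiring a little care is justifying the pointwise bounds on the inner infimum---specifically that the choice $x=S$ is always available so the infimum is finite and bounded above by $V(S)$, and that the metric penalty is nonnegative so the infimum is bounded below by $\min_x V(x)$. Everything else reduces to monotonicity of expectation and comparison with the boundary value $\lambda=0$.
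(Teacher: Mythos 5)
Your proposal is correct and follows essentially the same route as the paper's proof: both bound $\lambda^V_{s,a}$ by comparing the optimal value with $g^V_{s,a}(0)=\min_x V(x)$ and upper-bounding the inner infimum via the admissible choice $x=S$ (so the expectation term is at most $\mE_{\kp^a_s}[V]\le\|V\|$), which yields $\lambda^V_{s,a}\delta^l\le 2\|V\|$. The only minor difference is in the value bound, where the paper invokes strong duality to identify $g^V_{s,a}(\lambda^V_{s,a})$ with the support function $\sigma_{\cp^a_s}(V)$ and bounds the primal, whereas you derive $-\|V\|\le g^V_{s,a}(\lambda^V_{s,a})\le\|V\|$ directly from the dual formula; your version is slightly more self-contained and applies verbatim to the empirical kernels.
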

\begin{proof}
First, we show the bounds on the optimal solutions.
Denote the optimal solution to $\max_{\lambda\geq 0}g_{s,a}^V(\lambda)$ by $\lambda^V_{s,a}$. Moreover, for each state $y\in\mcs$ and any $\lambda \geq 0$, denote $s^y_{\lambda}\triangleq \arg\min_{x\in\mcs} \{\lambda d(x,y)^l+V(x) \}$. Hence, 
\begin{align}
    g^V_{s,a}(\lambda)=-\lambda\delta^l+\mE_{S\sim\kp^a_s}[\lambda d(S,s^S_\lambda)^l+V(s^S_\lambda)]. 
\end{align}
Moreover, note that $g^V_{s,a}(\lambda^V_{s,a})=\max_{\lambda\geq 0} g^V_{s,a}(\lambda)$, hence,
\begin{align}\label{eq:130}
    -\lambda^V_{s,a}\delta^l+\mE_{S\sim\kp^a_s}[\lambda^V_{s,a} d(S,s^S_{\lambda^V_{s,a}})^l+V(s^S_{\lambda^V_{s,a}})]\geq g^V_{s,a}(0)=\mE_{S\sim\kp^a_s}[V(s^S_0)]=\min_x V(x),
\end{align}
where the last equation is due to the fact that $s^S_0=\arg\min_{x\in\mcs}\{V(x) \}=\min_x V(x)$. 
Now consider the inner problem $\mE_{S\sim\kp^a_s}[\lambda^V_{s,a} d(S,s^S_{\lambda^V_{s,a}})^l+V(s^S_{\lambda^V_{s,a}})]$. Note that,
\begin{align}\label{eq:131}
    &\mE_{S\sim\kp^a_s}[\lambda^V_{s,a} d(S,s^S_{\lambda^V_{s,a}})^l+V(s^S_{\lambda^V_{s,a}})]\nn\\
    &=\sum_{x}\kp^a_{s,x} \big(\lambda^V_{s,a} d(x,s^x_{\lambda^V_{s,a}})^l+V(s^x_{\lambda^V_{s,a}})\big)\nn\\
    &\overset{(a)}{\leq}\sum_{x}\kp^a_{s,x} \big(\lambda^V_{s,a} d(x,x)^l+V(x)\big)\nn\\
    &=\mE_{\kp^a_s}[V(S)],
\end{align}
where $(a)$ is because $s^x_{\lambda^V_{s,a}}=\arg\min_{y\in\mcs} \{\lambda^V_{s,a} d(x,y)^l+V(y) \}$ and hence $\lambda^V_{s,a} d(x,s^x_{\lambda^V_{s,a}})^l+V(s^x_{\lambda^V_{s,a}})\leq \lambda^V_{s,a} d(x,x)^l+V(x)$. 

Combine \eqref{eq:130} and \eqref{eq:131}, then we further have that
\begin{align}
\min_x V(x)\leq -\lambda^V_{s,a}\delta^l+\mE_{S\sim\kp^a_s}[\lambda^V_{s,a} d(S,s^S_{\lambda^V_{s,a}})^l+V(s^S_{\lambda^V_{s,a}})]\leq -\lambda^V_{s,a}\delta^l+\mE_{\kp^a_s}[V(S)].
\end{align}
This implies that
\begin{align}
    \lambda^V_{s,a}\leq \frac{\mE_{\kp^a_s}[V(S)]-\min_xV(x)}{\delta^l}\leq \frac{2\|V\|}{\delta^l}, 
\end{align}
and hence $\lambda^V_{s,a}$ is bounded. 

On the other hand, note that $g^V_{s,a}(\lambda^V_{s,a})=\sigma_{\cp^a_s}[V(S)]$, hence, 
\begin{align}
    \big| g^V_{s,a}(\lambda^V_{s,a})\big| \leq \|V\|. 
\end{align}
Same bound can be similarly derived for $\lambda^V_{s,a,n},\lambda^V_{s,a,n,O},\lambda^V_{s,a,n,E},g^V_{s,a,n}(\lambda^V_{s,a,n}),g^V_{s,a,n,O}(\lambda^V_{s,a,n,O}),g^V_{s,a,n,E}(\lambda^V_{s,a,n,E})$.
\end{proof}

\begin{lemma}\label{lemma:optimal x=y+r}
For an optimal robust Bellman operator: 
    ${\mathbf H}Q(s,a)=r(s,a)+\sigma_{\cp^a_s}(V_Q),$
define 
\begin{align}
    {\mathbf H}'Q&\triangleq {\mathbf H}Q-f(Q)e,\\
    \tilde{\mathbf H}Q&\triangleq {\mathbf H}Q-g^*_\cp e.
\end{align}
Assume that $x(t),y(t)$ are the solutions to equations 
\begin{align}
    \dot{x}&={\mathbf H}'x-x,\\
    \dot{y}&=\tilde{\mathbf H}y-y,
\end{align}
with the same initial value $x(0)=y(0)$. Then 
    $x(t)=y(t)+r(t)e,$
where $r(t)$ satisfies 
    $\dot{r}(t)=-r(t)+g^*_\cp-f(y(t)). $
\end{lemma}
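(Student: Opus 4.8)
The plan is to transcribe the proof of Lemma~\ref{lemma:x=y+r} almost verbatim, since in the present statement the operator $\mathbf{H}$ plays exactly the role that $\mathbf{T}$ played there, with $g^*_\cp$ replacing $g^\pi_\cp$. The only genuine work is to re-establish the three structural properties of $\mathbf{H}$ on which that argument rests. First I would record the algebraic identity $\mathbf{H}'Q=\tilde{\mathbf{H}}Q+(g^*_\cp-f(Q))e$, which is immediate from the two definitions. Applying the variation-of-constants formula to the two ODEs with common initial condition $x(0)=y(0)$ gives
\begin{align}
x(t) &= x(0)e^{-t}+\int_0^t e^{-(t-s)}\tilde{\mathbf{H}}(x(s))\,ds+\Big(\int_0^t e^{-(t-s)}(g^*_\cp-f(x(s)))\,ds\Big)e,\nn\\
y(t) &= x(0)e^{-t}+\int_0^t e^{-(t-s)}\tilde{\mathbf{H}}(y(s))\,ds.\nn
\end{align}

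The crucial ingredient is span non-expansiveness of $\tilde{\mathbf{H}}$ (equivalently of $\mathbf{H}$) with respect to the semi-norm $\spa(\cdot)$ on $\mathbb{R}^{|\mcs||\mca|}$. Here the reward cancels, so $\tilde{\mathbf{H}}Q_1(s,a)-\tilde{\mathbf{H}}Q_2(s,a)=\sigma_{\cp^a_s}(V_{Q_1})-\sigma_{\cp^a_s}(V_{Q_2})$. Since $\sigma_{\cp^a_s}$ is a minimum of linear functionals over a set of probability vectors, taking $p^\ast$ optimal for $V_{Q_2}$ gives $\sigma_{\cp^a_s}(V_{Q_1})-\sigma_{\cp^a_s}(V_{Q_2})\le p^\ast(V_{Q_1}-V_{Q_2})\le\max_{s'}(V_{Q_1}-V_{Q_2})(s')$, and symmetrically a matching lower bound; an identical argument applied to the maximization $V_Q(s)=\max_a Q(s,a)$ shows $\max_s(V_{Q_1}-V_{Q_2})(s)\le\max_{s,a}(Q_1-Q_2)(s,a)$ and the dual bound. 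Chaining these yields $\spa(\tilde{\mathbf{H}}Q_1-\tilde{\mathbf{H}}Q_2)\le\spa(V_{Q_1}-V_{Q_2})\le\spa(Q_1-Q_2)$. Bounding the maximal and minimal coordinates of $x(t)-y(t)$ through this non-expansion (the constant term $g^*_\cp-f(x(s))$ is common to all coordinates and so drops out of the span) produces $\spa(x(t)-y(t))\le\int_0^t e^{-(t-s)}\spa(x(s)-y(s))\,ds$, and Gronwall's inequality together with the nonnegativity of $\spa$ forces $\spa(x(t)-y(t))=0$. Hence $x(t)=y(t)+r(t)e$ for a scalar function $r$ with $r(0)=0$.

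To identify the ODE for $r$, I would subtract the two ODEs and invoke the shift-equivariance properties $\tilde{\mathbf{H}}(Q+ce)=\tilde{\mathbf{H}}(Q)+ce$ and $f(Q+ce)=f(Q)+c$. The first holds because $V_{Q+ce}=V_Q+ce$ (the max is shift-equivariant) and $\sigma_{\cp^a_s}$ commutes with adding a constant; the second is Assumption~\ref{ass:f}. These give
\begin{align}
\dot{r}(t)e &= \tilde{\mathbf{H}}x(t)+(g^*_\cp-f(x(t)))e-x(t)-\tilde{\mathbf{H}}y(t)+y(t)\nn\\
&= (-r(t)+g^*_\cp-f(y(t)))e,\nn
\end{align}
which is precisely the claimed equation.

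The main obstacle is the span non-expansiveness of $\tilde{\mathbf{H}}$ in the presence of the extra $\max_a$ inside $V_Q$: unlike the policy-evaluation operator $\mathbf{T}$, the action-maximization may select different actions at different states, so I must confirm that composing this maximization with the (span non-expansive) support function remains span non-expansive, as sketched above. Once this is secured, the rest of the argument is a line-by-line copy of the proof of Lemma~\ref{lemma:x=y+r}.
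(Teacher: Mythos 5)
Your proposal is correct and follows essentially the same route as the paper: reduce everything to the span non-expansiveness of $\tilde{\mathbf H}$ and then replay the proof of Lemma~\ref{lemma:x=y+r} verbatim (variation of constants, Gronwall on the span, then the shift-equivariance identities to read off the ODE for $r$), with the chain $\spa(\tilde{\mathbf H}Q_1-\tilde{\mathbf H}Q_2)\leq\spa(V_{Q_1}-V_{Q_2})\leq\spa(Q_1-Q_2)$ handled exactly as in the paper's argument. The only cosmetic difference is that you prove the first inequality (non-expansiveness of the support function) from scratch via the optimal $p^\ast$, whereas the paper cites Theorem 17 of \cite{wang2023robust} for that step; your version is a touch more self-contained but otherwise identical.
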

\begin{proof}
    The proof follows exactly that of Lemma \ref{lemma:x=y+r} if we show that $\tilde{\mathbf H}$ is non-expansion w.r.t. the span semi-norm. 

    Following Theorem 17 of \cite{wang2023robust}, it can be shown that 
    \begin{align}
        \spa(\tilde{\mathbf H}(Q_1)-\tilde{\mathbf H}(Q_2))\leq \spa(V_{Q_1}-V_{Q_2}). 
    \end{align}
    Let 
    \begin{align}
        s=\arg\max_i \{\max_a Q_1(i,a)-\max_a Q_2(i,a)\},\\
        t=\arg\min_i \{\max_a Q_1(i,a)-\max_a Q_2(i,a)\}.
    \end{align}
Then, 
\begin{align}
    \spa(V_{Q_1}-V_{Q_2})&=(\max_a Q_1(s,a)-\max_a Q_2(s,a))-(\max_a Q_1(t,a)-\max_a Q_2(t,a))\nn\\
    &\leq Q_1(s,a_s)-Q_2(s,a_s) - ( Q_1(t,a_t)- Q_2(t,a_t))\nn\\
    &\leq \max_{x,b} (Q_1(x,b)-Q_2(x,b)) - \min_{x,b} (Q_1(x,b)-Q_2(x,b))\nn\\
    &=\spa(Q_1-Q_2). 
\end{align}
where $a_s=\arg\max_a Q_1(S,a) $ and $a_t=\arg\max_a Q_2(t,a)$. This completes the proof. 
\end{proof}

\newpage
\section{Additional Experiments}\label{sec:add_exp}
In this section, we first show the additional experiments on the Garnet problem in \Cref{sec:exp_gar}. Then, we further verify our theoretical results using some additional experiments. 

\subsection{Garnet Problem}
We first verify the convergence of our robust RVI TD and robust RVI Q-learning under the Garnet problem with the same setting as in \Cref{sec:exp_gar}. Our results show that both our algorithms converge to the (optimal) robust average-reward under the other three uncertainty sets. 
\begin{figure}[!h]
\begin{center}
\subfigure{
\label{Fig.conTD}
\includegraphics[width=0.3\linewidth]{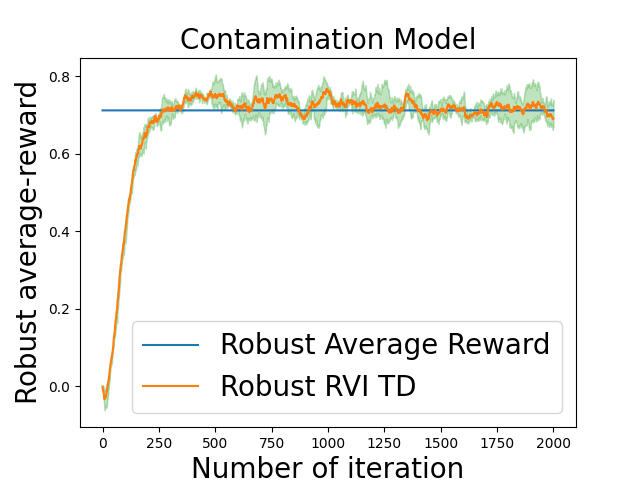}}
\subfigure{
\label{Fig.TVTD}
\includegraphics[width=0.3\linewidth]{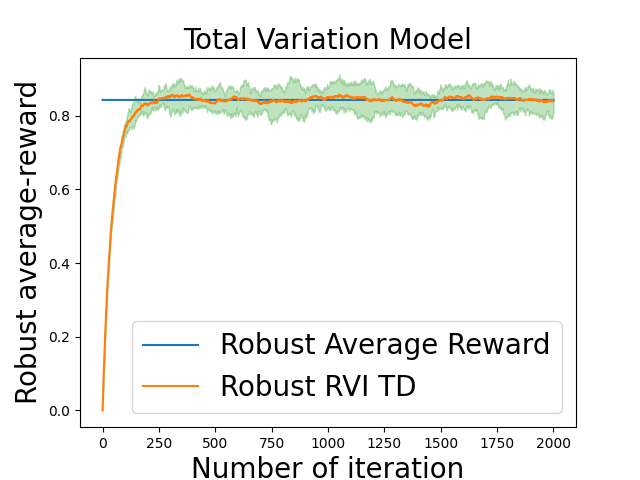}}
\subfigure{
\label{Fig.KLTD}
\includegraphics[width=0.3\linewidth]{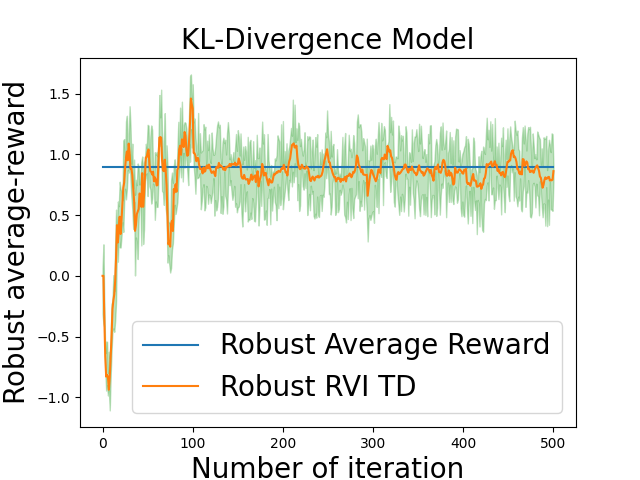}}
\caption{Robust RVI TD Algorithm under Garnet Problem.}
\end{center}
\vskip -0.2in
\end{figure}

\begin{figure}[!h]
\begin{center}
\subfigure{
\label{Fig.conQ}
\includegraphics[width=0.3\linewidth]{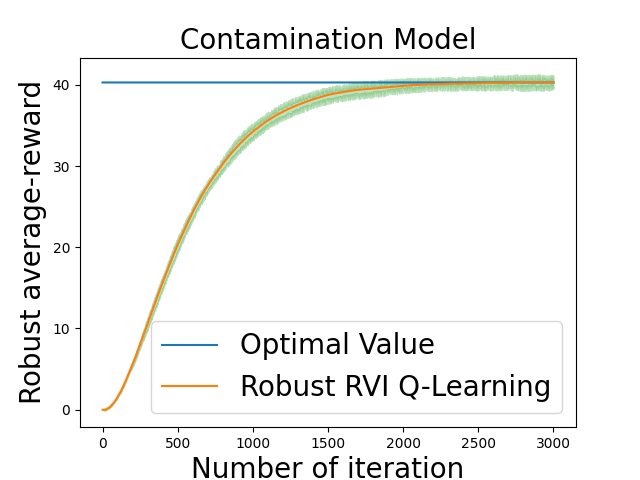}}
\subfigure{
\label{Fig.TVQ}
\includegraphics[width=0.3\linewidth]{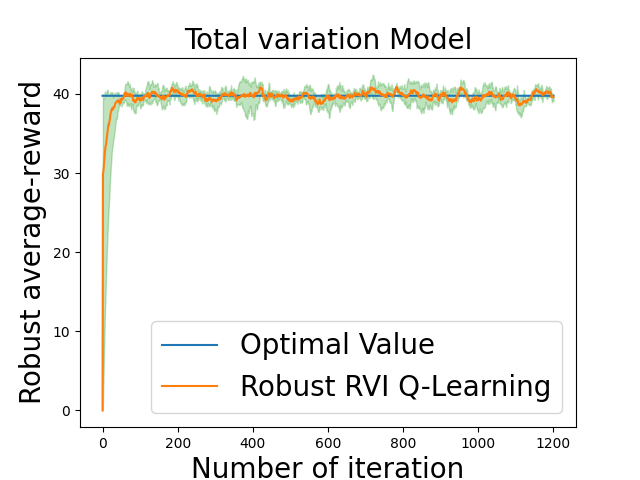}}
\subfigure{
\label{Fig.KLQ}
\includegraphics[width=0.3\linewidth]{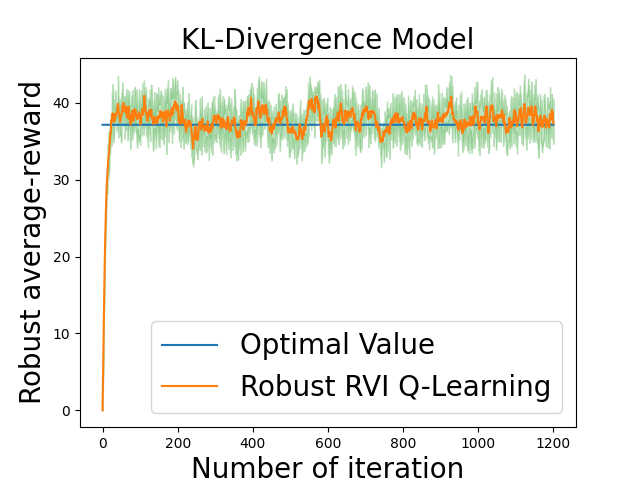}}
\caption{Robust RVI Q-Learning Algorithm under Garnet Problem.}
\end{center}
\vskip -0.2in
\end{figure}

\newpage
\subsection{Frozen-Lake Problem}
We first verify our robust RVI TD algorithm and robust RVI Q-learning under the Frozen-Lake environment of OpenAI \cite{brockman2016openai}. We set the uncertainty radius $\delta=0.4$, $\alpha_n=0.01$ and plot the (optimal) robust average-reward computed using model-based methods in \cite{wang2023robust} as the baseline. We evaluate the uniform policy for the policy evaluation problem, plot the average value of $f(V_t)$ of 30 trajectories and plot the 95/5 percentile as the upper/lower envelope.  For the optimal control problem, we plot the average value of $f(Q_t)$ of 30 trajectories and plot the 95/5 percentile as the upper/lower envelope. The results show that both algorithms converge to the (optimal) robust average-reward. 
\begin{figure}[!htp] 
\begin{center}
\subfigure{
\label{Fig.FLconTD}
\includegraphics[width=0.3\linewidth]{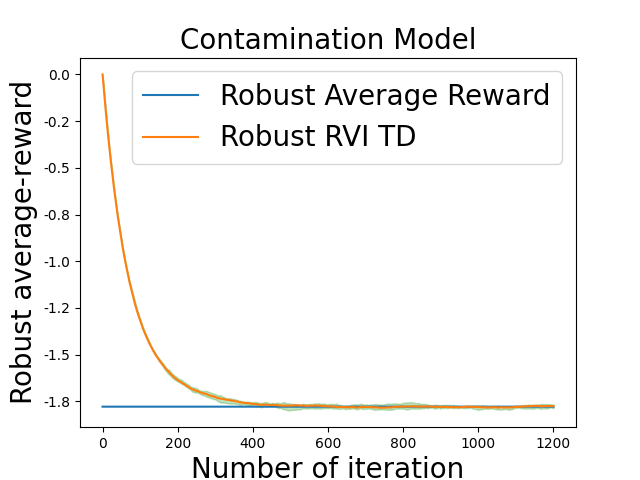}}
\subfigure{
\label{Fig.FLTVTD}
\includegraphics[width=0.3\linewidth]{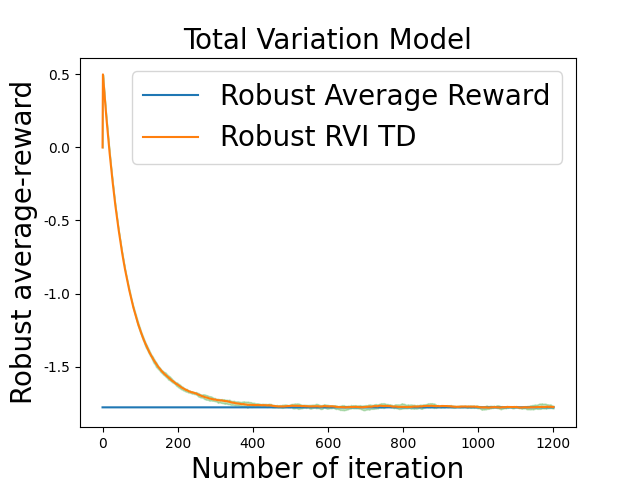}}
\subfigure{
\label{Fig.FLCSTD}
\includegraphics[width=0.3\linewidth]{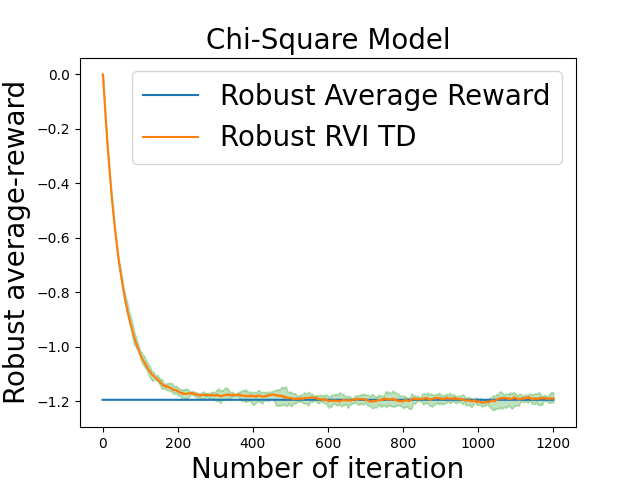}}
\subfigure{
\label{Fig.FLKLTD}
\includegraphics[width=0.3\linewidth]{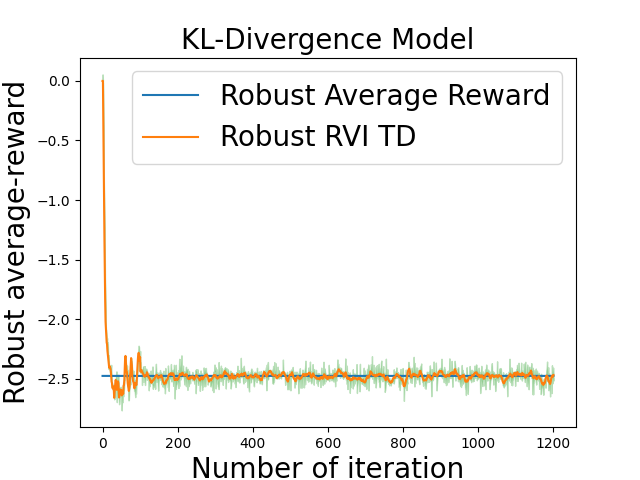}}
\subfigure{
\label{Fig.FLWDTD}
\includegraphics[width=0.3\linewidth]{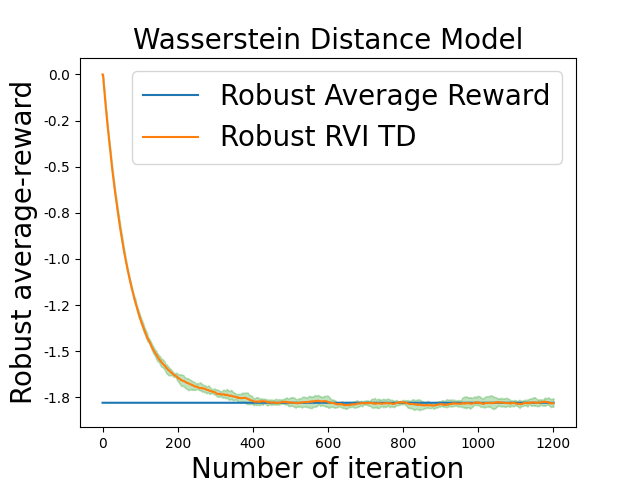}}
\caption{Robust RVI TD Algorithm under Frozen-Lake environment.}
\label{Fig.FLTD}
\end{center}
\vskip -0.2in
\end{figure}

\begin{figure}[!htp] 
\begin{center}
\subfigure{
\label{Fig.FLconQ}
\includegraphics[width=0.3\linewidth]{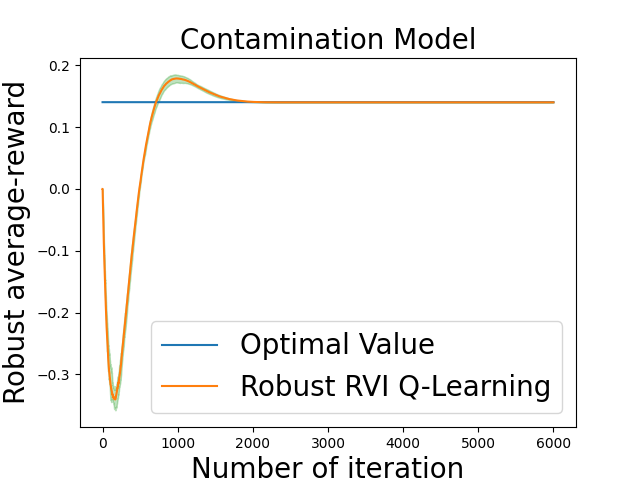}}
\subfigure{
\label{Fig.FLTVQ}
\includegraphics[width=0.3\linewidth]{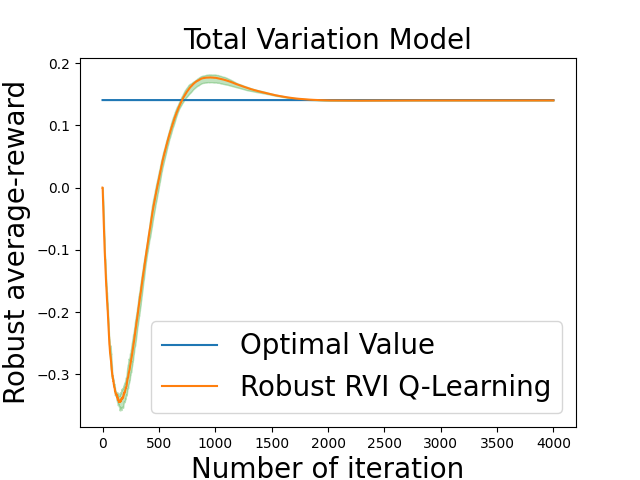}}
\subfigure{
\label{Fig.FLCSQ}
\includegraphics[width=0.3\linewidth]{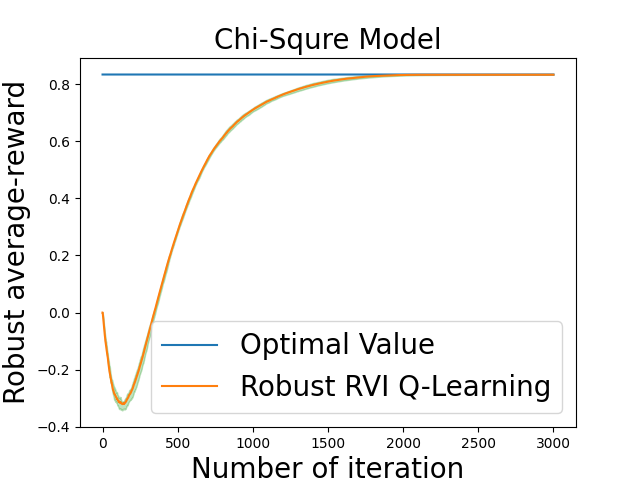}}
\subfigure{
\label{Fig.FLKLQ}
\includegraphics[width=0.3\linewidth]{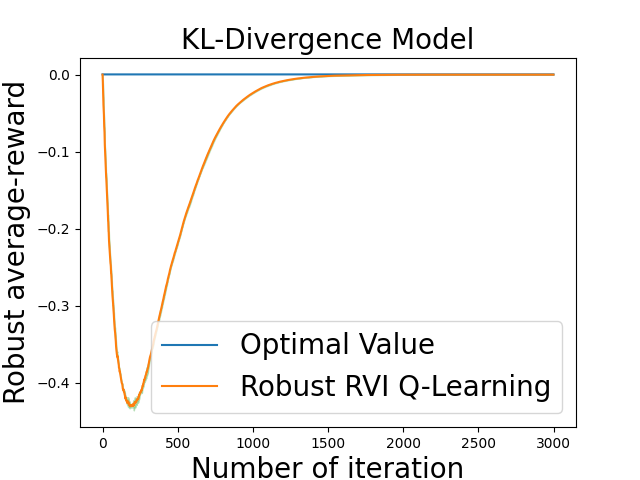}}
\subfigure{
\label{Fig.FLWDQ}
\includegraphics[width=0.3\linewidth]{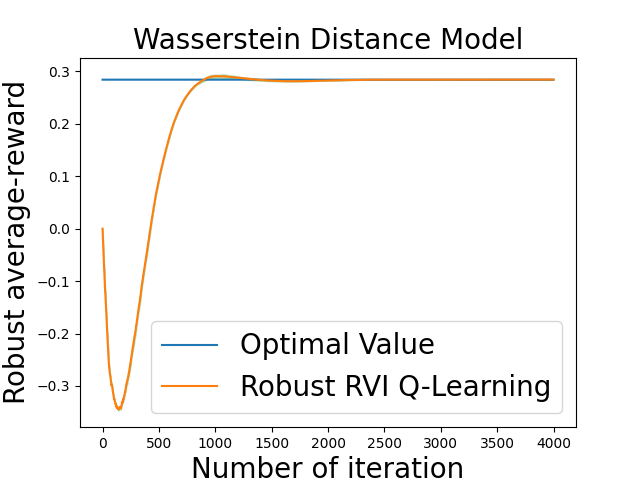}}
\caption{Robust RVI Q-learning Algorithm under Frozen-Lake environment.}
\label{Fig.FLQ}
\end{center}
\vskip -0.2in
\end{figure}

\newpage
\subsection{Robustness of Robust RVI Q-Learning}
We further use the simple, yet widely-used problem, referred to as the one-loop task problem \cite{panaganti2021sample}, to verify the robustness of our robust RVI Q-learning. This environment is widely used to demonstrate that robust methods can learn different optimal polices from the non-robust methods, which are more robust to model uncertainty. The one-loop MDP contains 2 states $s_1,s_2$, and 2 actions $a_l,a_r$ indicating going left or right.

The nominal environment is shown in the left of \Cref{Fig.simple MDP}, where at state $s_1$, going left and right will result in a transition to $s_1$ or $s_2$; and at $s_2$,  going left and right will result in a transition to $s_1$ or $s_2$. 
\begin{figure}[!htp] 
\centering
\begin{tikzpicture}[node distance=1.7cm]
\tikzstyle{place}=[circle,thick,draw=gray!75,fill=gray!20,minimum size=6mm]

\begin{scope}
\node [place] (s1c) {$s_1$};
\node [place, right of=s1c] (s2c) {$s_2$};

\draw[thick,->,shorten >=1pt] (s2c) to [out=225,in=315] (s1c) node[pos=.85,sloped, below=5mm, color=red] {$a_l,0$};
\draw[thick,->,shorten >=1pt] (s1c) to [out=45,in=135] (s2c)node[sloped, above=5mm,color=red] {$a_r,-2$};
\draw[thick,->,shorten >=1pt] (s1c) to [out=90,in=180,loop,looseness=4.8] (s1c)node[pos=.65, sloped, above=6mm, color=red]{$a_l,0$};
\draw[thick,->,shorten >=1pt] (s2c) to [out=-90,in=0,loop,looseness=4.8] (s2c)node[sloped, below=6mm, color=red]{$a_r,1$};

\node [place,right of=s2c] (s3c) {$s_1$};
\node [place, right of=s3c] (s4c) {$s_2$};

\draw[thick,->,shorten >=1pt] (s4c)node[sloped, below=6mm,left=3mm, color=red] {$a_l,0$} to [out=225,in=315] (s3c) ;
\draw[thick,->,shorten >=1pt] (s3c) to [out=45,in=135] (s4c)node[sloped, above=5mm,color=red] {$a_r,-2$};
\draw[thick,->,shorten >=1pt] (s3c) to [out=90,in=180,loop,looseness=4.8] (s3c)node[ sloped, above=6mm, color=red]{$a_l,0$};
\draw[thick,->,shorten >=1pt] (s4c)node[sloped, below=8mm,color=red]{$a_r,1$} to [out=-90,in=-90,loop,looseness=1.8] (s3c);
\end{scope}
\end{tikzpicture}
\caption{One-Loop Task.}
\label{Fig.simple MDP}
\end{figure}

We implement our robust RVI Q-learning and vanilla non-robust Q-learning as the baseline in this environment. At each time step $t$, we plot the difference between $Q_t(s_1,a_l)$ and $Q_t(s_1,a_r)$ in \Cref{Fig.simple1}. If $Q_t(s_1,a_l)-Q_t(s_1,a_r)<0$, the greedy policy will be going right; and if $Q_t(s_1,a_l)-Q_t(s_1,a_r)>0$, the policy will be going left.  As the results show, the vanilla Q-learning will finally learn a policy $\pi(s_1)=a_r$, while our algorithms learn a policy $\pi(s_1)=a_l$. 

To verify the robustness of our method, we test the learned policies under a perturbed testing environment, shown on the right of \Cref{Fig.simple MDP}. We plot the average-reward of policies $\pi_t$ under this perturbed environment. The results are shown in \Cref{Fig.simple2}. 
\begin{figure}[ht]
\begin{center}
\subfigure[$Q(s_1,a_l)-Q(s_2,a_r)$ ]{
\label{Fig.simple1}
\includegraphics[width=0.47\linewidth]{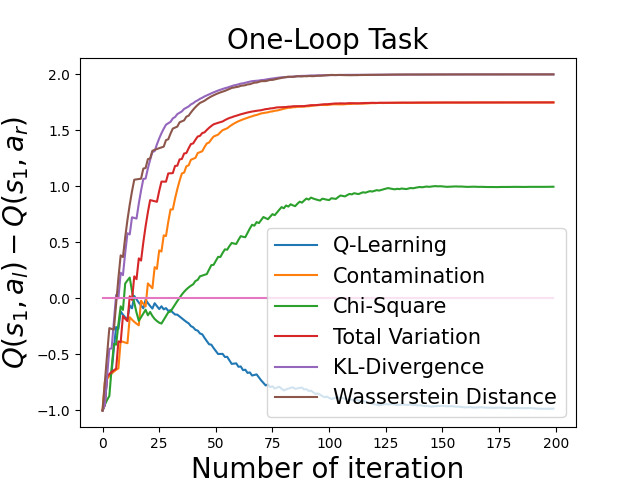}}
\subfigure[Average-Reward under Testing MDP]{
\label{Fig.simple2}
\includegraphics[width=0.47\linewidth]{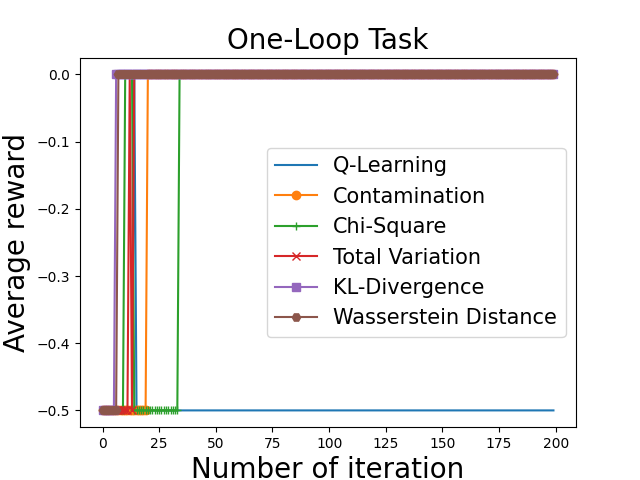}}
\caption{One-Loop Task.}
\label{Fig.simple}
\end{center}
\vskip -0.2in
\end{figure}

As the results show, our robust RVI Q-learning learns a more robust policy under the nominal environment, which obtains a higher reward in the perturbed environment; whereas the non-robust Q-learning learns a policy that is optimal w.r.t. the nominal environment, but less robust when the environment is perturbed. This verifies that our algorithm is more robust than the vanilla method.
\end{document}